%% file: neurips26_submission.tex
\documentclass{article}

\PassOptionsToPackage{numbers,compress}{natbib}
\usepackage[preprint]{neurips_2026}

\usepackage[utf8]{inputenc}
\usepackage[T1]{fontenc}
\usepackage{hyperref}
\usepackage{url}
\usepackage{booktabs}
\usepackage{amsfonts}
\usepackage{microtype}
\usepackage{xcolor}
\usepackage{amsmath,amssymb}
\usepackage{array}
\usepackage{enumitem}

\usepackage{amsmath,amsfonts,amssymb,amsthm,array}
\usepackage{caption}
\usepackage{graphicx}
\usepackage{booktabs} 
\usepackage{float}
\usepackage{enumitem}
\usepackage{nicefrac}

\usepackage{enumitem}
\usepackage{algorithm}
\usepackage{algorithmic}
\input{math_commands.tex}

\input{macros}

\usepackage{graphicx}

\usepackage{tikz}

\theoremstyle{plain}

\theoremstyle{remark}

\usepackage{mdframed} 
\usepackage{thmtools}

\usepackage[flushleft]{threeparttable} 

\usepackage{caption}
\usepackage{multirow}
\usepackage{colortbl}
\definecolor{bgcolor}{rgb}{0.8,1,1}
\definecolor{bgcolor2}{rgb}{0.8,1,0.8}
\definecolor{niceblue}{rgb}{0.0,0.19,0.56}

\usepackage{hyperref}
\hypersetup{colorlinks,linkcolor={blue},citecolor={blue},urlcolor={blue}}

\usepackage{sidecap}

\definecolor{shadecolor}{gray}{0.9}
\declaretheoremstyle[
headfont=\normalfont\bfseries,
notefont=\mdseries, notebraces={(}{)},
bodyfont=\normalfont,
postheadspace=0.5em,
spaceabove=1pt,
mdframed={
  skipabove=8pt,
  skipbelow=8pt,
  hidealllines=true,
  backgroundcolor={shadecolor},
  innerleftmargin=4pt,
  innerrightmargin=4pt}
]{shaded}

\usepackage{xspace}

\title{\bf Gradient Clipping Beyond Vector Norms: A Spectral Approach for Matrix-Valued Parameters}

\newcommand{\affmark}[1]{\textsuperscript{#1}}

\author{%
Alexander Yukhimchuk\affmark{1}\thanks{Equal contribution} \quad
Mladen Kolar\affmark{1,2} \quad
Martin Takáč\affmark{1} \quad
Sayantan Choudhury\affmark{1}\footnotemark[1]
\\[0.5em]
{\normalfont\small
\affmark{1} MBZUAI \quad
\affmark{2} University of Southern California
}
}

\begin{document}

\maketitle

\begin{abstract}
    Gradient clipping is a standard safeguard for training neural networks under noisy, heavy-tailed stochastic gradients; yet, most clipping rules treat all parameters as vectors and ignore the matrix structure of modern architectures. We show empirically that data outliers often amplify only a small number of leading singular values in layer-wise gradient matrices, while the rest of the spectrum remains largely unchanged. Motivated by this phenomenon, we propose \emph{spectral clipping}, which stabilizes training by clamping singular values that exceed a threshold while preserving the singular directions. This framework generalizes classical gradient norm clipping and can be easily integrated into existing optimizers. We provide a convergence analysis for non-convex optimization with spectrally clipped SGD, yielding the optimal $\gO\left( K^{\nicefrac{2 - 2 \alpha}{3 \alpha - 2}} \right)$ rate for heavy-tailed noise. To minimize hyperparameter tuning, we introduce layer-wise adaptive thresholds based on moving averages or sliding-window quantiles of the top singular values. Finally, we develop efficient implementations that clip only the top $r$ singular values via randomized truncated SVD, avoiding full decompositions for large layers. We demonstrate competitive performance across synthetic heavy-tailed settings and neural network training tasks. 
\end{abstract}

\vspace{-0.3cm}
\section{Introduction}
\vspace{-0.2cm}
    Optimization algorithms lie at the core of deep learning~\citep{lecun2015deep,choudhury2024remove,mishchenko2023prodigy,shi2023ai}. As the cost of training large-scale models~\citep{achiam2023gpt,berahas2016multi} continues to grow, the demand for more effective and efficient optimization methods has become increasingly critical~\citep{guo2025deepseek,nguyen2017sarah,schaipp2023momo,takavc2013mini,li2022sp2}.

    The parameters of modern deep learning models are predominantly matrix-valued~\citep{vaswani2017attention, lecun2015deep, he2016deep}. Despite this, most existing optimization algorithms operate by implicitly vectorizing all parameters, thereby discarding intrinsic algebraic and geometric structures. Consequently, widely used algorithms such as Stochastic Gradient Descent (\algname{SGD})~\cite{robbins1951stochastic}, \algname{RMSProp}~\citep{tieleman2012lecture}, \algname{AdaDelta}~\cite{zeiler2012adadelta},
    \algname{Adam}~\citep{kingma2014adam}, and \algname{AdamW}~\citep{loshchilov2017decoupled, bjorck2021understanding} are fundamentally designed for vector-valued parameters. Recently, \citet{jordan2024muon} advocated the use of different optimizers for matrix-valued parameters, enabling the exploitation of structural properties that are lost under vectorization. This line of work has revealed a rich and largely unexplored design space for optimization algorithms in deep learning~\citep{shen2025convergence, chang2025convergence, an2025asgo, pethick2025training, shah2025practical,he2017distributed, choudhury2026muonnesterovmomentumheavytailed}.

    Despite this progress, the use of a matrix structure has largely been limited to determining the direction of the optimization step~\citep{jordan2024muon, an2025asgo, gupta2018shampoo, pethick2025training, vyas2024soap}. Other critical components for stabilizing training, such as gradient clipping~\cite{mikolov2012statistical, pascanu2013difficulty}, continue to treat parameters as vectors, ignoring their inherent matrix structure. In contrast, this work proposes a spectral approach to gradient clipping that explicitly leverages the structure of weight parameters.
    
    Specifically, we provide empirical evidence showing that the leading singular values of gradient matrices tend to increase in the presence of data noise. Motivated by this observation, we propose clipping the top singular values of matrix-valued gradients when they exceed a predefined threshold, thereby improving training stability.
    
    \vspace{-0.2cm}
    \subsection{Background and Related Works}
    \vspace{-.1cm}
    
        Most deep learning tasks reduce to solving the minimization problem
        $\min_{\vx \in \R^d} f(\vx)$
        and \algname{SGD}~\citep{robbins1951stochastic} is the most common approach for solving it. The update rule of \algname{SGD} is given by
        $\vx_{k+1} = \vx_k - \eta_k \vg_k$,
        where $\eta_k > 0$ is the step size and $\vg_k$ is an unbiased estimator of the gradient of $f$ at $\vx_k$, i.e., $\Exp{\vg_k \mid \vx_k} = \nabla f(\vx_k)$. There are many studies showing the convergence of \algname{SGD} for solving convex and non-convex problems~\citep{gorbunov2020unified, ghadimi2013stochastic}. In practice, \algname{SGD} works well when the distribution of gradient noise is light-tailed. To address rare but potentially very large stochastic gradients, one often replaces $\vg_k$ by
        $\text{clip}(\vg_k,\tau) \eqdef \min\left\{1,\frac{\tau}{\|\vg_k\|}\right\}\vg_k$~\citep{koloskova2023revisiting}$,$
        which yields the update
        \begin{eqnarray}\label{eq:sgd_clip}
            \vx_{k+1} & = & \vx_k - \eta_k \text{clip}(\vg_k, \tau).
        \end{eqnarray}
        Here $\tau > 0$ is the clipping threshold, which ensures that $\|\text{clip}(\vg_k,\tau)\| \leq \tau$ for all $k$.
        Gradient clipping was established as a practical remedy for exploding gradients in recurrent neural networks~\citep{mikolov2012statistical, pascanu2013difficulty} and remains a standard stabilization tool in modern training pipelines. Beyond this classical motivation, a recent theoretical literature has clarified when clipping helps and what its limitations are. \citet{koloskova2023revisiting} showed that clipped \algname{SGD} introduces a threshold-dependent stochastic bias and provided tight convergence guarantees, while \citet{zhang2020improved, qian2021understanding} analyzed clipping-based methods under relaxed smoothness. Very recent work also extends norm clipping beyond the Euclidean setting: \citet{pethick2025generalized} studies generalized gradient norm clipping under non-Euclidean $(L_0,L_1)$-smoothness and instantiates this viewpoint in \algname{Clipped Scion}. Additional related works about heavy tailed gradient noise and spectral computation could be found in Appendix~\ref{missing_theory}.

    \vspace{-0.2cm}
    \subsection{Main Contributions}
    \vspace{-.1cm}
    The main contributions of this work are summarized below.

    $\bullet$ \textbf{Novel Gradient Clipping Technique.} We propose a new method for gradient clipping, which we call spectral clipping (Section \ref{sec:spectral_clip}), that utilizes the matrix structure of the parameters' gradients and clamps the larger singular values beyond a certain threshold. This scheme captures the state-of-the-art gradient norm clipping method as a special case (Section \ref{sec:gradientnorm_clip}).

    $\bullet$ \textbf{Convergence Analysis.} We analyze the proposed spectral clipping scheme for non-convex optimization problems of the form \eqref{eq:matrix_min}. Under the bounded $\alpha$-th moment assumption (Assumption \ref{assume:BV}), we present our core theoretical contributions in Theorems \ref{theorem:SGD_clipping} and \ref{theorem:main_theorem}. The analysis in Theorem \ref{theorem:SGD_clipping} is tight, recovering state-of-the-art convergence rates for standard \algname{SGD} in the absence of clipping. Furthermore, we explicitly characterize the bias term $\frac{1}{K} \sum_{k=0}^{K-1} \Delta_k$ incurred by the clipping operation. Finally, in Theorem \ref{theorem:main_theorem}, we establish an optimal convergence rate of $\mathcal{O} \left(K^{\nicefrac{2 - 2\alpha}{3\alpha-2}}\right)$ under the heavy-tailed noise framework of \citet{zhang2020adaptivemethodsgoodattention}.

    $\bullet$ \textbf{Adaptive Clipping Strategies.} In Section \ref{sec:adaptive_clip}, we provide adaptive strategies for selecting clipping thresholds $\tau_k$~\eqref{eq:SGD_sclipped} based on exponential moving averages~\eqref{eq:ema_clip} or quantiles~\eqref{eq:quantile_clip1} of the top singular values of gradient matrices. This allows us to tackle noise and adapt to gradient scales during training. We show the effectiveness of these strategies in experiments.

    $\bullet$ \textbf{Efficient Implementation of Spectral Clipping.} For the efficient implementation of our proposed method to train large neural networks, we discuss three important aspects:
    \begin{enumerate}
        \item \textbf{Faster Implementation.} In Section \ref{sec:spectral_clipping_power_iteration}, we describe how truncated SVD can be used to implement spectral clipping of gradient matrices. We introduce Algorithm \ref{alg:spectral_clipping_rsvd}, which is more efficient compared to implementing spectral clipping with full SVD and does not compromise accuracy (see Figure \ref{fig:SVDvstruncatedSVD_accuracy}). 
        \item \textbf{Layerwise Adaptive Clipping Scheme.} In Section \ref{sec:layerwise_ema}, we propose the layerwise adaptive clipping scheme, where we maintain different clipping thresholds for different layers of the neural network.
    \end{enumerate}
    $\bullet$ \textbf{Numerical Experiments.} 
    In Section~\ref{sec:numerical_exp}, we present several numerical experiments that demonstrate the effectiveness of the proposed spectral clipping technique. In particular, we show that (i) \algname{SGDM} with spectral clipping outperforms \algname{SGDM} with norm clipping on the CIFAR-10 task in Section~\ref{sec:resnet}; (ii) on the language modeling task in Section~\ref{sec:gpt2_fineweb}, spectral clipping outperforms norm clipping for \algname{SGDM}, \algname{Adam} and \algname{Muon}; and (iii) \algname{SGDM} with spectral clipping outperforms \algname{Adam} in Section~\ref{sec:mlp}.

    \begin{figure*}[t!]
        \centering
        \begin{subfigure}[t]{0.56\linewidth}
            \centering
            \includegraphics[width=\linewidth]{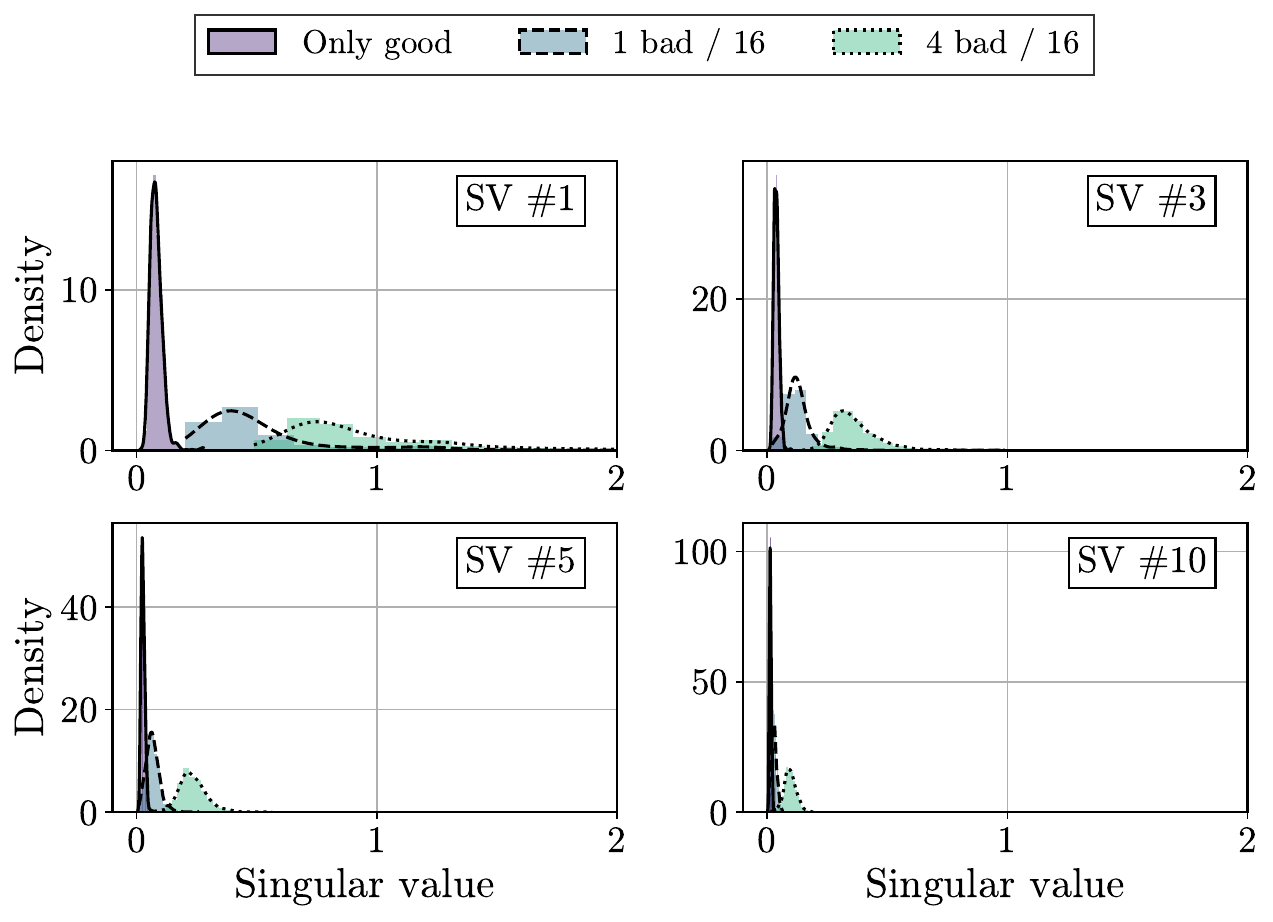}
            \caption{Singular value distributions for selected SVs}
            \label{fig:sv_distribution_kde_left}
        \end{subfigure}\hfill
        \begin{subfigure}[t]{0.38\linewidth}
            \centering
            \includegraphics[width=\linewidth]{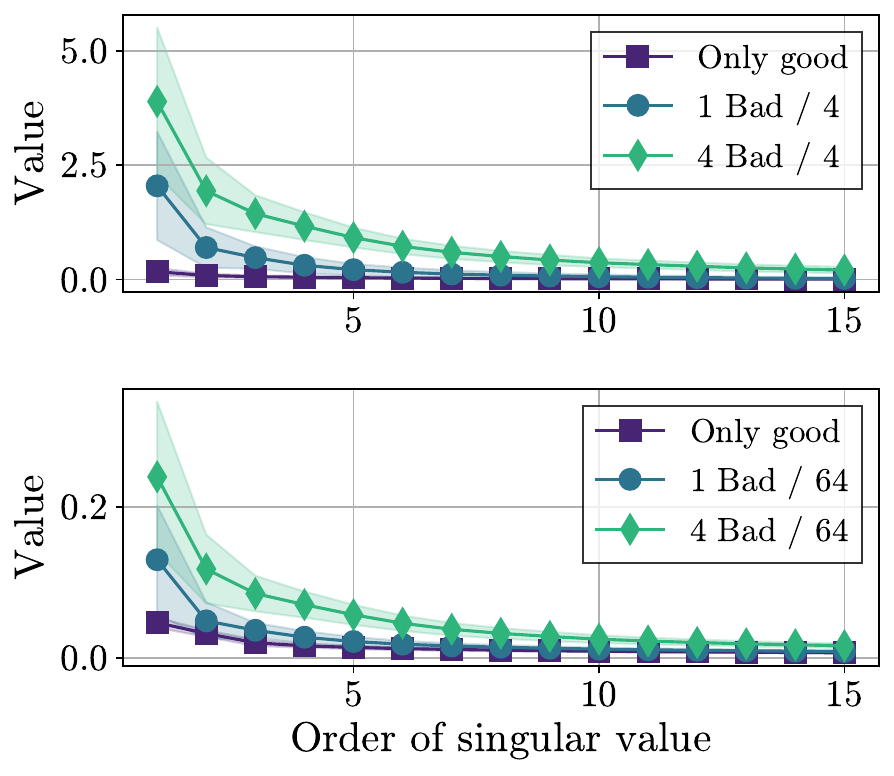}
            \caption{Top-15 singular values}
            \label{fig:sv_distribution_lines_right}
        \end{subfigure}
        \caption{\small Spectral profiles under controlled token replacement of the first feed-forward matrix \(W_{\text{MLP, 1}}\) in layer 5 of GPT‑2 Small. (a) Singular value distributions for selected SVs at batch size 16. (b) Top‑15 singular values for batch sizes 4 and 64. Initially, the batch contains only good samples with \(\tau_{\text{good}} = 0.95\); we replace one and four tokens with bad samples with \(\tau_{\text{bad}} = 10^{-7}\). Adding bad samples shifts the distribution and makes it broader.}
        \vspace{-.5cm}
        \label{fig:sv_distribution_with_kde_layer_5_grid}
    \end{figure*}

\vspace{-.3cm}
\section{Gradient Clipping beyond Vector Norms}
\vspace{-0.2cm}
    In this section, we first conduct an experiment in which we artificially add noise and check the influence of this noise on the spectrum of gradient matrices. Based on our observations in this experiment, we suggest a novel clipping technique in the second part. Finally, we show that our method is more general than the norm clipping method discussed in \eqref{eq:sgd_clip}.

    \vspace{-.2cm}
    \subsection{Singular Values of Gradient Matrices}
    \vspace{-.1cm}
    Again, all the works mentioned above treat the gradients as vectors, while each layer of a neural network has a matrix structure, and their corresponding gradients are also matrices. To gain a deeper understanding, we study the gradients of these layer-wise gradients. Here, we consider the pretrained GPT-2 model $p_{\theta}(\cdot \mid \cdot)$ ~\citep{radford2019language}. This model is already trained, and given a sequence of tokens $s_{[1:t-1]}$, it assigns a high probability to a \textit{good} token; i.e., $p_{\theta}(s_t \mid s_{[1: t-1]})$ is high if $s_t$ is good, and a lower probability to a \textit{bad} token; i.e., $p_{\theta}(s_t \mid s_{[1: t-1]})$ is low if $s_t$ is a bad token~\cite{sutskever2014sequence}. Now we implement backpropagation on $p_{\theta}$ to compute the gradients of the parameters for both the good and bad sets of tokens.

    In Figure~\ref{fig:sv_distribution_with_kde_layer_5_grid}, we plot the distribution of the largest singular values of gradient matrices computed from mini-batches containing different numbers of bad tokens. When all tokens in the mini-batch are good, the singular values of the gradient matrix remain small. As the number of bad tokens in the mini-batch increases, the top singular values increase significantly.
    
    We perform this analysis for each layer of GPT-2 and observe similar trends across all layers (see Appendix~\ref{app:singular_values_overview} for additional details). These observations lead us to consider a clipping technique that truncates the singular values of gradient matrices in order to address noise in modern deep learning.
    
\vspace{-.2cm}
\subsection{Spectral Clipping of Gradient Matrices}\label{sec:spectral_clip}
\vspace{-.1cm}
    Motivated by the empirical behavior observed in the previous section, we treat model parameters as matrices and clip the singular values of the corresponding stochastic gradient matrices. To elaborate on our technique, we consider the matrix-parameter optimization problem
    \begin{equation}\label{eq:matrix_min}
    \textstyle
        \min_{\mX \in \R^{m \times n}} f(\mX),
    \end{equation}
    where $f: \R^{m \times n} \to \R$. To solve this problem, we propose the update rule 
    \begin{equation}\label{eq:SGD_sclipped}
    \textstyle
        \mX_{k+1} = \mX_k - \eta_k \gC_{\tau_k}(\mG_k).
    \end{equation}
    where $\gC_{\tau_k}$ is an operator that clamps the singular values of $\mG_k$ to a threshold $\tau_k > 0$. For example, if the SVD of the gradient matrix is 
    $\mG_k = \mU_k \text{diag}(\bm{\sigma}_k)\mV_k^\top$
    (where $d \coloneqq \min\{m,n\}, \mU_k\in\R^{m\times d}, \mV_k\in\R^{n\times d}$ and $\bm{\sigma}_k\in\R_+^{d}$ contain the singular values) then $\gC_{\tau_k}(\mG_k) \eqdef \mU_k \text{diag}(\clamp_{\tau}(\bm{\sigma}_k)) \mV_k^\top,
    $ where the coordinates of the vector $\clamp_{\tau}(\bm{\sigma}_k)$ are given by
    \begin{equation}\label{eq:def_clamp}
    \textstyle
        \clamp_{\tau_k} (\bm{\sigma}_k)[i] = \begin{cases}
            \bm{\sigma}_k[i] & \text{if } \bm{\sigma}_k[i] \leq \tau_k \\
            \tau_k & \text{otherwise}.
        \end{cases} 
    \end{equation}
    We call this technique spectral clipping. The operator $\gC_{\tau_k}$ leaves the singular directions $\mU_k, \mV_k$ unchanged and ensures $\sigma_{\max}(\gC_{\tau_k}(\mG_k)) \leq \tau_k$ and $\| \gC_{\tau_k}(\mG_k) \|_F \leq \sqrt{d} \tau_k$.
    As we observe, the top singular values of the gradient matrices spike in the presence of noise; this clipping technique can help in practice. 
    
    \vspace{-.2cm}
    \subsection{Gradient Norm Clipping as a Special 
    Case}\label{sec:gradientnorm_clip}
    \vspace{-.1cm}
    We next show that standard gradient norm clipping is a special case of spectral clipping. Norm clipping treats parameters as vectors and updates using \eqref{eq:sgd_clip}. Observe that any
    vector $\vg_k \in \R^d$ can be viewed as a matrix in $\R^{d\times 1}$, and its SVD is $\vg_k = \mU_k \text{diag}(\bm{\sigma}_k)\mV_k^\top$, with $\mU_k = \frac{\vg_k}{\| \vg_k\|} \in \R^{d}$, $\bm{\sigma}_k = \| \vg_k\| \in \R$, and $\mV_k = 1 \in \R$. Then from \eqref{eq:def_clamp}, we have $\text{clamp}_{\tau_k} \left(\| \vg_k \| \right) = \min \{ \| \vg_k\|, \tau_k \}$ and thus $\gC_{\tau_k}(\vg_k) = \frac{\vg_k}{\| \vg_k\|} \min \left\{\| \vg_k\|, \tau_k \right\}$. Plugging this into~\eqref{eq:SGD_sclipped} gives
    \begin{equation*}
    \textstyle
        \vx_{k+1} = \vx_k - \eta_k \gC_{\tau_k}(\vg_k) = \vx_k - \eta_k \frac{\vg_k}{\| \vg_k\|} \min \left\{\| \vg_k\|, \tau_k \right\} = \vx_k - \eta_k \min \left\{ 1, \frac{\tau_k}{\| \vg_k\|} \right\} \vg_k,
    \end{equation*}
    which coincides with the norm-clipping update~\eqref{eq:sgd_clip} (with $\tau_k = \tau$). Therefore, spectral clipping generalizes gradient norm clipping, and \emph{any analysis of our method captures that of gradient norm clipping as a special case}. 
\vspace{-0.3cm}
\section{Convergence Guarantees}
\vspace{-.2cm}
    In this section, we establish convergence guarantees for \algname{SGD} with spectral clipping \eqref{eq:SGD_sclipped}. For this section, we consider only the constant spectral clipping threshold, i.e., $\tau_k = \tau$ in \eqref{eq:SGD_sclipped}. We consider the matrix optimization problem \eqref{eq:matrix_min}, and we make some standard assumptions on the function $f$ and the stochastic gradient matrix $\mG_k$. 
    \begin{assumption}\label{assume:smooth}
    We assume $f$ is $L$-smooth i.e. for all $\mX, \mY \in \R^{m \times n}$ we have
        $$f(\mX) \leq f(\mY) + \langle \nabla f(\mY), \mX - \mY \rangle + \frac{L}{2} \| \mX - \mY \|_F^2.$$
    \end{assumption}
    This is a standard assumption in the optimization literature for the analysis of optimization algorithms~\citep{nesterov2004introductory, gorbunov2020stochastic, koloskova2023revisiting}. 
    Moreover, we assume that the stochastic gradient estimates satisfy the bounded $\alpha$-moment assumption~\citep{zhang2020adaptivemethodsgoodattention, hubler2024gradient, chezhegov2024clipping, choudhury2026muonnesterovmomentumheavytailed}, also called $p$-BCM.
    \begin{assumption}\label{assume:BV}
        We assume the gradient $\mG_k$ computed at each iteration $k$ is unbiased i.e. $\Exp{\mG_k \mid \mX_k} = \nabla f(\mX_k)$ and has bounded $\alpha$-moment i.e.
        $\Exp{\| \mG_k - \nabla f(\mX_k) \|_F^\alpha \mid \mX_k } \leq \sigma^\alpha$
        for some $\alpha \in (1, 2]$ and $\sigma > 0$.
    \end{assumption}
    For $\alpha = 2$, Assumption~\ref{assume:BV} reduces to the classical bounded-variance assumption~\citep{lan2012optimal, dekel2012optimal}, and it is extensively used in optimization to study different stochastic algorithms. 
    However, the bounded variance assumption can be restrictive in modern applications. Many recent empirical studies suggest that the stochastic gradient noise in image classification~\citep{simsekli2019tail, battash2024revisiting}, large language models~\citep{zhang2020adaptivemethodsgoodattention, ahn2023linear}, and reinforcement ~\citep{garg2021proximal} tasks follows a heavy-tailed distribution. Assumption \ref{assume:BV} accommodates such heavy-tailed behavior with $\alpha < 2$~\citep{zhang2020adaptivemethodsgoodattention, mohammadi2015estimating}. Under this assumption, we can show that $\Exp{\| \gC_{\tau}(\mG_k)\|_F^2 \mid \mX_k } \leq 2 \|\nabla f(\mX_k) \|_F^2 + 2^{4 - \alpha} d^{1 - \nicefrac{\alpha}{2}} \tau^{2-\alpha} \sigma^\alpha$ (see Lemma \ref{lemma:bound_2_moment}, Appendix \ref{sec:convergence_analysis}) for $d = \min \{ m, n\}$ and use this to prove the following bound on the iterates of \algname{SGD} with spectral clipping~\eqref{eq:SGD_sclipped}.
    \begin{theorem}\label{theorem:SGD_clipping}
        Suppose Assumptions \ref{assume:smooth} and \ref{assume:BV} hold, and let $f_* \coloneqq \inf_{\mX} f(\mX) > -\infty$. Then for any $K \geq 1$, \algname{SGD} with spectral clipping~\eqref{eq:SGD_sclipped} and a constant step size $\eta_k = \eta \eqdef \frac{1}{4 L \sqrt{K}}$ satisfies 
        \begin{equation}\label{eq:SGD_clipping}
            \textstyle
            \min_{0 \leq k \leq K-1} \Exp{\| \nabla f (\mX_k) \|_F^2}
            \leq \frac{16 L (f(\mX_0) - f_*)}{\sqrt{K}} + \frac{C_{\alpha, d} \tau^{2 - \alpha} \sigma^\alpha}{8 \sqrt{K}} + \frac{2}{K} \sum_{k = 0}^{K-1} \Delta_k
        \end{equation}
        where $\Delta_k \eqdef \Exp{\left\| \nabla f(\mX_k) - \Exp {\mathcal{C}_{\tau_k}(\mG_k) \mid \mX_k } \right\|_F^2}$ and $C_{\alpha, d} \eqdef 2^{4 - \alpha} d^{1 - \nicefrac{\alpha}{2}}$ with $d = \min \{ m, n\}$.
    \end{theorem}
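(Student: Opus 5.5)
We follow the standard descent-lemma argument for biased SGD. Write $\widehat{\mG}_k \eqdef \gC_{\tau}(\mG_k)$ and $\mB_k \eqdef \Exp{\widehat{\mG}_k \mid \mX_k} - \nabla f(\mX_k)$, so that $\Delta_k = \Exp{\|\mB_k\|_F^2}$.

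Applying Assumption~\ref{assume:smooth} with $\mX = \mX_{k+1}$ and $\mY = \mX_k$ gives
\begin{equation*}
f(\mX_{k+1}) \leq f(\mX_k) - \eta \langle \nabla f(\mX_k), \widehat{\mG}_k \rangle + \frac{L\eta^2}{2}\|\widehat{\mG}_k\|_F^2 .
\end{equation*}
Taking conditional expectation given $\mX_k$, the inner product becomes $\langle \nabla f(\mX_k), \nabla f(\mX_k) + \mB_k\rangle$. Young's inequality $\langle a, b\rangle \geq -\tfrac12\|a\|^2 - \tfrac12\|b\|^2$ then yields
\begin{equation*}
-\eta\langle \nabla f, \nabla f + \mB_k\rangle \leq -\frac{\eta}{2}\|\nabla f(\mX_k)\|_F^2 + \frac{\eta}{2}\|\mB_k\|_F^2 .
\end{equation*}
For the quadratic term we invoke Lemma~\ref{lemma:bound_2_moment}:
\begin{equation*}
\Exp{\|\widehat{\mG}_k\|_F^2 \mid \mX_k} \leq 2\|\nabla f(\mX_k)\|_F^2 + C_{\alpha,d}\tau^{2-\alpha}\sigma^\alpha .
\end{equation*}
Combining these, we get
\begin{equation*}
\Exp{f(\mX_{k+1})\mid\mX_k} \leq f(\mX_k) - \Big(\frac{\eta}{2} - L\eta^2\Big)\|\nabla f(\mX_k)\|_F^2 + \frac{\eta}{2}\|\mB_k\|_F^2 + \frac{L\eta^2}{2}C_{\alpha,d}\tau^{2-\alpha}\sigma^\alpha .
\end{equation*}

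With $\eta = \frac{1}{4L\sqrt{K}} \leq \frac{1}{4L}$ we have $L\eta^2 \leq \eta/4$, so the coefficient in front of $\|\nabla f\|_F^2$ is at least $\eta/4$. Taking total expectation, summing over $k=0,\dots,K-1$, telescoping with $f(\mX_K) \geq f_*$, and dividing by $K\eta/4$ gives
\begin{equation*}
\frac1K\sum_{k=0}^{K-1}\Exp{\|\nabla f(\mX_k)\|_F^2} \leq \frac{4(f(\mX_0)-f_*)}{\eta K} + 2L\eta\, C_{\alpha,d}\tau^{2-\alpha}\sigma^\alpha + \frac{2}{K}\sum_{k=0}^{K-1}\Delta_k .
\end{equation*}
Substituting $\eta$, the first term becomes $\frac{16L(f(\mX_0)-f_*)}{\sqrt K}$ and the second becomes $\frac{C_{\alpha,d}\tau^{2-\alpha}\sigma^\alpha}{2\sqrt K}$. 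Finally, bounding the minimum by the average gives the result.

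The main point to check is the constant in the noise term. Our calculation yields $1/2$ rather than the stated $1/8$. To reach $1/8$, we would either need a sharper constant in Lemma~\ref{lemma:bound_2_moment} or a tighter split of the quadratic term, for instance by keeping $\frac{L\eta^2}{2}$ and using the coefficient $\frac{\eta}{2}-L\eta^2 \geq \frac{\eta}{4}$ more carefully. We would track these constants against the lemma's exact form, but this affects only constants, not the rate.

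The genuinely nontrivial ingredient is Lemma~\ref{lemma:bound_2_moment}, which we take as given. Its role is to control the second moment of the clipped gradient using only the $\alpha$-moment. It does so via $\|\gC_\tau(\mG)\|_F^2 \leq \|\gC_\tau(\mG)\|_F^{\alpha}(\sqrt d\tau)^{2-\alpha}$, combined with the fact that spectral clipping is a contraction toward zero.
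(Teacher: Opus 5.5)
Your argument is the paper's argument step for step (descent lemma, Young's inequality on the bias inner product, Lemma~\ref{lemma:bound_2_moment} for the second moment, $\eta\le\frac{1}{4L}$ to keep a coefficient $\frac{\eta}{4}$, telescoping). Your arithmetic is also right: dividing $\frac{L\eta^2K}{2}C_{\alpha,d}\tau^{2-\alpha}\sigma^\alpha$ by $\frac{\eta K}{4}$ gives $2L\eta\,C_{\alpha,d}\tau^{2-\alpha}\sigma^\alpha=\frac{C_{\alpha,d}\tau^{2-\alpha}\sigma^\alpha}{2\sqrt{K}}$. The paper's proof reaches $\frac{1}{8}$ only through a slip at exactly this division. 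It writes $\frac{\eta L C_{\alpha,d}\tau^{2-\alpha}\sigma^\alpha}{2}$ where $2\eta L C_{\alpha,d}\tau^{2-\alpha}\sigma^\alpha$ is correct. So the gap you flag is genuine: as written, neither your proof nor the paper's establishes the stated inequality. Because the statement has explicit constants, being off by a factor $4$ means it is not proved.

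Your suggested repairs do not close that factor. Keeping the exact coefficient $\frac{\eta}{2}-L\eta^2$ turns the noise term into $\frac{L\eta}{1-2L\eta}C_{\alpha,d}\tau^{2-\alpha}\sigma^\alpha\ge\frac{C_{\alpha,d}\tau^{2-\alpha}\sigma^\alpha}{4\sqrt{K}}$. At $K=1$ there is no slack at all, since then $\frac{\eta}{2}-L\eta^2=\frac{\eta}{4}$. Sharpening the lemma by replacing its indicator split with the pointwise bound $\min\{t^2,4d\tau^2\}\le t^\alpha(2\sqrt{d}\tau)^{2-\alpha}$, where $t=\|\mG_k-\nabla f(\mX_k)\|_F$, halves its noise term. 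That only brings you to $\frac{1}{4\sqrt{K}}$.

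The missing idea is to stop charging $2\|\nabla f(\mX_k)\|_F^2$ for the second moment and let the descent term absorb the mean part instead. Put $\mathbf{H}_k\eqdef\Exp{\gC_\tau(\mG_k)\mid\mX_k}$, so that your $\mB_k=\mathbf{H}_k-\nabla f(\mX_k)$. Decompose $\Exp{\|\gC_\tau(\mG_k)\|_F^2\mid\mX_k}=\|\mathbf{H}_k\|_F^2+\Exp{\|\gC_\tau(\mG_k)-\mathbf{H}_k\|_F^2\mid\mX_k}$. Then bound the variance by the second moment about the deterministic point $\gC_\tau(\nabla f(\mX_k))$.

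Since $\gC_\tau$ is the Frobenius projection onto the spectral-norm ball of radius $\tau$, it is non-expansive. Hence $\|\gC_\tau(\mG_k)-\gC_\tau(\nabla f(\mX_k))\|_F\le\min\{t,2\sqrt{d}\tau\}$, and the pointwise bound above gives a variance of at most $(2\sqrt{d}\tau)^{2-\alpha}\sigma^\alpha=\frac{C_{\alpha,d}}{4}\tau^{2-\alpha}\sigma^\alpha$. This non-expansiveness about $\gC_\tau(\nabla f(\mX_k))$, not contraction toward zero, is also what drives Lemma~\ref{lemma:bound_2_moment}. The mechanism you sketch through $\|\gC_\tau(\mG_k)\|_F^\alpha$ would produce a term of order $\tau^{2-\alpha}\|\nabla f(\mX_k)\|_F^\alpha$ rather than $2\|\nabla f(\mX_k)\|_F^2$.

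For the linear term use the exact identity $-\langle\nabla f(\mX_k),\mathbf{H}_k\rangle=-\frac12\|\nabla f(\mX_k)\|_F^2-\frac12\|\mathbf{H}_k\|_F^2+\frac12\|\mB_k\|_F^2$. Because $L\eta\le 1$, the $-\frac{\eta}{2}\|\mathbf{H}_k\|_F^2$ swallows $\frac{L\eta^2}{2}\|\mathbf{H}_k\|_F^2$, leaving
\begin{equation*}
\Exp{f(\mX_{k+1})\mid\mX_k}\le f(\mX_k)-\frac{\eta}{2}\|\nabla f(\mX_k)\|_F^2+\frac{\eta}{2}\|\mB_k\|_F^2+\frac{L\eta^2}{8}C_{\alpha,d}\tau^{2-\alpha}\sigma^\alpha .
\end{equation*}
Telescoping and dividing by $\frac{\eta K}{2}$ with $\eta=\frac{1}{4L\sqrt{K}}$ gives $\frac{8L(f(\mX_0)-f_*)}{\sqrt{K}}+\frac{C_{\alpha,d}\tau^{2-\alpha}\sigma^\alpha}{16\sqrt{K}}+\frac{1}{K}\sum_{k=0}^{K-1}\Delta_k$. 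This is termwise below the stated right-hand side. So the theorem is true as stated, but proving it requires this step rather than the chain of inequalities that you and the paper use.
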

    The first term in~\eqref{eq:SGD_clipping} is the standard optimization error for $L$-smooth objectives. The second term depends on $\tau^{2 - \alpha} \sigma^\alpha$ and captures the effect of stochastic noise after clipping.  For $\alpha<2$, this term reflects how clipping controls the contribution of heavy-tailed gradient noise to the convergence rate. The last term is dependent on the clipping threshold $\tau$ and measures the bias introduced by the nonlinear operator $\mathcal{C}_{\tau_k}$. Using conditional unbiasedness, we can rewrite the bias as $        \nabla f(\mX_k) - \E[\mathcal{C}_{\tau_k}(\mG_k)\mid \mX_k] = \E[\mG_k - \mathcal{C}_{\tau_k}(\mG_k)\mid \mX_k]$,  
    so $\Delta_k$ is small whenever clipping is rarely active (i.e., $\mathcal{C}_{\tau_k}(\mG_k)\approx \mG_k$ with high probability). A similar phenomenon of bias is observed in \algname{SGD} with gradient norm clipping~\cite{koloskova2023revisiting}. When clipping is disabled by taking $\tau_k=\infty$, we have $\mathcal{C}_{\tau_k}(\mG_k)=\mG_k$ and hence $\Delta_k = \Exp{\left\| \nabla f(\mX_k) - \Exp {\mG_k \mid \mX_k } \right\|_F^2} = 0$.

    For $\alpha = 2$, note that the second term on the right hand side of \eqref{eq:SGD_clipping} becomes independent of the clipping threshold $\tau$. Since clipping can introduce bias through $\Delta_k$ but does not improve the variance dependent second term of \eqref{eq:SGD_clipping} when $\alpha=2$, the best bound is obtained by disabling clipping ($\tau_k=\infty$), thereby recovering the optimal $\mathcal{O}(\nicefrac{1}{\sqrt{K}})$ convergence rate for non-convex minimization with \algname{SGD}~\citep{robbins1951stochastic,ghadimi2013stochastic}. However, when gradients have heavy-tailed noise, i.e., $\alpha < 2$, we need clipping to control both the second and third terms of \eqref{eq:SGD_clipping} and achieve a better rate of convergence. We provide a detailed proof of Theorem \ref{theorem:SGD_clipping} in Appendix \ref{sec:proof_theorem:SGD_clipping}.

    The bias due to a finite clipping threshold in \eqref{eq:SGD_clipping}, i.e., $\frac{2}{K} \sum_{k = 0}^{K-1} \Delta_k$, is unavoidable. However, we can choose the threshold $\tau$ depending on the noise level $\sigma$ to prove convergence. 

    \begin{theorem}\label{theorem:main_theorem}
        Suppose Assumptions \ref{assume:smooth} and \ref{assume:BV} hold, and let $f_* \coloneqq \inf_{\mX} f(\mX) > -\infty$. Then, for any $K \geq 1$, \algname{SGD} with spectral clipping \eqref{eq:SGD_sclipped}, a constant step size $\eta_k = \eta \leq \nicefrac{1}{4dL}$ and clipping threshold $\tau \geq \max \{ 2, 8 \sigma \}$ satisfy the following
        \begin{equation}\label{eq:main_theorem_eq1}
        \textstyle
            \min_{0 \leq k \leq K-1} \Exp{ \phi \left( \| \nabla f(\mX )\|_F \right)}  
            \leq \frac{4 (f(\mX_0) - f_*)}{\eta K} + 32 \tau^{-2(\alpha - 1)} \sigma^{2 \alpha} + 8 \eta d L \tau^{2 - \alpha} \sigma^{\alpha}
        \end{equation}
        where $\phi(t) \eqdef \min\{ t, t^2\}$ and $d = \min \{m, n\}$. Moreover, when the step size $\eta \leq \min \left\{\nicefrac{1}{4dL}, \nicefrac{1}{\tau^\alpha L} \right\}$ and the threshold $\tau \geq \max \left\{ 2, 8 \sigma, \sigma K^{\nicefrac{1}{3 \alpha - 2}} \right\}$, we have 
        \begin{equation}\label{eq:main_theorem_eq2}
        \textstyle
            \hspace{-2mm} \min_{0 \leq k \leq K-1} \Exp{ \phi \left( \| \nabla f(\mX )\|_F \right)} = \mathcal{O} \left( K^{\frac{-2 \alpha + 2}{3 \alpha - 2}} \right).
        \end{equation}
    \end{theorem}
    Theorem \ref{theorem:main_theorem} establishes the convergence of \algname{SGD} with spectral clipping to a stationary point for a non-convex problem under Assumption \ref{assume:BV}. Here we use $\min_{0 \leq k \leq K-1} \Exp{\min \{ \| \nabla f(\mX_k)\|_F, \| \nabla f(\mX_k)\|_F^2 \}}$ as the potential function to measure convergence. \citet{qian2021understanding, zhang2020adaptivemethodsgoodattention, zhang2020improved} also uses the same potential function for the analysis of stochastic clipping algorithms. When the gradients are small, i.e., $\| \nabla f(\mX_k)\|_F \leq \varepsilon \ll 1$, then $\| \nabla f(\mX_k)\|_F^2 \ll \| \nabla f(\mX_k)\|_F $ and $\| \nabla f(\mX_k)\|_F^2$ are the dominant terms. Therefore, in this particularly relevant regime, \eqref{eq:main_theorem_eq2} yields a convergence rate of $\Exp{\| \nabla f(\mX_k)\|_F^2} = \gO\left( K^{\nicefrac{-2 \alpha + 2}{3 \alpha - 2}} \right)$. In particular, for $\alpha = 2$, the right hand side of \eqref{eq:main_theorem_eq2} becomes $\gO \left( K^{\nicefrac{-1}{2}}\right)$, matching the optimal convergence rate for \algname{SGD}~\cite{lan2012optimal, ghadimi2013stochastic}.

    Finally, \citet{zhang2020adaptivemethodsgoodattention} (Theorem~6) proves that the rate
    $\mathcal{O} \left(K^{\nicefrac{-2\alpha+2}{3\alpha-2}}\right)$ is optimal for gradient norm clipping
    under their setting. Since norm clipping is a special case of our spectral clipping strategy,
    this provides evidence that the rate in~\eqref{eq:main_theorem_eq2} is tight for our clipping scheme as well.
    
\vspace{-.3cm}
\section{Adaptive Clipping Strategies}\label{sec:adaptive_clip}
\vspace{-.2cm}
    As discussed in Theorem \ref{theorem:SGD_clipping}, clipping introduces bias; therefore, we need to choose the threshold $\tau$ to handle the noise $\sigma$ and the bias $\frac{1}{K}\sum_{k = 0}^{K-1} \Delta_k$. In Theorem \ref{theorem:main_theorem}, we show that one can choose $\tau$ depending on the noise level $\sigma$ to ensure convergence. However, in practice, we do not have any knowledge of $\sigma$. Moreover, \citet{zhang2020adaptivemethodsgoodattention} shows that the gradient noise distribution changes during the course of training. Therefore, we propose adaptive strategies for choosing the clipping thresholds. We describe our strategies below.
    \vspace{-.1cm}
    \paragraph{EMA of Singular Values.} The clipping threshold $\tau_k$ can be chosen using an exponential moving average of the top singular values of the gradient matrix $\mG_k$, i.e.,
    \begin{eqnarray}
        \hat{\tau}_k & = &  \theta \hat{\tau}_{k-1} + (1 - \theta) \sigma_{\max}(\mG_k), \notag \\
        \tau_k & = & \nicefrac{\hat{\tau}_k}{(1 - \theta^{k+1})} \label{eq:ema_clip}
    \end{eqnarray}
    for some $\theta \in (0, 1)$. In all our experiments, we use $\theta = 0.9$, which places more weight on previous estimates and less on the current gradient. As discussed in the previous section, the top singular values of gradient matrices increase in the presence of noise. Therefore, when there is sudden noise in the gradients, it will not significantly distort our clipping threshold. Similar adaptive ideas were proposed by \citep{zhang2020adaptivemethodsgoodattention} for a norm based clipping method.

    \vspace{-.1cm}
    \paragraph{Quantile of Singular Values.} We also propose a quantile-based adaptive threshold computed over a sliding window of $w$ top singular values, i.e.,
    \begin{eqnarray}
        \tau_k & = & \gQ_q(\gS_{k-1}), \label{eq:quantile_clip1} \\
        \gS_k & = & \left( \gS_{k-1} \setminus \sigma_{\max} \left( \mG_{k - w - 1} \right) \right) \cup \sigma_{\max} \left( \mG_k \right) \label{eq:quantile_clip2}
    \end{eqnarray}
    for some $q \in (0, 1]$. Here $\gS_k$ is the set of the last $w$ top singular values, and $\gQ_q$ denotes the quantile operator at level $q \in [0,1]$. Thus, \eqref{eq:quantile_clip1} sets the clipping threshold based on the $q$-th quantile of the sliding window set $\gS_{k-1}$, and \eqref{eq:quantile_clip2} updates the window by discarding the outdated singular value $\sigma_{\max} \left( \mG_{k - w - 1} \right)$ and adding the new one $\sigma_{\max} \left( \mG_k \right)$. We then use this threshold $\tau_k$ at the $k$-th iteration to clip the raw gradient matrix $\mG_k$ with operator $\gC_{\tau_k}$. Note that our EMA based clipping strategy can still be sensitive to occasional extremely large singular values. However, this quantile-based rule can mitigate this issue by selecting the threshold based on the empirical distribution of the top singular values. In experiments, we use $q = 0.85$ and $w = 100$.
    
    In Figure \ref{fig:trace_reg} (Appendix~\ref{sec:missing_figures}), we compare spectral clipping with different choices of threshold $\tau_k$ for the trace regression problem~\citep{kadri2020partial}
    \begin{eqnarray}\label{eq:trace_reg}
    \textstyle
        \min_{\mX \in \R^{n \times n}} f(\mX) \eqdef \frac{1}{N} \sum_{i = 1}^N \left( \left\langle \mA_i, \mX \right\rangle - \vb_i \right)^2
    \end{eqnarray}
    and plot the average gradient bias $\frac{1}{K} \sum_{k = 0}^{K-1} \Delta_k$ on the $y$-axis. In this example,  we observe that $\frac{1}{K} \sum_{k = 0}^{K-1} \Delta_k$ approaches $0$ when the threshold $\tau_k$ is a sufficiently large constant, but is non-zero when $\tau_k$ is a small constant (i.e. $\leq 0.1$ ). This example shows that we need to tune the clipping threshold $\tau_k$ for better accuracy of the spectral clipping algorithm. Moreover, we also implement our EMA based \eqref{eq:ema_clip} and quantile-based \eqref{eq:quantile_clip2} clipping strategy. We find that these two adaptive clipping strategies are independent of tuning $\tau_k$ and work well in Figure \ref{fig:trace_reg}.

    

\vspace{-0.3cm}
\section{Implementation of Spectral Clipping}
\vspace{-0.2cm}
    In this section, we discuss three aspects to make spectral clipping efficient for training deep neural networks. First, we discuss how spectral clipping can be combined with other optimization algorithms. Then, we propose an efficient clipping procedure using a truncated SVD approximation. Finally, in the third subsection, we suggest using an adaptive threshold for each layer of the deep neural network.

    \vspace{-.2cm}
    \subsection{Spectral Clipping beyond \algname{SGD}}
    \vspace{-.1cm}
    In algorithm~\eqref{eq:SGD_sclipped}, we use spectral clipping for the simple \algname{SGD} update. However, this idea of spectral clipping can be extended to other algorithms like \algname{SGD} with momentum (\algname{SGDM})~\citep{polyak1964some, liu2020improved} or \algname{Adam}~\citep{kingma2014adam}. For training modern deep neural networks, we often use gradient norm clipping along with \algname{SGDM} or \algname{Adam}. Therefore, we can replace gradient norm clipping with spectral clipping for these algorithms. We provide the pseudocode for \algname{SGDM} with EMA based and quantile based spectral gradient clipping in Algorithm \ref{alg:SGDM_ema} and Algorithm \ref{alg:SGDM_quantile} (Appendix \ref{sec:missing_pseudocode}). 
    
    \vspace{-.2cm}
    \subsection{Spectral Gradient Clipping with Truncated SVD}\label{sec:spectral_clipping_power_iteration}
    \vspace{-.1cm}
    Here, we discuss a practical implementation of the spectral clipping method. Recall that the update rule in \eqref{eq:SGD_sclipped} requires applying spectral clipping to the gradient matrix $\mG_k$ at each iteration, which involves knowledge of singular value decomposition (SVD)~\citep{stewart1993early}. For the very large matrices that arise in modern language models, computing a complete SVD at every step is prohibitively expensive~\citep{dongarra2018singular} and therefore impractical.   

    \begin{wrapfigure}{l}{0.52\textwidth}
    \begin{minipage}{0.5\textwidth}
    \begin{algorithm}[H]
    \caption{EMA Spectral Clipping}
    \label{alg:SGDM_ema}
    \begin{algorithmic}[1]
        \REQUIRE $\mX_0$, $\mB_{-1}=0$, $\beta$, $\theta$, $\tau_{-1}, \hat{\tau}_{-1}=0$
        \FOR{$k = 0,\dots,K-1$}
            \STATE Compute $\mG_k$
            \STATE $\mB_k = \beta \mB_{k-1} + (1-\beta)\mathcal{C}_{\tau_{k-1}}(\mG_k)$
            \STATE $\mX_{k+1} = \mX_k - \eta \mB_k$
            \STATE $\hat{\tau}_k = \theta \hat{\tau}_{k-1} + (1-\theta)\sigma_{\max}(\mG_k)$
            \STATE $\tau_k = \nicefrac{\hat{\tau}_k}{1 - \theta^{k+1}}$
        \ENDFOR
    \end{algorithmic}
    \end{algorithm}
    \end{minipage}
    \end{wrapfigure}
    Recently, \citet{zhao2024galore} developed a memory-efficient optimization algorithm called \algname{GaLore} (Gradient Low-Rank Projection) for large model training. Specifically, \algname{GaLore} projects the gradients into a low-rank subspace and performs updates only in that space. To obtain this low-rank projection, \citet{su2025galore} computes a rank-$r$ truncated SVD using a fast randomized SVD algorithm~\citep{halko2011finding}. This motivates an analogous strategy for spectral clipping: instead of computing the entire SVD of the gradient matrix $\mG$, we only compute a rank-$r$ truncated SVD using a randomized method from \citet{halko2011finding} and then clamp the top $r$ singular values of $\mG$.

    \vspace{-.3cm}
    \paragraph{Rank-$r$ Truncated SVD.} For a matrix $\mG \in \R^{m \times n}$ with a full SVD given by $\mG = \sum_{i=1}^{\min\{m,n\}} \sigma_i \vu_i \vv_i^\top = \mU \mathrm{diag}(\bm{\sigma}) \mV^\top,$
    the rank-$r$ truncated SVD retains only the leading $r$ singular values and singular vectors:
    $\mG_r \eqdef \sum_{i=1}^{r} \sigma_i \vu_i \vv_i^\top = \mU[:r] \mathrm{diag}(\bm{\sigma}[:r]) \mV[:r]^\top,$
    where $\mU[:r] \in \R^{m \times r}$ and $\mV[:r] \in \R^{n \times r}$ contain the first $r$ columns of $\mU$ and $\mV$, and $\bm{\sigma}[:r] \in \R^r$ stacks the top $r$ singular values. The matrix $\mG_r$ is the best rank-$r$ approximation to $\mG$ ~\citep{eckart1936approximation}. Randomized SVD~\citep{halko2011finding} efficiently approximates this truncated SVD by constructing a low-dimensional subspace that captures the dominant singular directions and then performing an SVD in that subspace. PyTorch implements this algorithm of \citet{halko2011finding} via the \texttt{torch.svd\_lowrank} function and computes the truncated SVD approximation. Its cost scales as $\mathcal{O}(mn r + (m + n) r^2)$, which is substantially cheaper than a full SVD when $r \ll \min\{m,n\}$. We utilize \texttt{torch.svd\_lowrank} in Algorithm \ref{alg:spectral_clipping_rsvd} for spectral clipping of matrices.

    \begin{wrapfigure}{l}{0.52\textwidth}
    \vspace{-.8cm} 
    \begin{minipage}{0.5\textwidth}
    \begin{algorithm}[H]
    \caption{Clipping via Truncated SVD}
    \label{alg:spectral_clipping_rsvd}
    \begin{algorithmic}[1]
        \REQUIRE Gradient matrix: $\mG \in \R^{m \times n}$, threshold $\tau > 0$, rank $r \in \mathbb{N}$
        \STATE $\overline{\mU}, \overline{\boldsymbol{\sigma}}, \overline{\mV} = \texttt{torch.svd\_lowrank}(\mG, r)$
        \STATE $\overline{\boldsymbol{\sigma}}_{\textsc{extra}, \tau} = \overline{\boldsymbol{\sigma}} - \mathrm{clamp}_{\tau}(\overline{\boldsymbol{\sigma}})$
        \STATE \textbf{return} $\mG - \overline{\mU}\,\mathrm{diag}(\overline{\boldsymbol{\sigma}}_{\textsc{extra}, \tau})\,\overline{\mV}^\top$
    \end{algorithmic}
    \end{algorithm}
    \end{minipage}
    \end{wrapfigure}
    

    In Algorithm~\ref{alg:spectral_clipping_rsvd}, \texttt{torch.svd\_lowrank} first approximates the top $r$ singular values and associated singular vectors of $\mG$ using a randomized SVD routine. For the randomized SVD routine, we use only \texttt{niter = 1} for efficiency. Next, we compute the vector $\overline{\bm{\sigma}}_{\textsc{extra},\tau}$ that captures how much each of the top $r$ singular values exceeds the clipping threshold $\tau$. 
    Finally, we subtract the corresponding rank-$r$ matrix 
    $\overline{\mU}\,\mathrm{diag}(\overline{\bm{\sigma}}_{\textsc{extra},\tau})\,\overline{\mV}^\top$ from $\mG$, which effectively clamps the top $r$ singular values at $\tau$, while leaving smaller singular values and all directions outside the rank-$r$ subspace unchanged.
    
    In practice, we choose $r$ to be small (for our experiments $r = 10$), so that Algorithm~\ref{alg:spectral_clipping_rsvd} computes and clips only the few most dominant singular directions of the gradient matrix, providing substantial computational savings compared to a full SVD while retaining the main benefits of spectral clipping. In Figure \ref{fig:SVDvstruncatedSVD_accuracy} (Appendix~\ref{sec:missing_figures}), we compare the performance of SVD and truncated SVD with $r = 10$ for training a neural network with spectral clipping, and we observe comparable performance on the CIFAR-10 dataset~\citep{krizhevsky2009learning}. \emph{This highlights the efficiency of Algorithm \ref{alg:spectral_clipping_rsvd} without compromising accuracy.}

    \begin{figure*}[!t]
    \centering
    \begin{subfigure}[t]{0.31\textwidth}
        \centering
        \includegraphics[width=\textwidth]{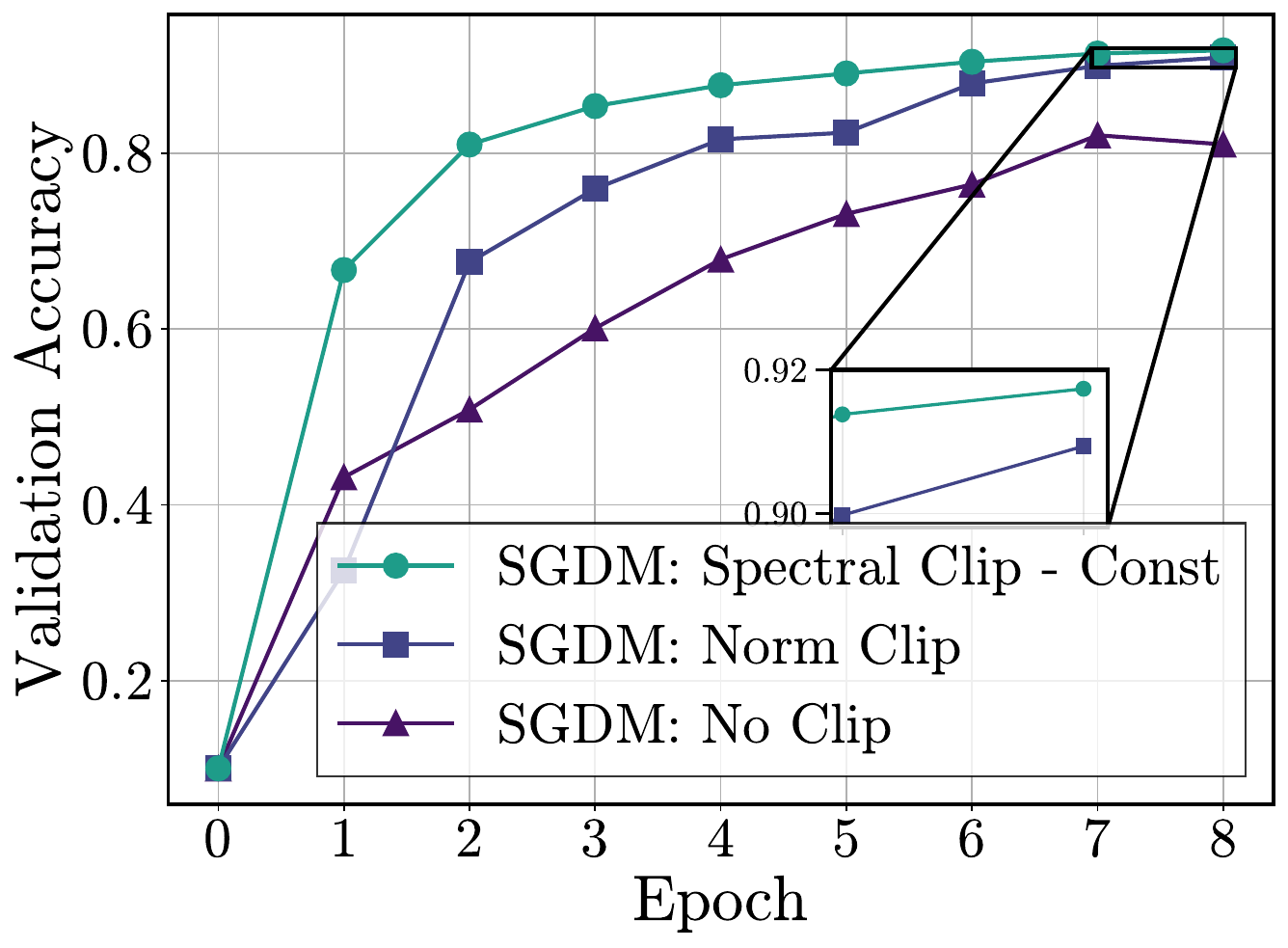}
        \caption{ResNet18 on CIFAR-10}
        \label{fig:cv_sgd_curves}
    \end{subfigure}
    \hfill
    \begin{subfigure}[t]{0.32\textwidth}
        \centering
        \includegraphics[width=\textwidth]{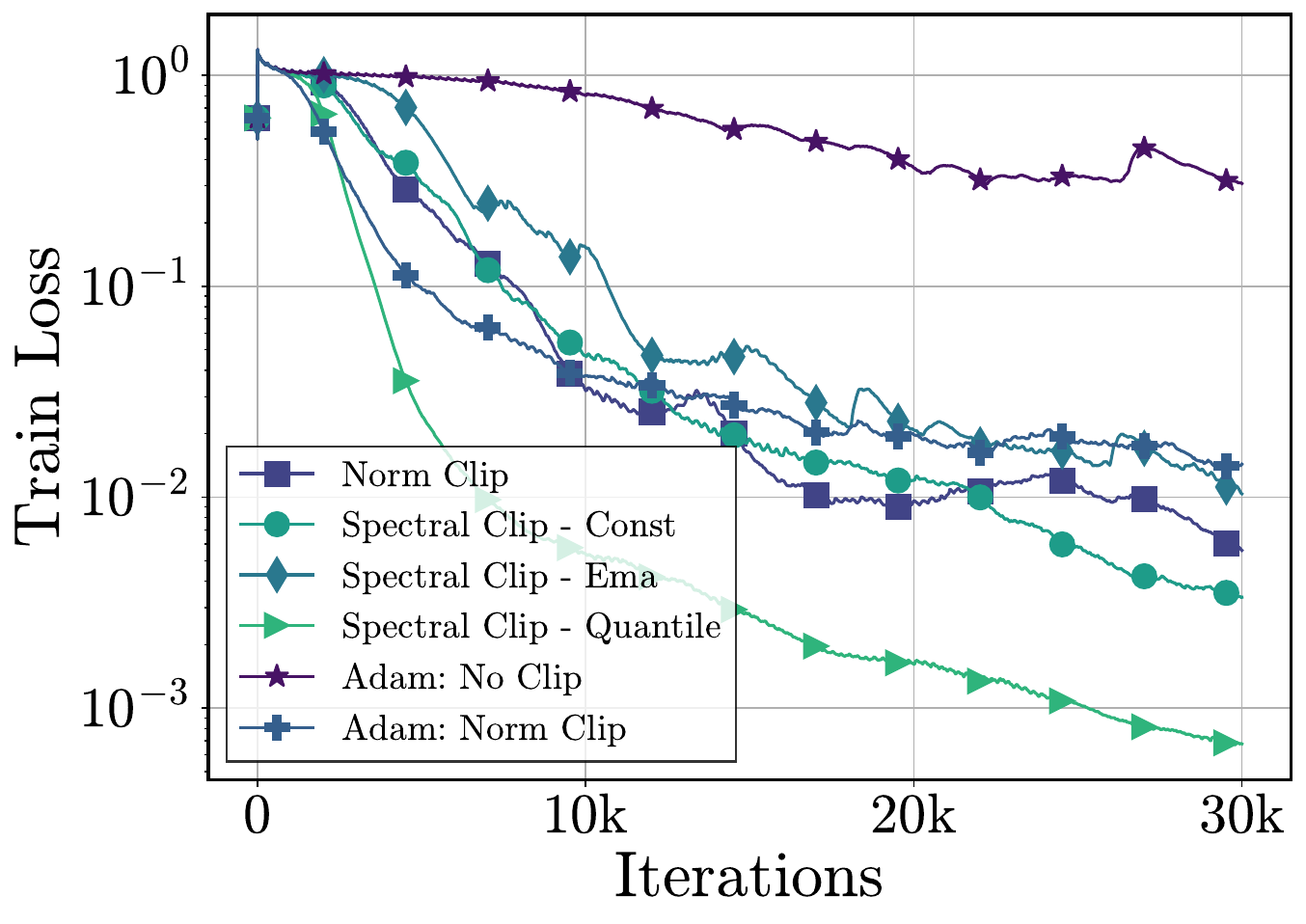}
        \caption{MLP with heavy-tailed noise}
        \label{fig:mlp_sgdm_30k}
    \end{subfigure}
    \hfill
    \begin{subfigure}[t]{0.31\linewidth}
    \centering
    \includegraphics[width=\linewidth]{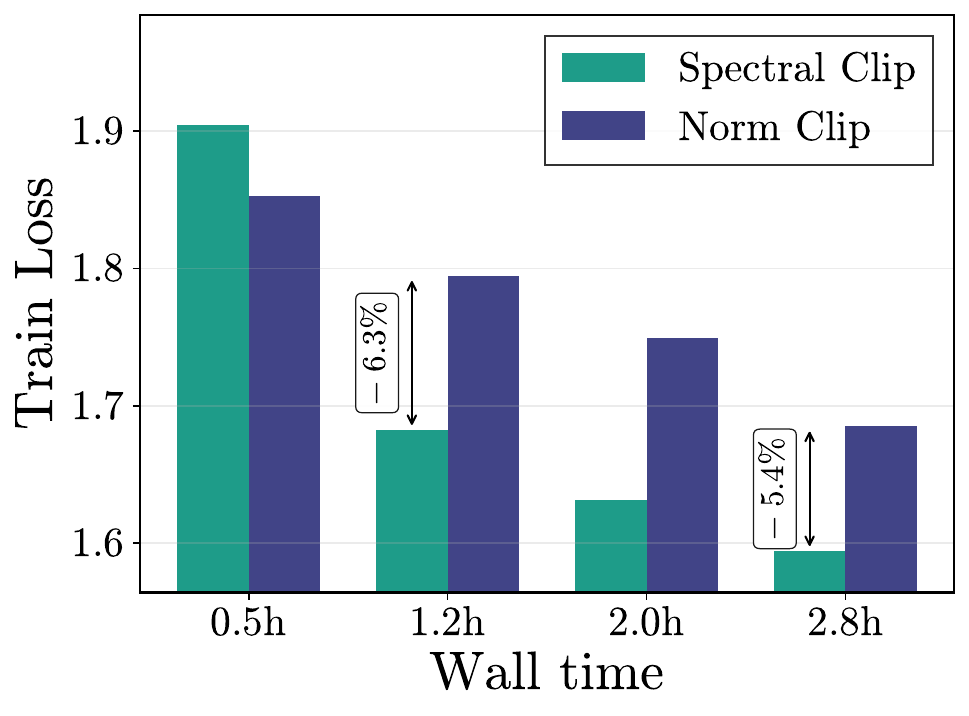}
    \caption{\small Time comparison}
    \label{fig:nanogpt_sgdm_wall_time}
    \end{subfigure}
    \caption{\small In Figure \ref{fig:cv_sgd_curves}, we plot the trajectories of \algname{SGDM} with no clipping, norm clipping and spectral clipping for CV problem. In Figure \ref{fig:mlp_sgdm_30k}, we compare the performance of \algname{SGDM} and \algname{Adam} for different clipping strategies for MLP problem. In Figure~\ref{fig:nanogpt_sgdm_wall_time}, we compare wall-clock time in NanoGPT experiment.
    }
    \vspace{-.5cm}
    \end{figure*}

    \vspace{-.2cm}
    \subsection{Layer-wise Adaptive Clipping}\label{sec:layerwise_ema}
    \vspace{-.1cm}
    In deep neural networks, different layers operate at different scales. Thus, using a single clipping threshold $\tau_k$ at iteration $k$ for the entire network can be suboptimal~\citep{brock2021high, zhang2020adaptivemethodsgoodattention}. This can result in one of two scenarios: (1) over-clipping layers with small singular values, which results in slow optimization, or (2) under-clipping layers with large singular values, thus harming the stability of the algorithm. 
    
    To address this, we propose a layer-wise adaptive spectral clipping scheme in which each layer maintains its own adaptive clipping threshold. Let $\mG_k^{(\ell)}$ denote the gradient of layer $\ell \in \{ 1, \cdots, L \}$ at iteration $k$. Then, for each layer $\ell$, we have an EMA-based clipping threshold $\tau_k^{(\ell)}$ given by $\hat{\tau}_k^{(\ell)} =\theta \hat{\tau}_{k-1}^{(\ell)} + (1 - \theta) \sigma_{\max} \left( \mG_k^{(\ell)} \right), \tau_k^{(\ell)}  = \nicefrac{\hat{\tau}^{(\ell)}_k}{(1 - \theta^{k+1})}$
    or a quantile-based clipping threshold given by $\tau_k^{(\ell)} = \gQ_q \left(\gS^{(\ell)}_{k-1} \right)$ and $\gS_k^{(\ell)} = \left( \gS^{(\ell)}_{k-1} \setminus \sigma_{\max} \left( \mG^{(\ell)}_{k - w - 1} \right) \right) \cup \sigma_{\max} \left( \mG^{(\ell)}_k \right)$. These per-layer thresholds $\{\tau_k^{(\ell)}\}_{\ell=1}^L$ are then used for spectral clipping of layer-wise gradient matrices $\{\mG_k^{(\ell)}\}_{\ell=1}^L$ using Algorithm \ref{alg:spectral_clipping_rsvd}.

    This layer-wise clipping scheme can have several practical advantages. First, it adapts the clipping threshold to each layer of the neural network, allowing aggressive clipping in layers with large singular values while avoiding unnecessary clipping in layers with small singular values. Secondly, the EMA-based and quantile-based updates of the clipping threshold can adapt during training (e.g., phase transitions), thus automatically increasing or decreasing $\tau_k^{(\ell)}$ as the spectrum of $\mG_k^{(\ell)}$ evolves.

\vspace{-.3cm}
\section{Numerical Experiments}\label{sec:numerical_exp}
\vspace{-0.2cm}
    In this section, we compare the proposed spectral clipping method with existing gradient clipping strategies. First, we evaluate the effectiveness of spectral clipping on real-world benchmarks by training ResNet-18~\citep{he2015deepresiduallearningimage} on the CIFAR-10 image classification task. Second, we show that the effectiveness of spectral clipping extends to LLM training tasks. Finally, we conduct controlled experiments on a synthetic regression task using a four-layer multilayer perceptron (MLP), where we explicitly inject low-rank, heavy-tailed noise into the stochastic gradient matrices and demonstrate the superior performance of spectral clipping. Hyperparameter tuning details for all experiments are provided in Appendix~\ref{sec:missing_experiments}. Our codes are available at \url{https://github.com/katcinskiy/spectral-clipping}.

\vspace{-.2cm}
\subsection{Training ResNet18 on CIFAR-10}\label{sec:resnet}
\vspace{-.1cm}

    
    We evaluate spectral clipping on a standard computer vision benchmark by training ResNet-18~\citep{he2015deepresiduallearningimage} on the CIFAR-10 dataset~\citep{krizhevsky2009learning}. Here we consider \algname{SGDM} optimizer with three gradient handling strategies: no clipping, standard gradient norm clipping, and spectral clipping with a constant threshold.
    We train each model for 8 epochs using the AirBench setup~\citep{jordan2024airbench}.

    For convolutional layers, we apply spectral clipping after reshaping each gradient tensor into a matrix.
    
    As shown in Figure~\ref{fig:cv_sgd_curves}, spectral clipping with a constant threshold achieves the best validation accuracy for \algname{SGDM} on CIFAR-10 using 8 epochs of training while also exhibiting a faster initial convergence rate.


    \vspace{-.3cm}

    \begin{figure}[t]
        \centering
        \begin{subfigure}[t]{0.32\linewidth}
            \centering
                \includegraphics[width=\linewidth]{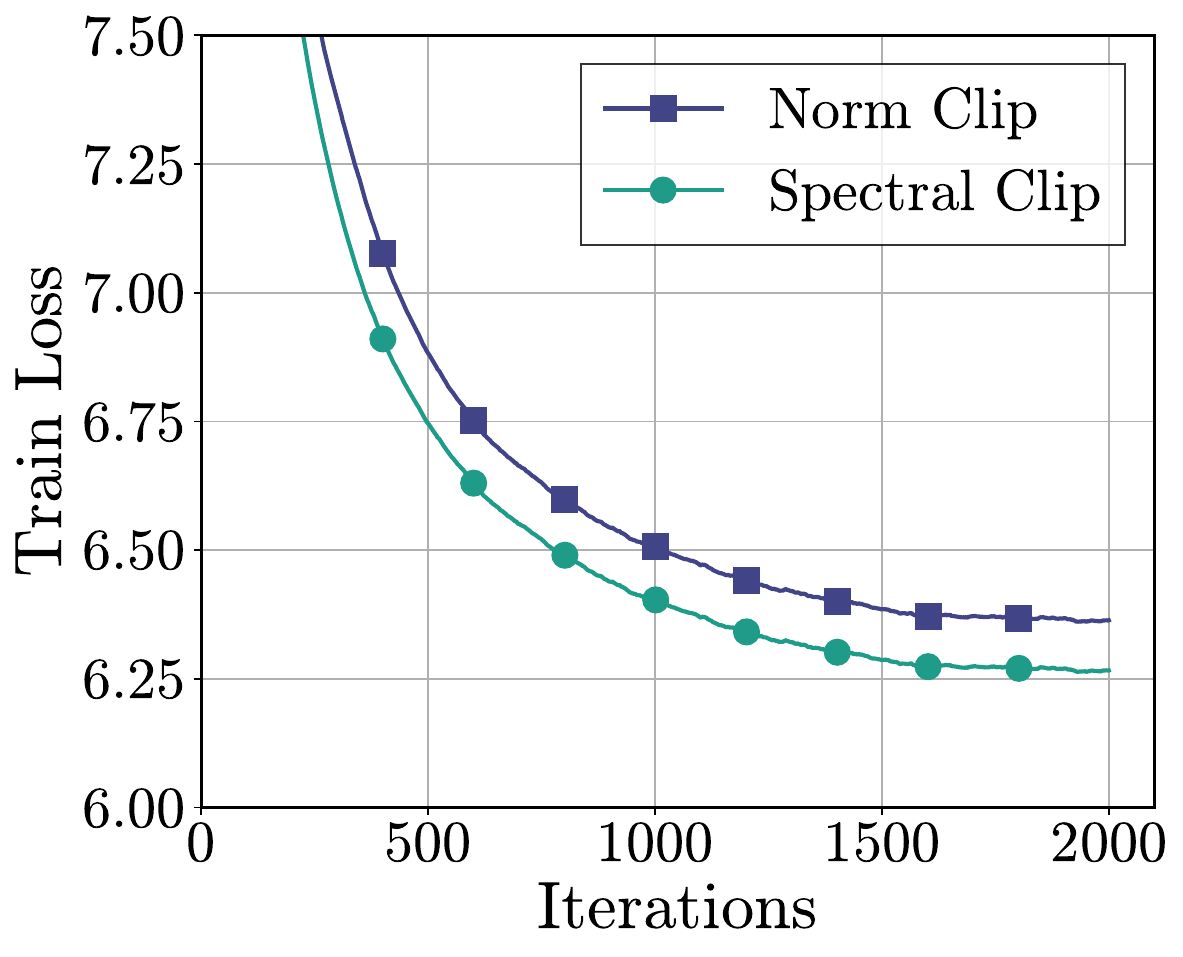}
            \caption{\small \algname{SGDM}}
            \label{fig:gpt2_sgdm}
        \end{subfigure}
        \hfill
        \begin{subfigure}[t]{0.32\linewidth}
            \centering
            \includegraphics[width=\linewidth]{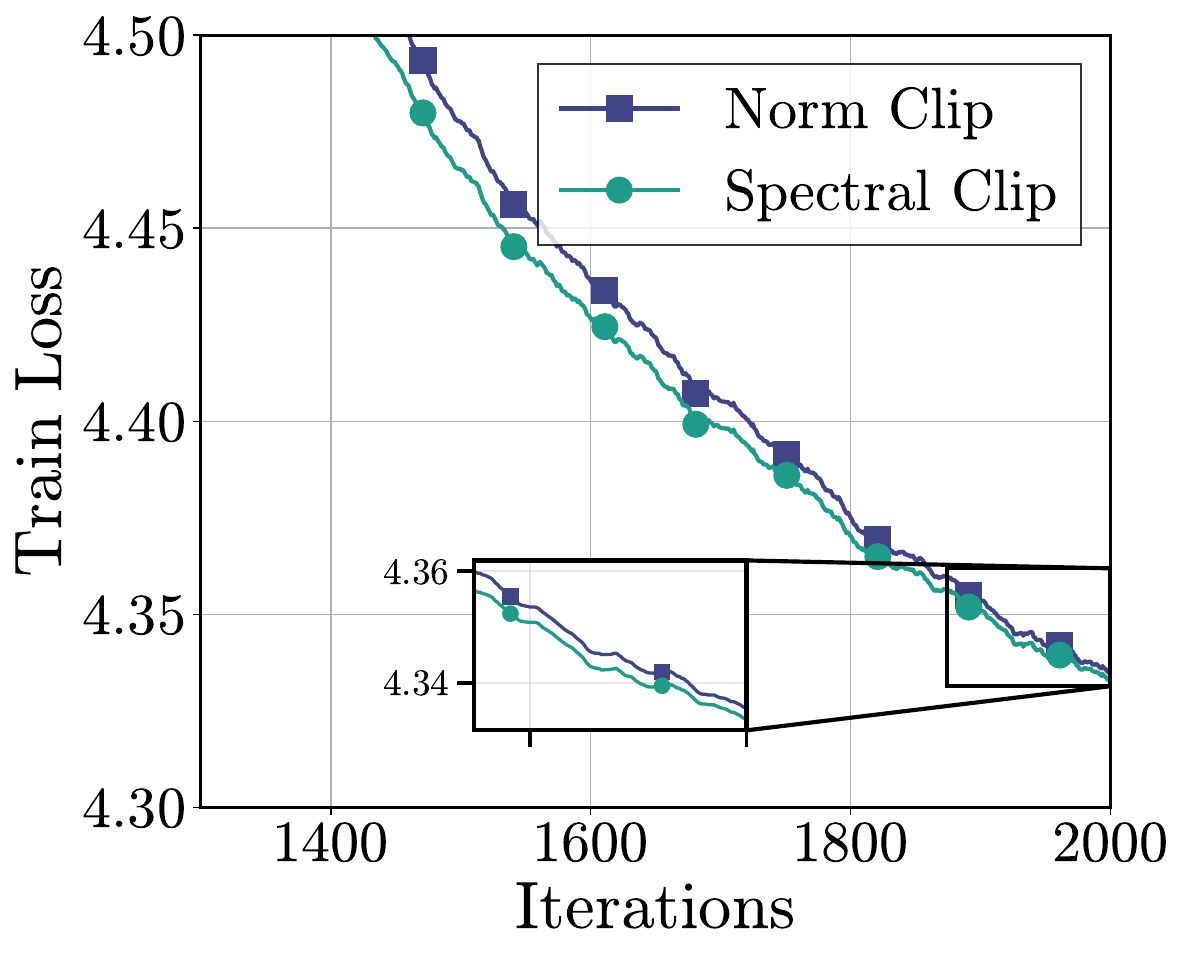}  
            \caption{\small \algname{Muon}}
            \label{fig:gpt2_muon}
        \end{subfigure}
        \hfill
        \begin{subfigure}[t]{0.32\linewidth}
            \centering
            \includegraphics[width=\linewidth]{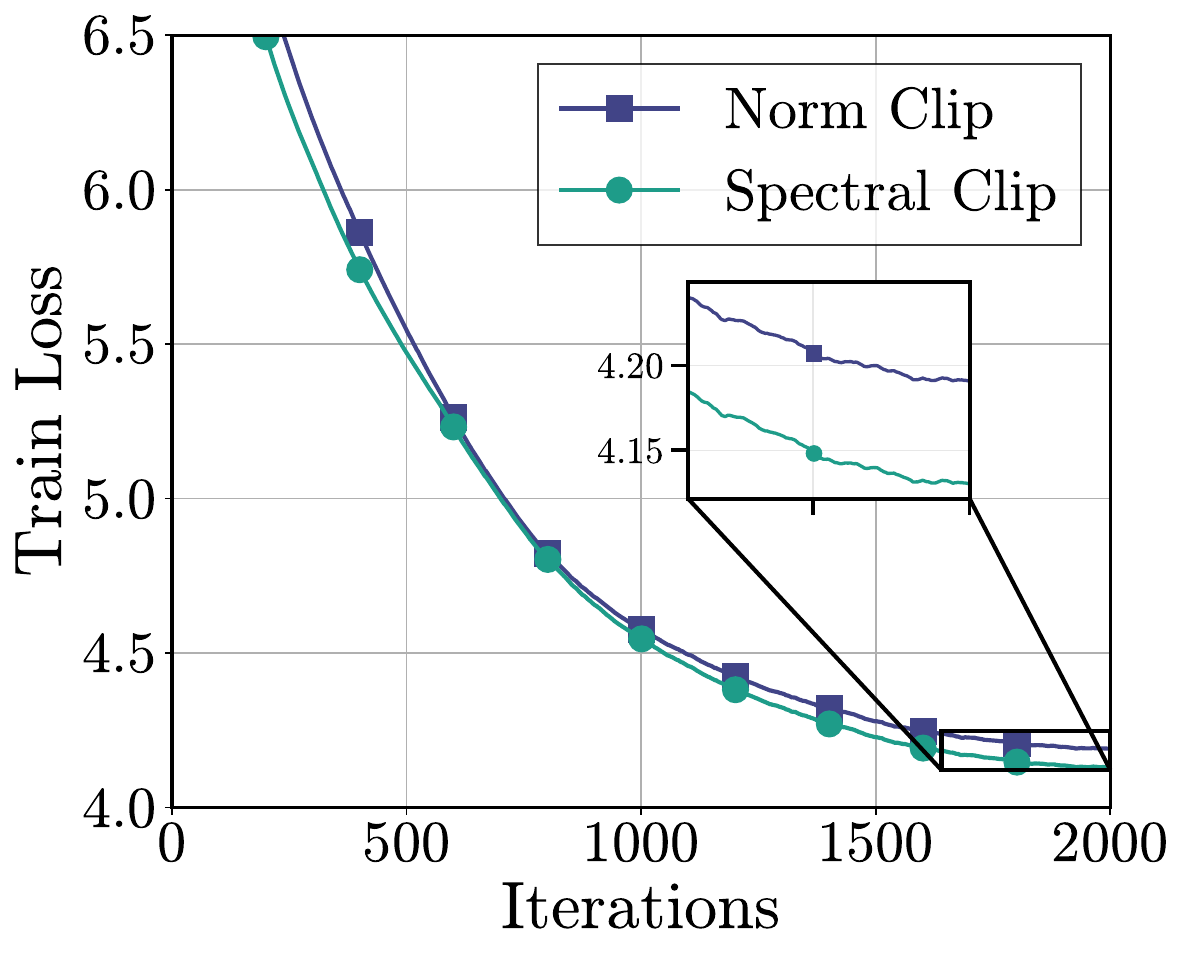}
            \caption{\small \algname{Adam}}
            \label{fig:gpt2_adam}
        \end{subfigure}

      \caption{\small Training loss of GPT-2 on the FineWeb dataset for different optimizers. Left: \algname{SGDM}. Middle: \algname{Muon}. Right: \algname{Adam}. Both the x- and y-axes are truncated to highlight the differences in performance.}
        \label{fig:gpt2}
        \vspace{-.3cm}
    \end{figure}
    \subsection{Training LLM on Shakespeare Dataset}\label{sec:llm}
    \vspace{-.1cm}
    In this section, we evaluated our approach on the NanoGPT~\citep{Karpathy2022} model, trained on the Shakespeare character-level dataset. Our results, shown in Figure~\ref{fig:nanogpt} (Appendix~\ref{sec:missing_figures}), demonstrate that spectral clipping consistently outperforms standard norm-based clipping when used with \algname{SGDM} and \algname{Muon}. 
    These findings suggest that the benefits of spectral truncation extend to language modeling tasks, providing a more stable and efficient optimization trajectory.  \emph{While spectral clipping incurs a higher per-iteration cost due to truncated SVD, its significantly faster convergence allows it to reach a lower loss in less wall-clock time than norm-based clipping as we showed in Figure~\ref{fig:nanogpt_sgdm_wall_time} for \algname{SGDM}}. For instance, after 2 hours of training, \algname{SGDM} with spectral clipping achieves a training loss that is 8.2\% lower than with norm clipping.

    \subsection{Training GPT-2 on FineWeb Dataset}\label{sec:gpt2_fineweb}
    \vspace{-.1cm}
    In this section, we evaluate our approach on the GPT-2 124M
    model~\citep{radford2019language},
    trained from scratch on the FineWeb dataset~\citep{penedo2024fineweb}.
    Our results, shown in Figure~\ref{fig:gpt2}, demonstrate that
    spectral clipping consistently outperforms standard norm-based
    clipping when used with \algname{SGDM}, \algname{Muon}, and
    \algname{Adam}. In Figure~\ref{fig:gpt2_sgdm_wall_time} (Appendix~\ref{sec:missing_figures}), we compared wall-clock time with \algname{SGDM} showing that after 45 minutes of training, \algname{SGDM} with spectral clipping achieves a training loss that is 2.6\% lower than with norm clipping.
    
    \vspace{-.25cm}
    \subsection{Training MLP with Heavy-Tailed Noise}\label{sec:mlp}
    \vspace{-.15cm}

    
    Here, we consider a $4$-layer multilayer perceptron (MLP), where the first three layers have weight matrices of size $100 \times 100$, and the final layer produces a scalar output for regression. The task is a synthetic regression problem: given an input vector $\vx \in \mathbb{R}^{100}$, the network predicts the scalar target $y = x_1 x_2 x_3$, i.e., the product of the first three coordinates of $\vx$.

    To evaluate algorithms under heavy-tailed gradient noise, we add rank-one two-sided Pareto perturbations to gradients with probability $10\%$ at each step, using shape $\alpha=1.1$ and scale $x_m=2.0$ as in~\citet{chezhegov2024clipping}.

    \begin{figure}
        \centering
        \begin{subfigure}[t]{0.32\linewidth}
            \centering
            \includegraphics[width=\linewidth]{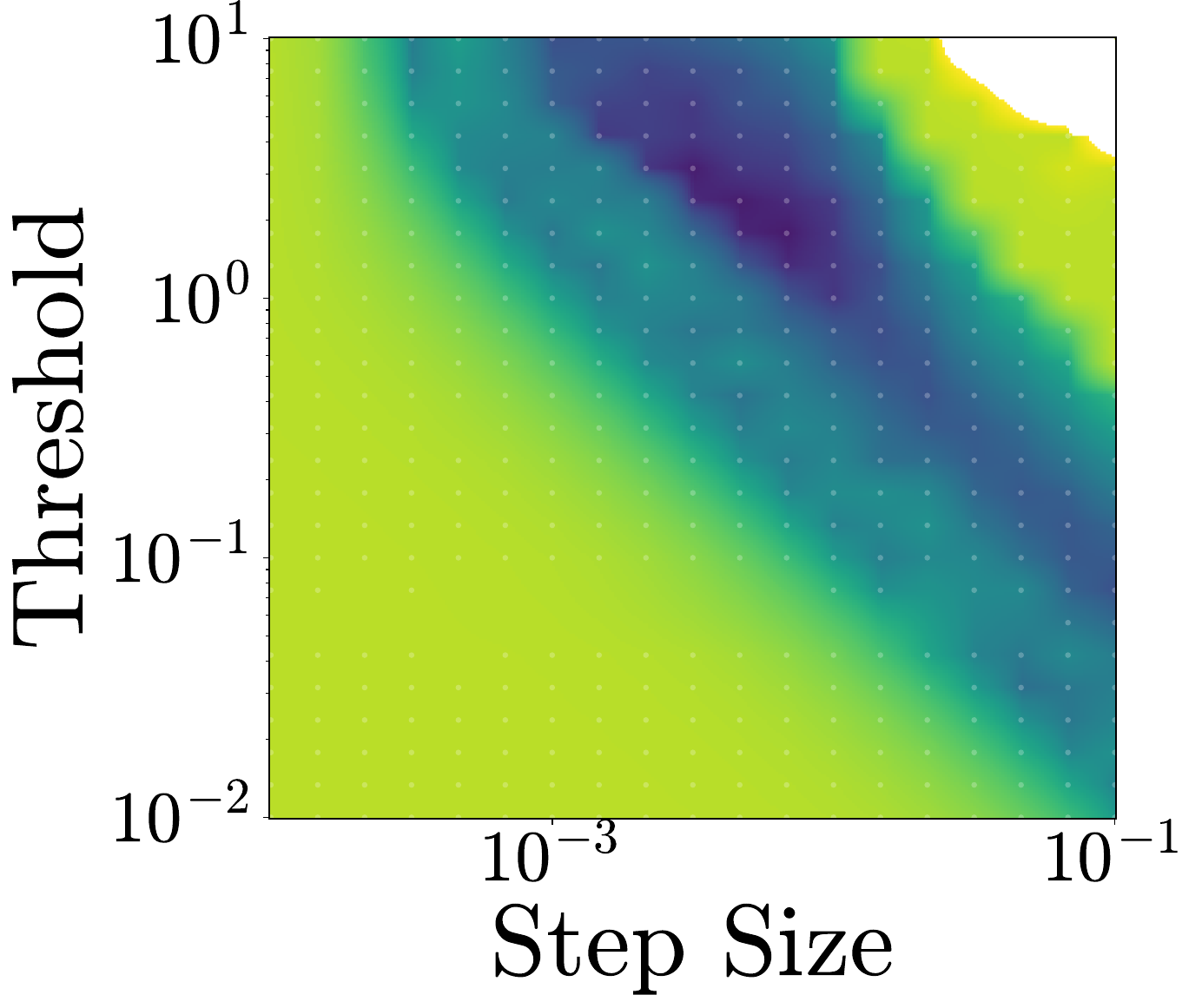}
            \caption{\small Norm clipping}
        \end{subfigure}
        \hfill
        \begin{subfigure}[t]{0.32\linewidth}
            \centering
            \includegraphics[width=\linewidth]{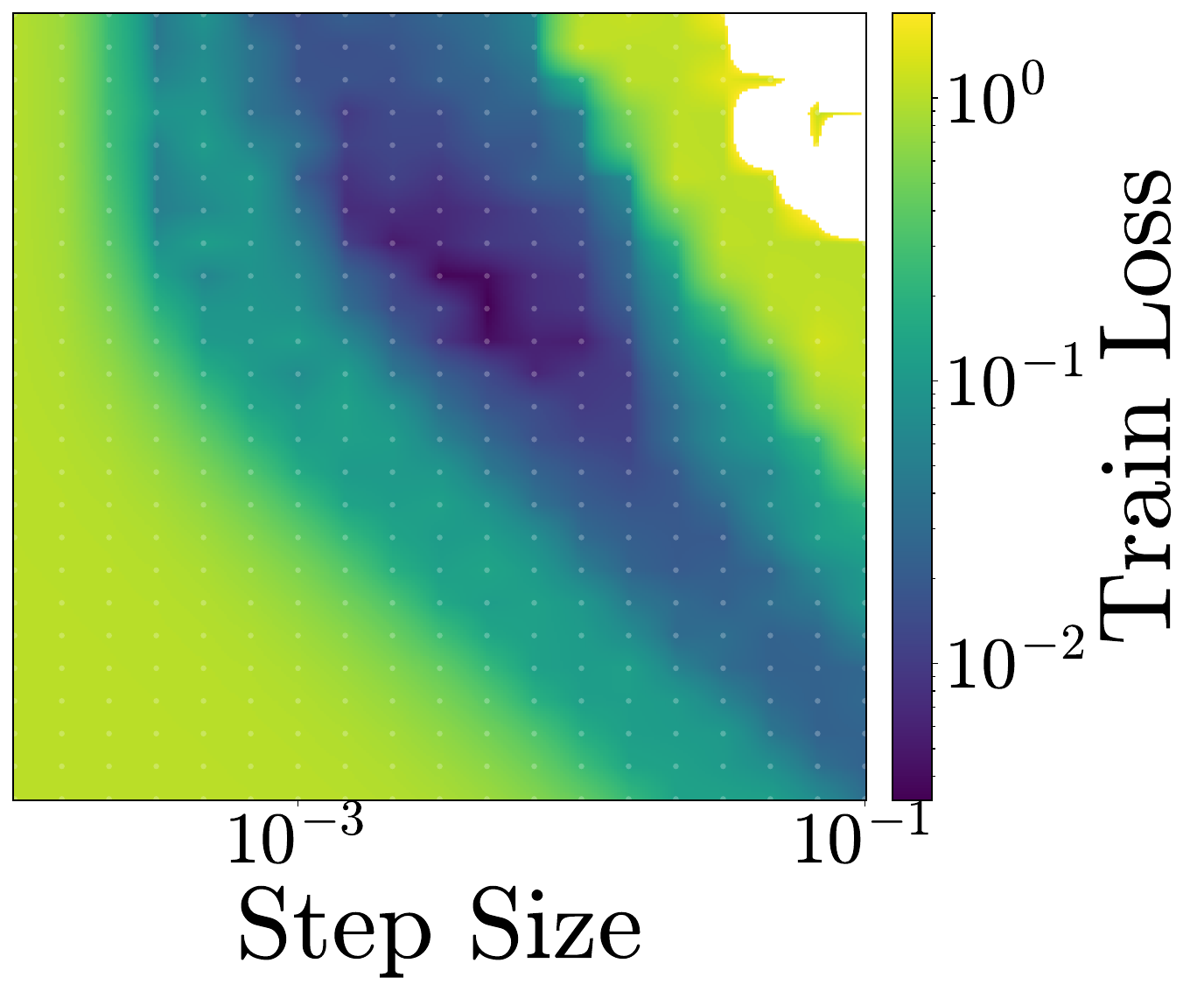}
            \caption{\small Spectral clipping}
            \label{fig:cv_sgdm_curves}
        \end{subfigure}
        \hfill
        \begin{subfigure}[t]{0.32\linewidth}
            \centering
            \includegraphics[width=\linewidth]{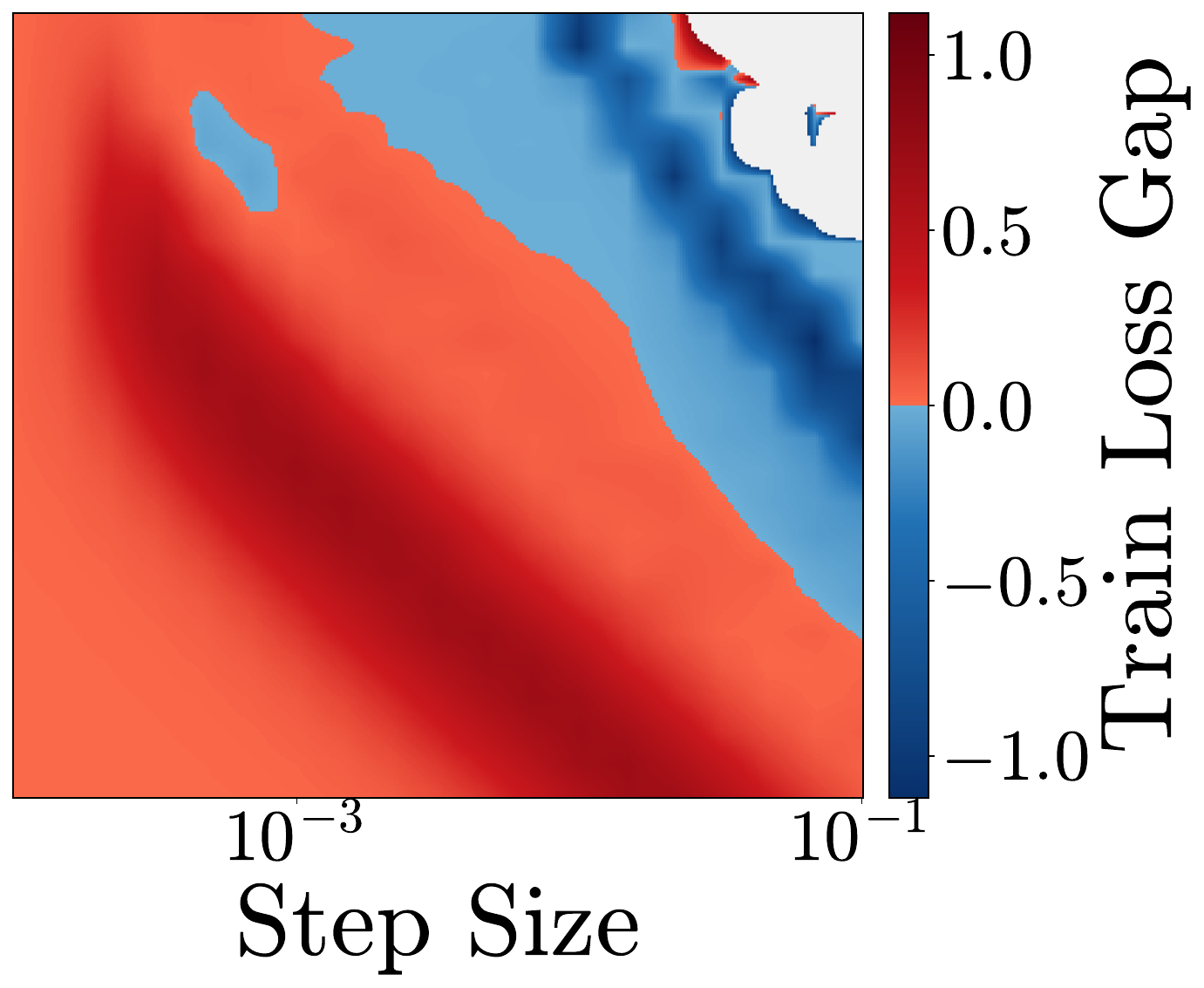}
            \caption{\small Comparison of Training Loss}
            \label{fig:cv_sgdm_curves}
        \end{subfigure}

        \caption{\small Heatmap of train loss in logarithmic scale. Left: norm clipping. Middle: spectral clipping. Right: train loss gap between norm clipping and spectral clipping. Red color shows regions where spectral clipping is better, while blue one shows regions where norm clipping is better.}
        \label{fig:heatmaps}
        \vspace{-.5cm}
    \end{figure}

    Under this setting, we compare \algname{Adam} and \algname{SGDM}. 
    For each optimizer, we evaluate multiple clipping strategies, including no clipping, norm clipping, spectral clipping with a constant threshold, spectral clipping with an EMA-based threshold, and spectral clipping with a quantile-based threshold. 

    As shown in Figure~\ref{fig:mlp_sgdm_30k}, \algname{SGDM} combined with spectral clipping consistently outperforms its norm-clipped counterpart as well as \algname{Adam}. In particular, quantile-based spectral clipping achieves the best performance, while EMA-based spectral clipping performs comparably to carefully tuned constant-threshold spectral and norm clipping. Notably, both quantile-based and EMA-based strategies eliminate the need for manual threshold tuning, \emph{highlighting the practical advantages and effectiveness of our adaptive clipping methods.}

    Additionally, we compare spectral clipping and norm clipping with \algname{SGDM} using a heatmap over threshold and step size. 
    Figure~\ref{fig:heatmaps} shows that spectral clipping achieves lower training loss across a significantly wider range of hyperparameter pairs, which indicates that spectral clipping requires less manual tuning compared to standard gradient clipping.
    


\newpage
\small{\bibliography{references,references_martin}}


\appendix
\newpage

\part*{Supplementary Material}

\tableofcontents

\newpage

\section{Further Related Works}\label{missing_theory}
        \paragraph{Heavy-Tailed Gradient Noise.}
        A growing body of empirical evidence suggests that stochastic gradient noise in modern deep learning is often heavy-tailed~\citep{simsekli2019tail, zhang2020adaptivemethodsgoodattention} rather than in settings captured by the classical bounded variance assumption (Assumption \ref{assume:BV} with $\alpha = 2$). This observation has motivated clipping-based algorithms, adaptive clipping rules, and related normalization analyses under bounded $\alpha$-moment or high-probability assumptions~\citep{gorbunov2020stochastic, zhang2020adaptivemethodsgoodattention, hubler2024gradient, chezhegov2024clipping}. In particular, \citet{zhang2020adaptivemethodsgoodattention} proposed \algname{ACClip}, an adaptive coordinate-wise clipping rule based on moving-average thresholds, and \citet{chezhegov2024clipping} showed that clipping can substantially improve \algname{AdaGrad} and \algname{Adam}-type methods in heavy-tailed regimes. 

        \paragraph{Matrix Structure and Spectral Computations.}
        For a matrix $\mA \in \R^{m \times n}$ with rank $r$, the singular value decomposition (SVD) of $\mA$ is given by $\mU \mathrm{diag}(\bm{\sigma}) \mV^\top$, where $\mU \in \R^{m \times r}$ and $\mV \in \R^{n \times r}$ have orthonormal columns and $\bm{\sigma} \in \R_+^r$ contains the singular values. The SVD plays an important role in many applications~\cite{zhang2015singular}.
        Recent optimizer design has increasingly sought to preserve matrix structure rather than flatten parameters into vectors. Matrix-aware preconditioners such as \algname{Shampoo} and \algname{SOAP} use tensor structure or eigenspace-adapted second moments to construct richer updates~\citep{gupta2018shampoo, vyas2024soap}. Orthogonalization or norm-based methods such as \algname{Muon}, \algname{Scion}, and \algname{ASGO} also operate directly on matrix-valued parameters, but they modify the update direction at every step via orthogonalization, norm-constrained LMOs, or structured preconditioning~\citep{jordan2024muon, pethick2025training, an2025asgo, shah2025practical}.
        The computation of the full decomposition is expensive and requires $\gO(mn^2)$ work. Over the years, researchers have developed several techniques to reduce this cost for specific applications.
        Randomized low-rank decompositions provide a standard route to scalable partial SVD computation~\citep{halko2011finding}, and related low-rank ideas have recently been used in optimization systems such as GaLore~\citep{zhao2024galore}. Orthogonalization-based optimizers such as \algname{Muon} typically avoid exact decompositions via Newton-Schulz \citep{higham1986computing, jordan2024muon}, while PolarExpress~\citep{amsel2025polar} develops faster GPU-friendly polar decomposition routines for this setting.
        In this work, we utilize randomized truncated SVD~\citep{halko2011finding} to clamp the singular values of the gradient matrix beyond a certain threshold.

\section{Technical Lemmas}
    In this section, we present some technical lemmas, which will be used repeatedly to prove the main results of the work in subsequent sections.

    \begin{lemma}[Young's Inequality]
    \label{lemma:young-ineq}
        For any $\mA, \mB \in \R^{m \times n}$ and $\omega > 0$ we have
        \begin{eqnarray}\label{eq:young1}
            \| \mA + \mB \|_F^2 & \leq & (1 + \omega) \| \mA \|_F^2 + (1 + \nicefrac{1}{\omega}) \| \mB\|_F^2.
        \end{eqnarray}
        In particular, for $\omega = 1$, we have
        \begin{eqnarray}\label{eq:young2}
            \| \mA + \mB \|_F^2 & \leq & 2 \| \mA\|_F^2 + 2 \| \mB\|_F^2.
        \end{eqnarray}
    \end{lemma}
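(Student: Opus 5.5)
The idea is to expand the squared Frobenius norm through the inner product $\langle \mA, \mB\rangle = \operatorname{tr}(\mA^\top \mB)$ and then bound the cross term with the Cauchy--Schwarz inequality combined with a weighted AM--GM inequality. Concretely, I would first write
\begin{equation*}
\| \mA + \mB \|_F^2 = \| \mA \|_F^2 + 2 \langle \mA, \mB \rangle + \| \mB \|_F^2 .
\end{equation*}
This identity holds because the Frobenius norm is induced by the trace inner product on $\R^{m \times n}$; equivalently, it is the usual Euclidean identity applied to the vectorizations of $\mA$ and $\mB$.

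The only step with any content is the cross term. By Cauchy--Schwarz, $2\langle \mA, \mB\rangle \le 2\|\mA\|_F \|\mB\|_F$. Applying $2ab \le \omega a^2 + \omega^{-1} b^2$, which follows from $(\sqrt{\omega}\, a - b/\sqrt{\omega})^2 \ge 0$, with $a = \|\mA\|_F$ and $b = \|\mB\|_F$ gives
\begin{equation*}
2\langle \mA, \mB\rangle \le \omega \|\mA\|_F^2 + \omega^{-1} \|\mB\|_F^2 .
\end{equation*}
One can skip Cauchy--Schwarz entirely by expanding
\begin{equation*}
\| \sqrt{\omega}\, \mA - \mB/\sqrt{\omega} \|_F^2 \ge 0 ,
\end{equation*}
which yields the same inequality directly.

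Substituting this bound into the expansion gives \eqref{eq:young1}. Setting $\omega = 1$ then gives \eqref{eq:young2}. None of these steps is a real obstacle.
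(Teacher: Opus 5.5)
Your proof is correct and complete: expanding the squared Frobenius norm and bounding the cross term by $2\langle \mA,\mB\rangle \le \omega\|\mA\|_F^2 + \omega^{-1}\|\mB\|_F^2$ (via $\|\sqrt{\omega}\,\mA - \mB/\sqrt{\omega}\|_F^2 \ge 0$) gives \eqref{eq:young1}, and $\omega = 1$ gives \eqref{eq:young2}. The paper states this lemma as a standard fact without proof. Your cross-term bound is exactly twice the paper's Lemma~\ref{lemma:CS2}, so your argument is the canonical one the paper implicitly relies on.
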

    
    \begin{lemma}[Cauchy-Schwarz Inequality]
        For any $\mA, \mB \in \R^{m \times n}$, we have
        \begin{equation}\label{eq:CS1}
            \la \mA, \mB\ra \leq \| \mA\|_F \| \mB\|_F.
        \end{equation}
    \end{lemma}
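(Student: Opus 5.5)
The statement is the Cauchy--Schwarz inequality for the Frobenius inner product $\la \mA, \mB\ra = \mathrm{tr}(\mA^\top \mB) = \sum_{i,j} \mA_{ij}\mB_{ij}$. My plan is to reduce it to the vector case, or equivalently to prove it directly from nonnegativity of a quadratic.

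First, observe that the map $\mathrm{vec}:\R^{m\times n}\to\R^{mn}$, which stacks entries, is a linear isometry. It satisfies $\la \mA,\mB\ra = \la \mathrm{vec}(\mA),\mathrm{vec}(\mB)\ra$ and $\|\mA\|_F = \|\mathrm{vec}(\mA)\|_2$. Hence it suffices to prove $\la \va,\vb\ra \le \|\va\|\|\vb\|$ for vectors. To keep the argument self-contained, I would argue directly on matrices:
\begin{enumerate}
\item If $\mB = 0$, both sides vanish and there is nothing to prove.
\item Otherwise, for every $t\in\R$, consider the quadratic
\[
0 \le \|\mA - t\mB\|_F^2 = \|\mA\|_F^2 - 2t\la\mA,\mB\ra + t^2\|\mB\|_F^2 .
\]
\item Choose $t = \la \mA,\mB\ra/\|\mB\|_F^2$. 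This gives $0 \le \|\mA\|_F^2 - \la\mA,\mB\ra^2/\|\mB\|_F^2$, so $\la\mA,\mB\ra^2 \le \|\mA\|_F^2\|\mB\|_F^2$.
\item Take square roots. This yields $|\la\mA,\mB\ra| \le \|\mA\|_F\|\mB\|_F$, which implies \eqref{eq:CS1}.
\end{enumerate}

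The only slightly delicate point is step 2: the expansion of $\|\mA - t\mB\|_F^2$ uses bilinearity and symmetry of the trace inner product, together with $\la\mB,\mB\ra = \|\mB\|_F^2$. Both facts follow immediately from the entrywise formula. I expect no real obstacle here.
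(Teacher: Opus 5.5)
Your proof is correct: applying $t=\la\mA,\mB\ra/\|\mB\|_F^2$ to the nonnegative quadratic $\|\mA-t\mB\|_F^2$ gives $\la\mA,\mB\ra^2\le\|\mA\|_F^2\|\mB\|_F^2$, and you handle the case $\mB=0$ separately. The paper gives no proof and lists this among standard technical facts; your argument (or the equivalent reduction to the Euclidean case via the isometry $\mathrm{vec}$) is the standard justification it implicitly relies on, with $\la\cdot,\cdot\ra$ taken as the trace inner product $\mathrm{tr}(\mA^\top\mB)$, as you assumed.
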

    
    \begin{lemma}
    \label{lemma:CS2}
        For any $\mA, \mB \in \R^{m \times n}$ and $\omega > 0$, we have
        \begin{eqnarray}\label{eq:CS2}
            \la \mA, \mB \ra & \leq & \frac{\omega}{2} \| \mA \|_F^2 + \frac{1}{2 \omega} \| \mB \|_F^2.
        \end{eqnarray}
    \end{lemma}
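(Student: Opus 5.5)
The plan is to derive the inequality from the nonnegativity of a suitably weighted squared Frobenius norm. No earlier result beyond the bilinearity of the Frobenius inner product $\la \mA, \mB \ra = \operatorname{tr}(\mA^\top \mB)$ is needed, although one could also route the argument through the Cauchy--Schwarz inequality \eqref{eq:CS1}.

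\textbf{Main step.} Fix $\omega > 0$ and consider the matrix $\sqrt{\omega}\,\mA - \frac{1}{\sqrt{\omega}}\mB \in \R^{m \times n}$. Its squared Frobenius norm is nonnegative. Expanding it by bilinearity and symmetry of the inner product gives
\begin{equation*}
0 \leq \left\| \sqrt{\omega}\,\mA - \tfrac{1}{\sqrt{\omega}}\mB \right\|_F^2 = \omega \| \mA \|_F^2 - 2 \la \mA, \mB \ra + \tfrac{1}{\omega} \| \mB \|_F^2 .
\end{equation*}
Moving the cross term to the left-hand side and dividing by $2$ yields exactly \eqref{eq:CS2}.

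\textbf{Alternative route.} Apply \eqref{eq:CS1} to get $\la \mA, \mB \ra \leq \| \mA \|_F \| \mB \|_F$. Then use the scalar AM--GM inequality $ab \leq \frac{\omega}{2}a^2 + \frac{1}{2\omega}b^2$ for $a, b \geq 0$. This scalar inequality itself follows from $(\sqrt{\omega}\,a - b/\sqrt{\omega})^2 \geq 0$.

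\textbf{Obstacles.} I do not expect any real difficulty. The only point requiring care is the sign and scaling of the cross term in the expansion, namely that the factor $2$ is what produces the $\frac{1}{2}$ coefficients. It is also worth noting that the same computation with a $+$ sign inside the norm gives the analogous bound for $-\la \mA, \mB \ra$.
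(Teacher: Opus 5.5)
Your proof is correct: expanding $0 \le \|\sqrt{\omega}\,\mA - \mB/\sqrt{\omega}\|_F^2 = \omega\|\mA\|_F^2 - 2\la \mA, \mB \ra + \omega^{-1}\|\mB\|_F^2$ and dividing by $2$ gives the claim directly. The paper states this lemma without proof as a standard fact, applied only with $\omega = 1$ in the descent steps of its two convergence theorems, and your main step is the standard justification for it.
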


    \begin{lemma}[Jensen's Inequality] 
        For a random variable $\rva \in \R$ and a function $g: \R^{m \times n} \to \R$, we have the following:
        \begin{itemize}
            \item if $g$ is convex, then
                \begin{eqnarray}\label{eq:jensen_convex}
                    g \left( \Exp{\rva} \right) & \leq & \Exp{g(\rva)},
                \end{eqnarray}
            \item and if $g$ is concave, then 
                \begin{eqnarray}\label{eq:jensen_concave}
                    g \left( \Exp{\rva} \right) & \geq & \Exp{g(\rva)}.
                \end{eqnarray}
        \end{itemize}
    \end{lemma}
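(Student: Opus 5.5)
We read the statement with the random variable $\rva$ taking values in the domain of $g$, namely $\R^{m \times n}$ (the scalar case is $m=n=1$). We assume $\Exp{\|\rva\|_F} < \infty$, so that $\bm{\mu} \eqdef \Exp{\rva}$ is well defined, and that $\Exp{g(\rva)}$ exists. The plan is the standard supporting-hyperplane argument. We prove \eqref{eq:jensen_convex} directly and then obtain \eqref{eq:jensen_concave} by applying it to $-g$.

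\textbf{Step 1 (a subgradient at the mean).} For convex $g:\R^{m\times n}\to\R$ we will produce a matrix $\mS$ such that
\begin{equation*}
g(\mX) \geq g(\bm{\mu}) + \langle \mS, \mX - \bm{\mu}\rangle \quad \text{for all } \mX \in \R^{m \times n}.
\end{equation*}
To do this, note that the epigraph $\{(\mX,t): t \geq g(\mX)\}$ is a convex subset of $\R^{m\times n}\times\R$. The point $(\bm{\mu}, g(\bm{\mu}))$ lies on its boundary. The supporting hyperplane theorem (finite dimensions) gives a nonzero pair $(\mS', s_0)$ with
\begin{equation*}
\langle \mS', \mX\rangle + s_0 t \geq \langle \mS', \bm{\mu}\rangle + s_0 g(\bm{\mu})
\end{equation*}
for all $(\mX,t)$ in the epigraph. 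Letting $t\to\infty$ shows $s_0 \geq 0$. If $s_0 = 0$, then $\langle \mS', \mX - \bm{\mu}\rangle \geq 0$ for all $\mX$, which forces $\mS' = 0$; this contradicts nontriviality. So $s_0 > 0$, and setting $\mS = -\mS'/s_0$ and $t = g(\mX)$ gives the claimed inequality. The key point is that $g$ is finite on the whole space, so $\bm{\mu}$ is an interior point of the domain and no vertical supporting hyperplane can occur.

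\textbf{Step 2 (take expectations).} Substitute $\mX = \rva$ in the inequality of Step 1 and take expectations. The right-hand side is affine in $\rva$ and integrable, so linearity of expectation gives
\begin{equation*}
\Exp{g(\rva)} \geq g(\bm{\mu}) + \langle \mS, \Exp{\rva} - \bm{\mu}\rangle = g(\Exp{\rva}).
\end{equation*}
The same affine lower bound shows that the negative part of $g(\rva)$ is integrable. Hence $\Exp{g(\rva)}$ is well defined in $(-\infty,\infty]$, and the inequality is meaningful even if this expectation is $+\infty$.

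\textbf{Step 3 (concave case).} If $g$ is concave, then $-g$ is convex. Applying Step 2 to $-g$ gives $-g(\Exp{\rva}) \leq \Exp{-g(\rva)}$, which is \eqref{eq:jensen_concave}.

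The only nontrivial ingredient is Step 1, the existence of a non-vertical supporting hyperplane. If we prefer to avoid the separation theorem, we could instead take $\mS$ to be a limit of difference quotients, using the monotonicity of these quotients for convex functions. Either way, finiteness of $g$ on all of $\R^{m\times n}$ is what guarantees that a subgradient exists at $\bm{\mu}$.
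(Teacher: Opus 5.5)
Your proof is correct. The paper gives no proof of this lemma; it is listed among the standard technical tools and used as a black box, so there is no competing argument to compare against. Your non-vertical supporting hyperplane at $(\bm{\mu}, g(\bm{\mu}))$, followed by integrating the resulting affine minorant, is the textbook proof. Your handling of the details is sound: ruling out $s_0=0$ via finiteness of $g$ on all of $\R^{m\times n}$, and showing the negative part of $g(\rva)$ is integrable.

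Reading $\rva$ as $\R^{m\times n}$-valued is the right repair of the type mismatch in the statement. It also covers what the paper actually uses:
\begin{itemize}
\item $\|\Exp{\cdot}\|_F \le \Exp{\|\cdot\|_F}$ for matrix-valued variables;
\item $\Exp{Y}\le (\Exp{Y^\alpha})^{1/\alpha}$ for $Y=\|\mG_k-\nabla f(\mX_k)\|_F$, via the everywhere-finite convex function $g(t)=|t|^\alpha$.
\end{itemize}
The paper applies both conditionally on $\mX_k$. That version follows from yours applied to a regular conditional distribution, or by writing $g$ as a supremum of countably many affine minorants.

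One caveat concerns only your closing aside. For $mn>1$, difference quotients give the directional derivative $\bm{D}\mapsto g'(\bm{\mu};\bm{D})$, which is merely sublinear. Producing a linear minorant of it, i.e.\ an actual subgradient, needs an extra Hahn--Banach-type extension step. So that route is genuinely more elementary only in the scalar case.
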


    \begin{lemma}[H\"{o}lder's Inequality] 
        Suppose $p, q \in (1, \infty)$ with $\frac{1}{p} + \frac{1}{q} = 1$ and $\rva, \rvb \in \R$ are random variables. Then we have
        \begin{eqnarray}\label{eq:holder}
            \Exp{| \rva \rvb |} & \leq & \Exp{| \rva |^p}^{\frac{1}{p}} \Exp{| \rvb |^q}^{\frac{1}{q}}.
        \end{eqnarray}
    \end{lemma}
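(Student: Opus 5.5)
We prove H\"{o}lder's inequality \eqref{eq:holder} by the standard normalization argument, reducing it to a pointwise scalar inequality and then taking expectations. Write $A \eqdef \Exp{|\rva|^p}^{\frac{1}{p}}$ and $B \eqdef \Exp{|\rvb|^q}^{\frac{1}{q}}$, with values in $[0,\infty]$.

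First we dispose of the degenerate cases.
\begin{itemize}
\item If $A = \infty$ or $B = \infty$ while the other factor is positive, the right-hand side is $+\infty$ and there is nothing to prove.
\item If $A = 0$, then $|\rva|^p = 0$ almost surely, so $\rva = 0$ almost surely. Hence $|\rva\rvb| = 0$ almost surely and the left-hand side vanishes. The case $B=0$ is symmetric.
\end{itemize}
Using the convention $0\cdot\infty = 0$, this covers every situation except $0 < A, B < \infty$. We assume this from now on.

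The key scalar tool is the weighted AM--GM (Young) inequality for products: for $a, b \geq 0$ and conjugate exponents $p,q$,
\begin{equation*}
ab \leq \frac{a^p}{p} + \frac{b^q}{q}.
\end{equation*}
We derive it from concavity of the logarithm, via the concave case \eqref{eq:jensen_concave} of Jensen applied to a two-point distribution. If $a,b>0$, then
\begin{equation*}
\log\left(\tfrac{1}{p}a^p + \tfrac{1}{q}b^q\right) \geq \tfrac{1}{p}\log a^p + \tfrac{1}{q}\log b^q = \log(ab),
\end{equation*}
and exponentiating gives the claim. If $a=0$ or $b=0$, the inequality is trivial.

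We apply this pointwise with $a = |\rva|/A$ and $b = |\rvb|/B$:
\begin{equation*}
\frac{|\rva\rvb|}{AB} \leq \frac{|\rva|^p}{pA^p} + \frac{|\rvb|^q}{qB^q}.
\end{equation*}
Taking expectations and using linearity and monotonicity of expectation for nonnegative variables gives
\begin{equation*}
\frac{\Exp{|\rva\rvb|}}{AB} \leq \frac{\Exp{|\rva|^p}}{pA^p} + \frac{\Exp{|\rvb|^q}}{qB^q} = \frac{1}{p} + \frac{1}{q} = 1.
\end{equation*}
Multiplying through by $AB$ yields \eqref{eq:holder}.

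There is no real obstacle. The only points needing care are the degenerate and infinite-moment cases handled at the start, and checking that expectations of nonnegative variables are well defined, possibly infinite, so that monotonicity applies without integrability assumptions.
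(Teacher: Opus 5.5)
Your proof is correct and complete. The degenerate cases cover everything outside $0<A,B<\infty$: $A=0$ or $B=0$ forces $\rva\rvb=0$ almost surely, and an infinite factor paired with a positive one makes the right-hand side $+\infty$. On the remaining case, Young's inequality $ab\le a^p/p+b^q/q$ applied to the normalized variables, followed by taking expectations, gives the result. Strictly, you do not need the paper's Jensen lemma, whose hypotheses are stated for functions on all of $\R^{m\times n}$; the two-point inequality for $\log$ on $(0,\infty)$ is just the definition of concavity.

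The paper gives no proof of this lemma and lists it only as a standard tool, so there is no proof to compare your approach against.

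The paper does apply the lemma \emph{conditionally} on $\mX_k$, with $\rva=\|\mG_k-\nabla f(\mX_k)\|_F$, $\rvb$ an indicator, $p=\alpha$ and $q=\alpha/(\alpha-1)$. Your argument carries over directly to that setting. The normalizers $A,B$ become $\mX_k$-measurable and can be pulled out of the conditional expectation, and the degenerate cases are handled on the corresponding $\mX_k$-measurable events.
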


    \begin{lemma}[Markov's Inequality] 
        For the random variable $\rva \in \R$ and a positive number $\varepsilon > 0$, we have
        \begin{eqnarray}\label{eq:markov}
            \Pr[|\rva| \geq \varepsilon] & \leq & \frac{\Exp{|\rva|}}{\varepsilon}.
        \end{eqnarray}
    \end{lemma}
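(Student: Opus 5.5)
We prove Markov's inequality with the standard indicator-function comparison, using only monotonicity and linearity of expectation. Fix $\varepsilon > 0$ and let $\mathbf{1}_{E}$ denote the indicator of the event $E \eqdef \{|\rva| \geq \varepsilon\}$. The key pointwise inequality is
\begin{equation*}
    \varepsilon \, \mathbf{1}_{E} \leq |\rva|,
\end{equation*}
which holds for every outcome, so we check it in two cases. On $E$ the left side equals $\varepsilon$, which is at most $|\rva|$ by the definition of $E$. Off $E$ the left side is $0$, which is at most $|\rva|$ since $|\rva| \geq 0$.

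Next we take expectations of both sides. Monotonicity of expectation gives $\Exp{\varepsilon \mathbf{1}_E} \leq \Exp{|\rva|}$. By linearity, $\Exp{\varepsilon \mathbf{1}_E} = \varepsilon \Exp{\mathbf{1}_E} = \varepsilon \Pr[|\rva| \geq \varepsilon]$. Dividing by $\varepsilon > 0$ yields \eqref{eq:markov}.

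There is no real obstacle here. The only point needing a remark is the case $\Exp{|\rva|} = \infty$, where the bound holds trivially. Otherwise every expectation involved is finite and nonnegative, so the argument applies directly. In the later analysis this lemma will be applied conditionally on $\mX_k$, typically to $\rva = \|\mG_k - \nabla f(\mX_k)\|_F^\alpha$, to bound the probability that clipping is active. The same proof goes through verbatim with conditional expectations, since conditional expectation is also monotone and linear.
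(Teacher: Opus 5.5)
Your proof is correct. The paper lists this lemma among its technical lemmas without proof, treating it as a standard fact, and your argument (the pointwise bound $\varepsilon \mathbf{1}_{\{|\rva|\ge\varepsilon\}} \le |\rva|$, then monotonicity and linearity of expectation) is the standard proof it implicitly relies on; your remark about the conditional version given $\mX_k$ also matches how the paper later uses it.
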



\section{Convergence Analysis}\label{sec:convergence_analysis}
    \begin{lemma}\label{lemma:bound_2_moment} Under Assumption \ref{assume:BV}, we have
        \begin{eqnarray*}
            \Exp{\| \gC_{\tau}(\mG_k)\|_F^2 \mid \mX_k } 
            & \leq & 2 \|\nabla f(\mX_k) \|_F^2 + 2^{4 - \alpha} d^{1 - \nicefrac{\alpha}{2}} \tau^{2-\alpha} \sigma^\alpha .
        \end{eqnarray*}
    \end{lemma}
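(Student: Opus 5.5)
The plan is to bound $\|\gC_\tau(\mG_k)\|_F$ deterministically by a quantity that only sees the noise through a truncated power, and then take expectations using Assumption~\ref{assume:BV}. Write $\mG_k = \nabla f(\mX_k) + \boldsymbol{\xi}_k$ with $\boldsymbol{\xi}_k \eqdef \mG_k - \nabla f(\mX_k)$, and set $c \eqdef \sqrt{d}\,\tau$.

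\textbf{Step 1 (deterministic bound).} From the SVD definition, $\|\gC_\tau(\mG_k)\|_F^2 = \sum_{i=1}^d \min\{\bm{\sigma}_k[i],\tau\}^2$. This is at most $\sum_i \bm{\sigma}_k[i]^2 = \|\mG_k\|_F^2$, and also at most $d\tau^2$. Hence
\[
\|\gC_\tau(\mG_k)\|_F \le \min\{\|\mG_k\|_F, c\}.
\]

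\textbf{Step 2 (split off the true gradient).} Let $a = \|\nabla f(\mX_k)\|_F$ and $b = \|\boldsymbol{\xi}_k\|_F$. The triangle inequality gives $\|\mG_k\|_F \le a+b$. For nonnegative $a,b,c$ one has $\min\{a+b,c\} \le \min\{a,c\} + \min\{b,c\}$; to see this, check the cases $a \ge c$, $b \ge c$, and both smaller. Combining this with \eqref{eq:young2} in scalar form yields
\[
\|\gC_\tau(\mG_k)\|_F^2 \le 2a^2 + 2\min\{b,c\}^2 .
\]

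\textbf{Step 3 (truncated moment).} For $\alpha\in(1,2]$ and $b,c\ge 0$ I will use
\[
\min\{b,c\}^2 \le b^\alpha c^{2-\alpha}.
\]
If $b\le c$, then $b^2 = b^\alpha b^{2-\alpha} \le b^\alpha c^{2-\alpha}$. If $b>c$, then $c^2 = c^\alpha c^{2-\alpha} \le b^\alpha c^{2-\alpha}$. Taking conditional expectation given $\mX_k$ and applying Assumption~\ref{assume:BV} gives
\[
\Exp{\|\gC_\tau(\mG_k)\|_F^2\mid \mX_k} \le 2\|\nabla f(\mX_k)\|_F^2 + 2\, d^{1-\nicefrac{\alpha}{2}} \tau^{2-\alpha}\sigma^\alpha .
\]
Since $2 \le 2^{4-\alpha}$ for $\alpha\le 2$, this implies the stated bound.

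The only nonroutine point is Step 2. One must avoid expanding $\|\nabla f + \boldsymbol{\xi}\|_F^2$ before truncating, since that would leave an untruncated $\|\boldsymbol{\xi}_k\|_F^2$ term, which may have infinite expectation when $\alpha<2$. The subadditivity of $t\mapsto\min\{t,c\}$ moves the truncation onto the noise alone, and this is what makes the argument work. The looser constant $2^{4-\alpha}$ in the statement suggests the authors instead split into the cases $\|\mG_k\|_F \le c$ and $\|\mG_k\|_F > c$ and used Markov's inequality \eqref{eq:markov}. The route above gives an equal or better constant, so it suffices.
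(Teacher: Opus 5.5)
Your proof is correct, and it takes a different route from the paper's.

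\textbf{The paper's route.} It pivots on $\gC_\tau(\nabla f(\mX_k))$. By \eqref{eq:young2} it writes $\|\gC_\tau(\mG_k)\|_F^2 \le 2\|\gC_\tau(\mG_k)-\gC_\tau(\nabla f(\mX_k))\|_F^2 + 2\|\nabla f(\mX_k)\|_F^2$, and then bounds the difference by $\min\{\|\mG_k-\nabla f(\mX_k)\|_F,\, 2\sqrt{d}\tau\}$. Next it splits on the event $\|\mG_k-\nabla f(\mX_k)\|_F \ge 2\sqrt{d}\tau$. This is a split on the noise, not on $\|\mG_k\|_F$ as you guessed. Markov's inequality handles the tail piece and the truncated $\alpha$-moment handles the bulk piece. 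Each piece contributes $2(2\sqrt{d}\tau)^{2-\alpha}\sigma^\alpha$, which is where $2^{4-\alpha}$ comes from. This route needs $\gC_\tau$ to be nonexpansive in the Frobenius norm. The paper uses this without comment; it holds because $\gC_\tau$ is the Frobenius projection onto the convex set $\{\mA : \|\mA\|_{\mathrm{op}}\le\tau\}$.

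\textbf{What your route buys.} You need only the pointwise bound $\|\gC_\tau(\mG_k)\|_F\le\min\{\|\mG_k\|_F,\sqrt{d}\tau\}$. Subadditivity of $t\mapsto\min\{t,c\}$ moves the truncation onto the noise alone. The single inequality $\min\{b,c\}^2\le b^\alpha c^{2-\alpha}$ then replaces both the indicator split and Markov's inequality. As a result you truncate at $\sqrt{d}\tau$ rather than $2\sqrt{d}\tau$, and you get the constant $2$ instead of $2^{4-\alpha}$. That is a factor $2^{3-\alpha}\in[2,4)$ better. Your argument also applies to any operator satisfying that pointwise bound, with no Lipschitz property needed. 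The paper's pivot-plus-nonexpansiveness template is reused in Case~2 of the proof of Theorem~\ref{theorem:main_theorem}, which is presumably why it was chosen there. For this lemma, your argument is shorter and sharper.
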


    \begin{proof}
        \begin{eqnarray*}
            \Exp{\| \gC_{\tau}(\mG_k)\|_F^2 \mid \mX_k } & \leq & 2 \Exp{\| \gC_{\tau}(\mG_k) - \gC_{\tau} (\nabla f(\mX_k))\|_F^2 \mid \mX_k } + 2 \Exp{\| \gC_{\tau}(\nabla f(\mX_k))\|_F^2 \mid \mX_k } \\
            & \leq & 2 \Exp{\| \gC_{\tau}(\mG_k) - \gC_{\tau} (\nabla f(\mX_k))\|_F^2 \mid \mX_k } + 2 \|\nabla f(\mX_k) \|_F^2 \\
            & = & 2 \Exp{\| \gC_{\tau}(\mG_k) - \gC_{\tau} (\nabla f(\mX_k))\|_F^2 \mathbf{1}\left\{ \| \mG_k - \nabla f(\mX_k) \| \geq 2 \sqrt{d} \tau \right\} \mid \mX_k } \\
            && + 2 \Exp{\| \gC_{\tau}(\mG_k) - \gC_{\tau} (\nabla f(\mX_k))\|_F^2 \mathbf{1}\left\{ \| \mG_k - \nabla f(\mX_k) \| < 2 \sqrt{d} \tau \right\} \mid \mX_k } \\
            && + 2 \|\nabla f(\mX_k) \|_F^2 \\
            & \leq & 8 d \tau^2 \Exp{\mathbf{1}\left\{ \| \mG_k - \nabla f(\mX_k) \| \geq 2 \sqrt{d} \tau \right\} \mid \mX_k } \\
            && + 2 \Exp{\| \mG_k - \nabla f(\mX_k)\|_F^2 \mathbf{1}\left\{ \| \mG_k - \nabla f(\mX_k) \| < 2 \sqrt{d} \tau \right\} \mid \mX_k } \\
            && + 2 \|\nabla f(\mX_k) \|_F^2 
        \end{eqnarray*}
        where the last line follows from
        \begin{eqnarray*}
            \| \gC_{\tau}(\mG_k) - \gC_{\tau} (\nabla f(\mX_k))\|_F & \leq & \min \left\{ \| \mG_k - \nabla f(\mX_k) \|_F, 2 \sqrt{d}\tau \right\}.
        \end{eqnarray*}
        Now we bound the first and second terms separately as follows. Note that
        \begin{align*}
            & \Exp{\|\mG_k - \nabla f(\mX_k)\|_F^2 \, \mathbf{1}\{\| \mG_k - \nabla f(\mX_k) \|_F \leq 2 \sqrt{d} \tau \} \mid \mX_k} \\
            \leq & \quad \left( 2 \sqrt{d} \tau \right)^{2-\alpha} \Exp{\| \mG_k - \nabla f(\mX_k) \|_F^\alpha \mid \mX_k} \\
            \leq & \quad \left( 2 \sqrt{d} \tau \right)^{2-\alpha} \sigma^\alpha.
        \end{align*}
        and 
        \begin{eqnarray*}
            \Exp{\mathbf{1}\left\{ \| \mG_k - \nabla f(\mX_k) \| \geq 2 \sqrt{d} \tau \right\} \mid \mX_k } & = & \Pr[\| \mG_k - \nabla f(\mX_k) \| \geq 2 \sqrt{d} \tau \mid \mX_k] \\
            & = & \Pr[\| \mG_k - \nabla f(\mX_k) \|^\alpha \geq \left( 2 \sqrt{d} \tau \right)^\alpha \mid \mX_k] \\
            & \leq & \frac{\Exp{\| \mG_k - \nabla f(\mX_k) \|^\alpha}}{\left( 2 \sqrt{d} \tau \right)^\alpha} \\
            & \leq & \left( 2 \sqrt{d} \tau \right)^{-\alpha} \sigma^\alpha.
        \end{eqnarray*}
        Thus, combining the bounds, we have
        \begin{eqnarray*}
            \Exp{\| \gC_{\tau}(\mG_k)\|_F^2 \mid \mX_k } 
            & \leq & 4 \cdot \left( 2 \sqrt{d} \tau \right)^{2-\alpha} \sigma^\alpha + 2 \|\nabla f(\mX_k) \|_F^2 \\
            & = &  2^{4 - \alpha} d^{1 - \nicefrac{\alpha}{2}} \tau^{2-\alpha} \sigma^\alpha + 2 \|\nabla f(\mX_k) \|_F^2.
        \end{eqnarray*}
        This completes the proof of the lemma.
        
    \end{proof}

\newpage

    \subsection{Proof of Theorem \ref{theorem:SGD_clipping}}\label{sec:proof_theorem:SGD_clipping}
    
    \begin{theorem}
        Suppose Assumptions \ref{assume:smooth} and \ref{assume:BV} hold, and let $f_* \coloneqq \inf_{\mX} f(\mX) > -\infty$. Then for any $K \geq 1$, \algname{SGD} with spectral clipping \eqref{eq:SGD_sclipped} and a constant step size $\eta_k = \eta \eqdef \frac{1}{4 L \sqrt{K}}$ satisfies 
        \begin{eqnarray*}
        \textstyle
            \min_{0 \leq k \leq K-1} \Exp{\| \nabla f(\mX_k) \|_F^2} 
             & \leq & \frac{16 L (f(\mX_0) - f_*)}{\sqrt{K}} + \frac{C_{\alpha, d} \tau^{2 - \alpha} \sigma^\alpha}{8 \sqrt{K}} + \frac{2}{K} \sum_{k = 0}^{K-1} \Delta_k
        \end{eqnarray*}
        where $\Delta_k \eqdef \Exp{\left\| \nabla f(\mX_k) - \Exp {\mathcal{C}_{\tau_k}(\mG_k) \mid \mX_k } \right\|_F^2}$ and $C_{\alpha, d} \eqdef 2^{4 - \alpha} d^{1 - \nicefrac{\alpha}{2}}$ with $d = \min \{ n, m\}$.
    \end{theorem}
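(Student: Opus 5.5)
We follow the standard descent-lemma argument for biased SGD. Write $\mC_k \eqdef \gC_{\tau}(\mG_k)$ and $\mB_k \eqdef \Exp{\mC_k \mid \mX_k} - \nabla f(\mX_k)$, so that $\Delta_k = \Exp{\|\mB_k\|_F^2}$. By Assumption \ref{assume:smooth} applied with $\mX = \mX_{k+1} = \mX_k - \eta \mC_k$ and $\mY = \mX_k$,
\begin{equation*}
f(\mX_{k+1}) \leq f(\mX_k) - \eta \langle \nabla f(\mX_k), \mC_k \rangle + \frac{L\eta^2}{2}\|\mC_k\|_F^2 .
\end{equation*}
We take the conditional expectation given $\mX_k$. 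The inner product term becomes $-\eta\|\nabla f(\mX_k)\|_F^2 - \eta\langle \nabla f(\mX_k), \mB_k\rangle$. We bound the cross term by Lemma \ref{lemma:CS2} with $\omega = 1$, which gives $-\eta\langle \nabla f(\mX_k),\mB_k\rangle \leq \frac{\eta}{2}\|\nabla f(\mX_k)\|_F^2 + \frac{\eta}{2}\|\mB_k\|_F^2$. For the quadratic term we invoke Lemma \ref{lemma:bound_2_moment}:
\begin{equation*}
\Exp{\|\mC_k\|_F^2\mid \mX_k} \leq 2\|\nabla f(\mX_k)\|_F^2 + C_{\alpha,d}\tau^{2-\alpha}\sigma^\alpha .
\end{equation*}

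Combining these, the coefficient of $\|\nabla f(\mX_k)\|_F^2$ is $-\frac{\eta}{2} + L\eta^2$. With $\eta = \frac{1}{4L\sqrt K} \leq \frac{1}{4L}$ we have $L\eta^2 \leq \eta/4$, so this coefficient is at most $-\eta/4$. This yields
\begin{equation*}
\frac{\eta}{4}\Exp{\|\nabla f(\mX_k)\|_F^2} \leq \Exp{f(\mX_k)} - \Exp{f(\mX_{k+1})} + \frac{\eta}{2}\Delta_k + \frac{L\eta^2}{2}C_{\alpha,d}\tau^{2-\alpha}\sigma^\alpha .
\end{equation*}
We then sum over $k=0,\dots,K-1$, telescope, use $f(\mX_K)\geq f_*$, divide by $K\eta/4$, and bound the minimum by the average. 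This gives
\begin{equation*}
\min_{0 \leq k \leq K-1}\Exp{\|\nabla f(\mX_k)\|_F^2} \leq \frac{4(f(\mX_0)-f_*)}{\eta K} + \frac{2}{K}\sum_{k=0}^{K-1}\Delta_k + 2L\eta\, C_{\alpha,d}\tau^{2-\alpha}\sigma^\alpha .
\end{equation*}
Substituting $\eta = 1/(4L\sqrt K)$ gives exactly $\frac{16L(f(\mX_0)-f_*)}{\sqrt K}$ for the first term. The last term becomes $\frac{C_{\alpha,d}\tau^{2-\alpha}\sigma^\alpha}{2\sqrt K}$.

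The main obstacle is the constant in the noise term: the target is $\frac{1}{8\sqrt K}$, whereas the crude bookkeeping above gives $\frac{1}{2\sqrt K}$. To close this gap we will first try to sharpen the quadratic term. One option is to split $\Exp{\|\mC_k\|_F^2 \mid \mX_k}$ exactly as $\|\Exp{\mC_k \mid \mX_k}\|_F^2 + \Exp{\|\mC_k - \Exp{\mC_k\mid\mX_k}\|_F^2 \mid \mX_k}$. The first piece is absorbed into the gradient and bias terms, and the variance piece is bounded by the Lemma \ref{lemma:bound_2_moment} argument without the factor $2$ from Young's inequality. Another option is to use a different $\omega$ in Lemma \ref{lemma:CS2}, which trades the constant in front of $\Delta_k$ (we need $2$) against the gradient coefficient. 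If the exact constant $1/8$ cannot be recovered this way, the same argument still establishes the bound with a constant of the same order, $\mathcal{O}\big(C_{\alpha,d}\tau^{2-\alpha}\sigma^\alpha/\sqrt K\big)$. Everything else in the proof is routine telescoping.
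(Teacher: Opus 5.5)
Your main line (descent lemma, Lemma~\ref{lemma:CS2} with $\omega=1$, Lemma~\ref{lemma:bound_2_moment}, $\eta\le 1/(4L)$, telescoping) is exactly the paper's proof, and your bookkeeping is correct. Dividing $\frac{\eta^2 L C_{\alpha,d}\tau^{2-\alpha}\sigma^\alpha K}{2}$ by $\frac{\eta K}{4}$ gives $2L\eta\, C_{\alpha,d}\tau^{2-\alpha}\sigma^\alpha$. The paper's proof instead writes $\frac{\eta L C_{\alpha,d}\tau^{2-\alpha}\sigma^\alpha}{2}$ at that step. So the paper's route, done correctly, also gives only $\frac{C_{\alpha,d}\tau^{2-\alpha}\sigma^\alpha}{2\sqrt K}$, and the $\frac18$ comes from that slip. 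Still, your proposal does not prove the stated inequality. The fallback ``constant of the same order'' is a weaker theorem, and only one of your two repairs works.

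Tuning $\omega$ cannot succeed while Lemma~\ref{lemma:bound_2_moment} is used as stated. The per-step noise is $\frac{L\eta^2}{2}C_{\alpha,d}\tau^{2-\alpha}\sigma^\alpha$, while the descent coefficient is $\eta(1-\frac{\omega}{2}-L\eta)<\eta$. After normalizing, the noise term is $\frac{L\eta C_{\alpha,d}\tau^{2-\alpha}\sigma^\alpha}{2(1-\omega/2-L\eta)}$, which is strictly larger than $\frac{L\eta C_{\alpha,d}\tau^{2-\alpha}\sigma^\alpha}{2}=\frac{C_{\alpha,d}\tau^{2-\alpha}\sigma^\alpha}{8\sqrt K}$ for every $\omega>0$.

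The bias--variance split does work, provided you handle the inner product and the mean part together through the exact polarization identity rather than Lemma~\ref{lemma:CS2}. With $\bm{M}_k\eqdef\Exp{\gC_\tau(\mG_k)\mid\mX_k}$,
\[
-\eta\langle\nabla f(\mX_k),\bm{M}_k\rangle+\frac{L\eta^2}{2}\|\bm{M}_k\|_F^2=-\frac{\eta}{2}\|\nabla f(\mX_k)\|_F^2+\frac{\eta}{2}\|\nabla f(\mX_k)-\bm{M}_k\|_F^2-\frac{\eta}{2}(1-L\eta)\|\bm{M}_k\|_F^2,
\]
and the last term is nonpositive since $L\eta=\frac{1}{4\sqrt K}\le 1$. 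For the variance part,
\[
\Exp{\|\gC_\tau(\mG_k)-\bm{M}_k\|_F^2\mid\mX_k}\le\Exp{\|\gC_\tau(\mG_k)-\gC_\tau(\nabla f(\mX_k))\|_F^2\mid\mX_k}\le\frac12 C_{\alpha,d}\tau^{2-\alpha}\sigma^\alpha .
\]
This is what the proof of Lemma~\ref{lemma:bound_2_moment} establishes once the factor $2$ from its initial Young step is removed. Hence
\[
\frac{\eta}{2}\|\nabla f(\mX_k)\|_F^2\le f(\mX_k)-\Exp{f(\mX_{k+1})\mid\mX_k}+\frac{\eta}{2}\|\nabla f(\mX_k)-\bm{M}_k\|_F^2+\frac{L\eta^2}{4}C_{\alpha,d}\tau^{2-\alpha}\sigma^\alpha .
\]
Telescoping with $\eta=\frac{1}{4L\sqrt K}$ yields $\frac{8L(f(\mX_0)-f_*)}{\sqrt K}+\frac{C_{\alpha,d}\tau^{2-\alpha}\sigma^\alpha}{8\sqrt K}+\frac1K\sum_{k=0}^{K-1}\Delta_k$, which is at most the claimed bound. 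The essential point is to keep the negative $\|\bm{M}_k\|_F^2$ term. Re-expanding $\|\bm{M}_k\|_F^2$ by Young would reintroduce a $\|\nabla f(\mX_k)\|_F^2$ term and enlarge the coefficient on $\Delta_k$.
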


    \begin{proof}
        For simplicity, let us denote the clipped gradient estimate by $\mathcal{C}_{\tau}(\mG_k)$. i.e., given $\mG_k = \mU_k \text{diag} (\bm{\sigma}_k) \mV_k^\top$, the clipped gradient with threshold $\tau$ is defined as $\mathcal{C}_{\tau}(\mG_k) \eqdef \mU_k \text{diag}(\clamp_{\tau}(\bm{\sigma}_k)) \mV_k^\top$, where the coordinates of the vector $\clamp_{\tau}(\bm{\sigma})$ are given by:
        \begin{equation*}
            \clamp_{\tau} (\bm{\sigma})_i = \begin{cases}
                \bm{\sigma}_i & \text{if } \bm{\sigma}_i \leq \tau \\
                \tau & \text{otherwise}.
            \end{cases} 
        \end{equation*}
        Using this notation, the update rule can be written as 
        \begin{eqnarray*}
            \mX_{k+1} & = & \mX_k - \eta_k \mathcal{C}_{\tau}(\mG_k).
        \end{eqnarray*}
        Then, from the smoothness property of the function $f$, we have    
        \begin{eqnarray*}
            f(\mX_{k+1}) & \leq & f(\mX_k) + \left \langle \nabla f(\mX_k), \mX_{k+1} - \mX_k \right \rangle + \frac{L}{2} \| \mX_{k+1} - \mX_k \|_F^2 \\
            & = & f(\mX_k) - \eta_k \left \langle \nabla f(\mX_k), \mathcal{C}_{\tau}(\mG_k) \right \rangle + \frac{\eta_k^2 L}{2} \| \mathcal{C}_{\tau}(\mG_k) \|_F^2 \\
            & = & f(\mX_k) - \eta_k \left \langle \nabla f(\mX_k), \mG_k \right \rangle - \eta_k \left \langle \nabla f(\mX_k), \mathcal{C}_{\tau}(\mG_k) - \mG_k \right \rangle  + \frac{\eta_k^2 L}{2} \| \mathcal{C}_{\tau}(\mG_k) \|_F^2.
        \end{eqnarray*}
        Then, taking the expectation conditioned on $\mX_k$, we get
        \begin{eqnarray*}
            \Exp{f(\mX_{k+1}) \mid \mX_k} & \leq & f(\mX_k) - \eta_k \| \nabla f(\mX_k) \|_F^2 + \eta_k \left\langle \nabla f(\mX_k), \Exp{\mG_k - \mathcal{C}_{\tau}(\mG_k) \mid \mX_k} \right\rangle \\
            && + \frac{\eta_k^2 L}{2} \Exp{\| \mathcal{C}_{\tau}(\mG_k)\|_F^2 \mid \mX_k} \\
            & \leq & f(\mX_k) - \eta_k \| \nabla f(\mX_k) \|_F^2 + \frac{\eta_k}{2} \| \nabla f(\mX_k)\|_F^2 + \frac{\eta_k}{2} \left\| \Exp {\mG_k - \mathcal{C}_{\tau}(\mG_k) \mid \mX_k } \right\|_F^2 \\
            && + \frac{\eta_k^2 L}{2} \Exp{\| \mathcal{C}_{\tau}(\mG_k)\|_F^2 \mid \mX_k} \\
            & = & f(\mX_k) - \frac{\eta_k}{2} \| \nabla f(\mX_k)\|_F^2 + \frac{\eta_k}{2} \left\| \nabla f(\mX_k) - \Exp {\mathcal{C}_{\tau}(\mG_k) \mid \mX_k } \right\|_F^2 \\
            && + \frac{\eta_k^2 L}{2} \Exp{\| \mathcal{C}_{\tau}(\mG_k)\|_F^2 \mid \mX_k}. 
        \end{eqnarray*}
        From the Lemma, we have
        \begin{eqnarray*}
            \Exp{\| \gC_{\tau}(\mG_k)\|_F^2 \mid \mX_k } 
            & \leq & 2 \|\nabla f(\mX_k) \|_F^2 + C_{\alpha, d} \tau^{2-\alpha} \sigma^\alpha
        \end{eqnarray*}
        where $C_{\alpha, d} \eqdef 2^{4 - \alpha} d^{1 - \nicefrac{\alpha}{2}}$. Thus, the above inequality provides us
        \begin{eqnarray*}
            \Exp{f(\mX_{k+1}) \mid \mX_k} 
            & \leq & f(\mX_k) - \frac{\eta_k}{2} \| \nabla f(\mX_k)\|_F^2 + \frac{\eta_k}{2} \left\| \nabla f(\mX_k) - \Exp {\mathcal{C}_{\tau}(\mG_k) \mid \mX_k } \right\|_F^2  \\ 
            && + \frac{\eta_k^2 L}{2} \left( 2 \| \nabla f(\mX_k)\|_F^2 + C_{\alpha, d} \tau^{2-\alpha} \sigma^\alpha \right) \\
            & = & f(\mX_k) - \frac{\eta_k}{2} \left( 1 - 2 \eta_k L \right) \| \nabla f(\mX_k) \|_F^2 + \frac{\eta_k}{2} \left\| \nabla f(\mX_k) - \Exp {\mathcal{C}_{\tau}(\mG_k) \mid \mX_k } \right\|_F^2 \\
            && + \frac{\eta_k^2 L C_{\alpha, d} \tau^{2-\alpha} \sigma^\alpha}{2}.
        \end{eqnarray*}
        Now, let us define $\Delta_k \eqdef \Exp{\left\| \nabla f(\mX_k) - \Exp {\mathcal{C}_{\tau}(\mG_k) \mid \mX_k } \right\|_F^2}$. Then, by taking the total expectations on both sides, we obtain
        \begin{eqnarray*}
            \Exp{f(\mX_{k+1})} & \leq & \Exp{f(\mX_k)} - \frac{\eta_k}{2} \left( 1 - 2 \eta_k L \right) \Exp{\| \nabla f(\mX_k) \|_F^2} + \frac{\eta_k}{2} \Delta_k + \frac{\eta_k^2 L C_{\alpha, d} \tau^{2-\alpha} \sigma^\alpha}{2}.
        \end{eqnarray*}
        Then, rearranging the terms, we have
        \begin{eqnarray*}
            \frac{\eta_k}{2} \left( 1 - 2 \eta_k L \right) \Exp{\| \nabla f(\mX_k) \|_F^2} & \leq & \Exp{f(\mX_k)} - \Exp{f(\mX_{k+1})} + \frac{\eta_k}{2} \Delta_k + \frac{\eta_k^2 L C_{\alpha, d} \tau^{2-\alpha} \sigma^\alpha}{2}.
        \end{eqnarray*}
        Now, if we choose $\eta_k \leq \frac{1}{4L}$ for all $k$, then we can lower bound the left hand side with $\frac{\eta_k}{4} \Exp{\| \nabla f(\mX_k) \|_F^2} \leq \frac{\eta_k}{2} \left( 1 - 2 \eta_k L \right) \Exp{\| \nabla f(\mX_k) \|_F^2}$. Hence, we obtain
        \begin{eqnarray*}
            \frac{\eta_k}{4} \Exp{\| \nabla f(\mX_k) \|_F^2} & \leq & \Exp{f(\mX_k)} - \Exp{f(\mX_{k+1})} + \frac{\eta_k}{2} \Delta_k + \frac{\eta_k^2 L C_{\alpha, d} \tau^{2-\alpha} \sigma^\alpha}{2}.
        \end{eqnarray*}
        Now, summing up both sides for $k = 0, 1, \cdots K-1$, we get
        \begin{eqnarray*}
             \sum_{k = 0}^{K-1} \frac{\eta_k}{4} \Exp{\| \nabla f(\mX_k) \|_F^2} & \leq & \Exp{f(\mX_0)} - \Exp{f(\mX_K)} + \frac{1}{2} \sum_{k = 0}^{K-1} \eta_k \Delta_k + \frac{L C_{\alpha, d} \tau^{2-\alpha} \sigma^\alpha}{2} \sum_{k = 0}^{K-1} \eta_k^2 \\
             & \leq & f(\mX_0) - f_* + \frac{1}{2} \sum_{k = 0}^{K-1} \eta_k \Delta_k + \frac{L C_{\alpha, d} \tau^{2-\alpha} \sigma^\alpha}{2} \sum_{k = 0}^{K-1} \eta_k^2.
        \end{eqnarray*}
        The last line follows from $\Exp{f(\mX_K)} \geq f_*$. For constant step size $\eta_k = \eta$, we obtain.
        \begin{eqnarray*}
             \frac{\eta}{4} \sum_{k = 0}^{K-1} \Exp{\| \nabla f(\mX_k) \|_F^2} 
             & \leq & f(\mX_0) - f_* + \frac{\eta}{2} \sum_{k = 0}^{K-1} \Delta_k + \frac{\eta^2 L C_{\alpha, d} \tau^{2-\alpha} \sigma^\alpha K}{2}.
        \end{eqnarray*}
        Now divide both sides by $\frac{\eta K}{4}$ to get
        \begin{eqnarray*}
             \frac{1}{K} \sum_{k = 0}^{K-1} \Exp{\| \nabla f(\mX_k) \|_F^2} 
             & \leq & \frac{4 (f(\mX_0) - f_*)}{\eta K} + \frac{2}{K} \sum_{k = 0}^{K-1} \Delta_k + \frac{\eta L C_{\alpha, d} \tau^{2-\alpha} \sigma^\alpha}{2}.
        \end{eqnarray*}
        Therefore, by lower bounding the left hand side, we obtain 
        \begin{eqnarray*}
             \min_{0 \leq k \leq K-1} \Exp{\| \nabla f(\mX_k) \|_F^2} 
             & \leq & \frac{4 (f(\mX_0) - f_*)}{\eta K} + \frac{2}{K} \sum_{k = 0}^{K-1} \Delta_k + \frac{\eta L C_{\alpha, d} \tau^{2-\alpha} \sigma^\alpha}{2}.
        \end{eqnarray*}
        For $\eta = \frac{1}{4 L \sqrt{K}}$, we obtain
        \begin{eqnarray*}
             \min_{0 \leq k \leq K-1} \Exp{\| \nabla f(\mX_k) \|_F^2} 
             & \leq & \frac{16 L (f(\mX_0) - f_*)}{\sqrt{K}} + \frac{2}{K} \sum_{k = 0}^{K-1} \Delta_k + \frac{ C_{\alpha, d} \tau^{2-\alpha} \sigma^\alpha}{8 \sqrt{K}}.
        \end{eqnarray*}
        This completes the proof of the theorem.
    \end{proof}

    \subsection{Proof of Theorem \ref{theorem:main_theorem}}

    \begin{theorem}
        Suppose Assumptions \ref{assume:smooth} and \ref{assume:BV} hold, and let $f_* \coloneqq \inf_{\mX} f(\mX) > -\infty$. Then, for any $K \geq 1$, \algname{SGD} with spectral clipping \eqref{eq:SGD_sclipped}, a constant step size $\eta_k = \eta \leq \nicefrac{1}{4dL}$ and clipping threshold $\tau \geq \max \{ 2, 8 \sigma \}$ satisfies the following
        \begin{align*}
            \min_{0 \leq k \leq K-1} \Exp{ \min \left\{ \| \nabla f(\mX_k)\|_F, \| \nabla f(\mX_k)\|_F^2 \right\}}  
            & \leq \frac{4 (f(\mX_0) - f_*)}{\eta K} + 32 \tau^{-2(\alpha - 1)} \sigma^{2 \alpha} + 8 \eta d L \tau^{2 - \alpha} \sigma^{\alpha}. 
        \end{align*}
    \end{theorem}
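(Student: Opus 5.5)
The plan is to follow the clipped-SGD analysis of \citet{zhang2020adaptivemethodsgoodattention}, replacing the vector clip by $\gC_\tau$. Throughout, write $\xi_k \eqdef \mG_k - \nabla f(\mX_k)$.

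\textbf{One-step descent.} Assumption~\ref{assume:smooth} and conditioning on $\mX_k$ give
\[
\Exp{f(\mX_{k+1})\mid \mX_k} \leq f(\mX_k) - \eta \left\langle \nabla f(\mX_k), \Exp{\gC_\tau(\mG_k)\mid\mX_k}\right\rangle + \tfrac{\eta^2 L}{2}\Exp{\|\gC_\tau(\mG_k)\|_F^2\mid\mX_k}.
\]
For the quadratic term, reuse the argument of Lemma~\ref{lemma:bound_2_moment}, but keep the pointwise bound $\|\gC_\tau(\mG)\|_F^2 \leq \min\{\|\mG\|_F^2, d\tau^2\}$ instead of splitting off $2\|\nabla f\|_F^2$. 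This should give a bound of the form $c\,d\,\tau^{2-\alpha}\sigma^\alpha$ plus a term proportional to $\|\nabla f(\mX_k)\|_F^2$ or $d\tau\|\nabla f(\mX_k)\|_F$. The latter is absorbed into the main descent term because $\eta \leq 1/(4dL)$; this is where the factor $d$ in the step-size condition and in the last term $8\eta d L\tau^{2-\alpha}\sigma^\alpha$ comes from.

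\textbf{Case split on the gradient norm.} Split the inner product according to whether $\|\nabla f(\mX_k)\|_F \leq \tau/2$ or not.
\begin{itemize}
\item \emph{Small gradient.} Write $\langle \nabla f, \Exp{\gC_\tau(\mG)}\rangle = \|\nabla f\|_F^2 - \langle \nabla f, \Exp{\mG - \gC_\tau(\mG)}\rangle$. Clipping is active only if $\sigma_{\max}(\mG)>\tau$. Since $\sigma_{\max}(\nabla f) \leq \|\nabla f\|_F \leq \tau/2$, that event forces $\|\xi\|_F \geq \tau/2$. Moreover $\|\mG - \gC_\tau(\mG)\|_F \leq \|\mG\|_F \leq \|\nabla f\|_F + \|\xi\|_F \leq 2\|\xi\|_F$ on that event. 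Hence
\[
\|\Exp{\mG-\gC_\tau(\mG)}\|_F \leq 2\,\Exp{\|\xi\|_F\mathbf{1}\{\|\xi\|_F\geq\tau/2\}} \leq 2\,(\tau/2)^{1-\alpha}\sigma^\alpha .
\]
Young's inequality \eqref{eq:CS2} then yields $\langle\nabla f,\Exp{\gC_\tau(\mG)}\rangle \geq \tfrac12\|\nabla f\|_F^2 - c\,\tau^{-2(\alpha-1)}\sigma^{2\alpha}$, which produces the middle term of the bound.
\item \emph{Large gradient.} Here the goal is $\langle \nabla f, \Exp{\gC_\tau(\mG)}\rangle \geq c\,\|\nabla f\|_F$, using $\tau\geq 2$ so that the linear part dominates. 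Decompose $\langle \nabla f, \gC_\tau(\mG)\rangle = \langle \mG, \gC_\tau(\mG)\rangle - \langle \xi, \gC_\tau(\mG)\rangle$, and note $\langle \mG, \gC_\tau(\mG)\rangle = \sum_i \sigma_i\min\{\sigma_i,\tau\}$. On the good event $\{\|\xi\|_F \leq \|\nabla f\|_F/4\}$ the singular spectrum of $\mG$ stays comparable to that of $\nabla f$, so the positive term should dominate. The bad event has probability at most $\sigma^\alpha 4^\alpha/\|\nabla f\|_F^\alpha$ by Markov's inequality \eqref{eq:markov}, and there the integrand is at least $-\|\nabla f\|_F\sqrt d\,\tau$. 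The condition $\tau\geq 8\sigma$ is what should make this bad-event contribution a constant fraction of the good one.
\end{itemize}

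\textbf{Combining.} In both cases, and since $\phi(t)=\min\{t,t^2\}$, the descent inequality reads
\[
\tfrac{\eta}{4}\,\phi(\|\nabla f(\mX_k)\|_F) \leq f(\mX_k) - \Exp{f(\mX_{k+1})\mid\mX_k} + 8\eta\,\tau^{-2(\alpha-1)}\sigma^{2\alpha} + 2\eta^2 dL\,\tau^{2-\alpha}\sigma^\alpha .
\]
Taking total expectations, telescoping over $k=0,\dots,K-1$, dividing by $\eta K/4$ and bounding the average below by the minimum gives the stated inequality. Constants are to be tuned in the case analysis to match $32$ and $8$.

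\textbf{Main obstacle.} The hard step is the large-gradient case. Unlike norm clipping, $\gC_\tau(\mG)$ is not a positive multiple of $\mG$, so alignment with $\nabla f$ is not automatic. I would first try the lower bound $\sum_i \sigma_i\min\{\sigma_i,\tau\} \geq \min\{\tau,\sigma_{\max}(\mG)\}\,\sigma_{\max}(\mG)$ together with Weyl's inequality $|\sigma_{\max}(\mG)-\sigma_{\max}(\nabla f)|\leq\|\xi\|_F$. Any $\sqrt d$ loss (from $\|\nabla f\|_F \leq \sqrt d\,\sigma_{\max}(\nabla f)$) would be pushed into the step-size condition $\eta\leq 1/(4dL)$.
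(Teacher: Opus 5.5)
Your small-gradient case is essentially the paper's Case~1: you reduce the clipping event to $\{\|\xi_k\|_F>\tau/2\}$, bound $\|\mG_k-\gC_\tau(\mG_k)\|_F\le 2\|\xi_k\|_F$, and use the same tail estimate and Young step. Your treatment of the quadratic term also matches. The gap is in the large-gradient case, and it is not a matter of tuning constants.

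\textbf{Why the event split fails.} Knowing only $\|\nabla f(\mX_k)\|_F>\tau/2$ and $\tau\ge 8\sigma$, Markov gives $\Pr[\|\xi_k\|_F>\|\nabla f(\mX_k)\|_F/4]\le (4\sigma/\|\nabla f(\mX_k)\|_F)^\alpha<(8\sigma/\tau)^\alpha\le 1$. So the bad event need not be rare when $\|\nabla f(\mX_k)\|_F$ is near $\tau/2$. Multiplied by your worst-case integrand $-\sqrt d\,\tau\|\nabla f(\mX_k)\|_F$, it can swamp the good-event gain. For, e.g., a rank-one gradient that gain is only of order $\min\{\tau,\|\nabla f(\mX_k)\|_F\}\|\nabla f(\mX_k)\|_F$. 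Making this route work would need roughly $\tau\gtrsim\sigma d^{1/(2\alpha)}$, which is not the hypothesis.

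The good event is not clean either. The estimate $|\langle\xi_k,\gC_\tau(\mG_k)\rangle|\le\|\xi_k\|_F\|\gC_\tau(\mG_k)\|_F$ can be as large as $\frac{\sqrt d}{4}\tau\|\nabla f(\mX_k)\|_F$. The guaranteed size of $\langle\mG_k,\gC_\tau(\mG_k)\rangle$ is only $\min\{\|\mG_k\|_F,\tau\}\|\mG_k\|_F$.

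Finally, a $\sqrt d$ deficit cannot be ``pushed into'' $\eta\le 1/(4dL)$. The step-size condition only controls the $\eta^2 L$ term, while the deficit sits in the first-order term $\eta\langle\nabla f(\mX_k),\Exp{\gC_\tau(\mG_k)\mid\mX_k}\rangle$. A Weyl bound through $\sigma_{\max}(\nabla f)\ge\|\nabla f\|_F/\sqrt d$ would shrink the descent coefficient and so change the leading term $\frac{4(f(\mX_0)-f_*)}{\eta K}$.

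\textbf{The missing idea.} $\gC_\tau$ is the Frobenius-norm projection onto the convex set $\{\mY:\sigma_{\max}(\mY)\le\tau\}$, and is therefore $1$-Lipschitz: $\|\gC_\tau(\mG_k)-\gC_\tau(\nabla f(\mX_k))\|_F\le\|\xi_k\|_F$. The proof of Lemma~\ref{lemma:bound_2_moment}, which you cite, already uses this.

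Compare $\gC_\tau(\mG_k)$ with $\gC_\tau(\nabla f(\mX_k))$ instead of with $\mG_k$. The latter shares the singular vectors of $\nabla f(\mX_k)$, so $\langle\nabla f,\gC_\tau(\nabla f)\rangle=\sum_i\sigma_i\min\{\sigma_i,\tau\}\ge\min\{\|\nabla f\|_F,\tau\}\|\nabla f\|_F\ge\frac{\tau}{2}\|\nabla f\|_F$, with no dimension loss. (Use $\min\{\sigma_i,\tau\}\ge\sigma_i\tau/\|\nabla f\|_F$ when $\|\nabla f\|_F>\tau$.)

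Then, with no event splitting at all, Cauchy--Schwarz and Jensen ($\Exp{\|\xi_k\|_F\mid\mX_k}\le\sigma$) give $\langle\nabla f,\Exp{\gC_\tau(\mG_k)\mid\mX_k}\rangle\ge(\frac{\tau}{2}-\sigma)\|\nabla f\|_F$. Combine this with $\|\gC_\tau(\mG_k)\|_F^2\le d\tau^2<2d\tau\|\nabla f\|_F$ and $\eta\le 1/(4dL)$. The step then decreases $f$ in expectation by at least $\frac{\eta}{2}(\frac{\tau}{2}-2\sigma)\|\nabla f\|_F\ge\frac{\eta\tau}{8}\|\nabla f\|_F\ge\frac{\eta}{4}\|\nabla f\|_F$. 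This is exactly where $\tau\ge 8\sigma$ and $\tau\ge 2$ are used in the paper.
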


    \begin{proof}
        For simplicity, let us denote the clipped gradient estimate with $\mathcal{C}_{\tau}(\mG_k)$. i.e., given $\mG_k = \mU_k \text{diag} (\bm{\sigma}_k) \mV_k^\top$, the clipped gradient with threshold $\tau$ is defined as $\mathcal{C}_{\tau}(\mG_k) \eqdef \mU_k \text{diag}(\clamp_{\tau}(\bm{\sigma}_k)) \mV_k^\top$, where the coordinates of the vector $\clamp_{\tau}(\bm{\sigma})$ are given by:
        \begin{equation*}
            \clamp_{\tau} (\bm{\sigma})_i = \begin{cases}
                \bm{\sigma}_i & \text{if } \bm{\sigma}_i \leq \tau \\
                \tau & \text{otherwise}.
            \end{cases} 
        \end{equation*}
        Using this notation, the update rule can be written as 
        \begin{eqnarray*}
            \mX_{k+1} & = & \mX_k - \eta_k \mathcal{C}_{\tau}(\mG_k).
        \end{eqnarray*}
        Then, from the smoothness property of the function $f$, we have    
        \begin{eqnarray*}
            f(\mX_{k+1}) & \leq & f(\mX_k) + \left \langle \nabla f(\mX_k), \mX_{k+1} - \mX_k \right \rangle + \frac{L}{2} \| \mX_{k+1} - \mX_k \|_F^2 \\
            & = & f(\mX_k) - \eta_k \left \langle \nabla f(\mX_k), \mathcal{C}_{\tau}(\mG_k) \right \rangle + \frac{\eta_k^2 L}{2} \| \mathcal{C}_{\tau}(\mG_k) \|_F^2 \\
            & = & f(\mX_k) - \eta_k \left \langle \nabla f(\mX_k), \mG_k \right \rangle - \eta_k \left \langle \nabla f(\mX_k), \mathcal{C}_{\tau}(\mG_k) - \mG_k \right \rangle  + \frac{\eta_k^2 L}{2} \| \mathcal{C}_{\tau}(\mG_k) \|_F^2.
        \end{eqnarray*}
        Then, taking the expectation conditioned on $\mX_k$, we obtain
        \begin{eqnarray*}
            \Exp{f(\mX_{k+1}) \mid \mX_k} & \leq & f(\mX_k) - \eta_k \| \nabla f(\mX_k) \|_F^2 + \eta_k \left\langle \nabla f(\mX_k), \Exp{\mG_k - \mathcal{C}_{\tau}(\mG_k) \mid \mX_k} \right\rangle \\
            && + \frac{\eta_k^2 L}{2} \Exp{\| \mathcal{C}_{\tau}(\mG_k)\|_F^2 \mid \mX_k} \\
            & \leq & f(\mX_k) - \eta_k \| \nabla f(\mX_k) \|_F^2 + \frac{\eta_k}{2} \| \nabla f(\mX_k)\|_F^2 + \frac{\eta_k}{2} \left\| \Exp {\mG_k - \mathcal{C}_{\tau}(\mG_k) \mid \mX_k } \right\|_F^2 \\
            && + \frac{\eta_k^2 L}{2} \Exp{\| \mathcal{C}_{\tau}(\mG_k)\|_F^2 \mid \mX_k} \\
            & = & f(\mX_k) - \frac{\eta_k}{2} \| \nabla f(\mX_k)\|_F^2 + \frac{\eta_k}{2} \left\| \nabla f(\mX_k) - \Exp {\mathcal{C}_{\tau}(\mG_k) \mid \mX_k } \right\|_F^2 \\
            && + \frac{\eta_k^2 L}{2} \Exp{\| \mathcal{C}_{\tau}(\mG_k)\|_F^2 \mid \mX_k}. 
        \end{eqnarray*}

        For further analysis, we will consider two cases: either $\| \nabla f(\mX_k )\|_F \leq \nicefrac{\tau}{2}$ or $\| \nabla f(\mX_k) \|_F > \nicefrac{\tau}{2}$. 

        \textbf{Case 1.} Suppose $\| \nabla f(\mX_k)\|_F \leq \nicefrac{\tau}{2}$. Then we have the following bound
        \begin{align*}
            \| \gC_{\tau} (\mG_k)\|_F^2 & = \| \gC_{\tau} (\mG_k)\|_F^2 \mathbf{1}\left\{ \|\mG_k \|_F \leq \tau \right\} + \| \gC_{\tau} (\mG_k)\|_F^2 \mathbf{1}\left\{ \|\mG_k \|_F > \tau \right\} \\
            & \leq \| \mG_k \|_F^2 \mathbf{1}\left\{ \|\mG_k \|_F \leq \tau \right\} + d \tau^2 \mathbf{1}\left\{ \|\mG_k \|_F > \tau \right\} \\
            & \leq 2\| \nabla f(\mX_k)\|_F^2 + 2 \| \mG_k - \nabla f(\mX_k) \|_F^2 \mathbf{1}\left\{ \|\mG_k \|_F \leq \tau \right\} + d \tau^2 \mathbf{1}\left\{ \|\mG_k \|_F > \tau \right\} \\
            & \leq 2\| \nabla f(\mX_k)\|_F^2 + 2 \| \mG_k - \nabla f(\mX_k) \|_F^2 \mathbf{1}\left\{ \|\mG_k - \nabla f(\mX_k) \|_F \leq \nicefrac{3 \tau}{2} \right\} + d \tau^2 \mathbf{1}\left\{ \|\mG_k \|_F > \tau \right\} \\
            & \leq 2\| \nabla f(\mX_k)\|_F^2 + 2 (\nicefrac{3\tau}{2})^{2 - \alpha} \| \mG_k - \nabla f(\mX_k) \|_F^\alpha + d \tau^2 \mathbf{1}\left\{ \|\mG_k \|_F > \tau \right\} 
        \end{align*}
        Then, taking the expectation, we have
        \begin{eqnarray*}
            \Exp{\| \gC_{\tau} (\mG_k)\|_F^2 \mid \mX_k}
            & \leq & 2\| \nabla f(\mX_k)\|_F^2 + 2 (\nicefrac{3\tau}{2})^{2 - \alpha} \sigma^\alpha + d \tau^2 \Exp{\mathbf{1}\left\{ \|\mG_k \|_F > \tau \right\} \mid \mX_k} 
        \end{eqnarray*}
        
        Moreover, for $\| \nabla f(\mX_k)\|_F \leq \nicefrac{\tau}{2}$, we also have $\mathbf{1}\left\{ \| \mG_k \|_F > \tau \right\} \leq \mathbf{1}\left\{ \| \mG_k - \nabla f(\mX_k) \|_F > \nicefrac{\tau}{2} \right\}$. Thus, using Markov's Inequality, we obtain
        \begin{eqnarray*}
            \Exp{\mathbf{1}\left\{ \| \mG_k \|_F > \tau \right\} \mid \mX_k} & \leq & \Exp{\mathbf{1}\left\{ \| \mG_k - \nabla f(\mX_k) \|_F > \nicefrac{\tau}{2} \right\} \mid \mX_k} \\
            & = & \Pr[\| \mG_k - \nabla f(\mX_k) \|_F > \nicefrac{\tau}{2} \mid \mX_k] \\
            & = & \Pr[\| \mG_k - \nabla f(\mX_k) \|_F^\alpha > (\nicefrac{\tau}{2})^\alpha \mid \mX_k ] \\
            & \leq & \frac{\Exp{\| \mG_k - \nabla f(\mX_k) \|_F^\alpha \mid \mX_k}}{(\nicefrac{\tau}{2})^\alpha} \\
            & \leq & 2^\alpha \tau^{- \alpha} \sigma^\alpha.
        \end{eqnarray*}
        Thus, combining the bounds, we have shown
        \begin{eqnarray*}
            \Exp{\| \gC_{\tau} (\mG_k)\|_F^2 \mid \mX_k}
            & \leq & 2\| \nabla f(\mX_k)\|_F^2 + C_{\alpha, 1} \tau^{2 - \alpha} \sigma^{\alpha}
        \end{eqnarray*}
        where $C_{\alpha, 1} \eqdef 2^\alpha d + 3 \cdot (2/3)^{\alpha - 1} \leq 8d$. Similarly, we want to bound $\left\| \nabla f(\mX_k) - \Exp {\mathcal{C}_{\tau}(\mG_k) \mid \mX_k } \right\|_F^2$. Note that
        \begin{eqnarray*}
            \left\| \nabla f(\mX_k) - \Exp {\mathcal{C}_{\tau}(\mG_k) \mid \mX_k } \right\|_F^2 & = & \left\| \Exp {\mG_k - \mathcal{C}_{\tau}(\mG_k) \mid \mX_k } \right\|_F^2 \\
            & \leq & \Exp{\left\| \mG_k - \mathcal{C}_{\tau}(\mG_k) \right\|_F \mid \mX_k}^2 \\
            & = & \left( \Exp{\left\| \mG_k - \mathcal{C}_{\tau}(\mG_k) \right\|_F \mathbf{1}\left\{ \| \mG_k\|_F \leq \tau \right\} \mid \mX_k} \right. \\
            && + \left. \Exp{\left\| \mG_k - \mathcal{C}_{\tau}(\mG_k) \right\|_F \mathbf{1}\left\{ \| \mG_k\|_F > \tau \right\} \mid \mX_k} \right)^2 \\
            & = & \Exp{\left\| \mG_k - \mathcal{C}_{\tau}(\mG_k) \right\|_F \mathbf{1}\left\{ \| \mG_k\|_F > \tau \right\} \mid \mX_k}^2 \\
            & \leq & \Exp{\left\| \mG_k \right\|_F \mathbf{1}\left\{ \| \mG_k\|_F > \tau \right\} \mid \mX_k}^2 \\
            & \leq & \Exp{\left\| \mG_k \right\|_F \mathbf{1}\left\{ \| \mG_k - \nabla f(\mX_k) \|_F > \nicefrac{\tau}{2} \right\} \mid \mX_k}^2
        \end{eqnarray*}
        where the last inequality follows from the observation that, for $\| \nabla f(\mX_k)\|_F \leq \nicefrac{\tau}{2}$, we have $\mathbf{1}\left\{ \| \mG_k \|_F > \tau \right\} \leq \mathbf{1}\left\{ \| \mG_k - \nabla f(\mX_k) \|_F > \nicefrac{\tau}{2} \right\}$. Then
        \begin{align*}
            & \quad \left\| \nabla f(\mX_k) - \Exp {\mathcal{C}_{\tau}(\mG_k) \mid \mX_k } \right\|_F^2 \\
            \leq & \quad \Exp{\left\| \mG_k \right\|_F \mathbf{1}\left\{ \| \mG_k - \nabla f(\mX_k) \|_F > \nicefrac{\tau}{2} \right\} \mid \mX_k}^2 \\
            \leq & \quad \Exp{ \left(\left\| \mG_k - \nabla f(\mX_k)\right\|_F + \| \nabla f(\mX_k)\|_F \right) \mathbf{1}\left\{ \| \mG_k - \nabla f(\mX_k) \|_F > \nicefrac{\tau}{2} \right\} \mid \mX_k}^2 \\
            \leq & \quad 4 \Exp{\left\| \mG_k - \nabla f(\mX_k)\right\|_F \mathbf{1}\left\{ \| \mG_k - \nabla f(\mX_k) \|_F > \nicefrac{\tau}{2} \right\} \mid \mX_k}^2
        \end{align*}
        where the last line follows from $\| \nabla f(\mX_k)\|_F \leq \nicefrac{\tau}{2}$ Then we apply Holder's Inequality to get 
        \begin{align*}
            & \quad \Exp{ \left\| \mG_k - \nabla f(\mX_k)\right\|_F \mathbf{1}\left\{ \| \mG_k - \nabla f(\mX_k) \|_F > \nicefrac{\tau}{2} \right\} \mid \mX_k} \\
            \leq & \quad \left( \Exp{ \| \mG_k - \nabla f(\mX_k)\|_F^\alpha \mid \mX_k}\right)^{\nicefrac{1}{\alpha}} \cdot \left( \Pr[\| \mG_k - \nabla f(\mX_k)\|_F > \nicefrac{\tau}{2} \mid \mX_k]\right)^{1 - \nicefrac{1}{\alpha}} \\
            \leq & \quad \left(\sigma^\alpha \right)^{\nicefrac{1}{\alpha}}\cdot \left(\frac{\Exp{ \| \mG_k - \nabla f(\mX_k) \|_F^\alpha \mid \mX_k}}{(\nicefrac{\tau}{2})^\alpha} \right)^{\nicefrac{\alpha - 1}{\alpha}} \\
            = & \quad \sigma \cdot \left( \frac{2 \sigma}{\tau} \right)^{\alpha - 1}.
        \end{align*}
        Thus, by combining the bounds, we obtain.
        \begin{eqnarray*}
            \left\| \nabla f(\mX_k) - \Exp {\mathcal{C}_{\tau}(\mG_k) \mid \mX_k } \right\|_F^2 & \leq & C_{\alpha, 2} \tau^{-2(\alpha - 1)} \sigma^{2 \alpha}
        \end{eqnarray*}
        where $C_{\alpha, 2} \eqdef 2^{2\alpha} \leq 16$. Hence, for the $\| \nabla f(\mX_k)\|_F \leq \nicefrac{\tau}{2}$ case, we have
        \begin{eqnarray*}
            \Exp{f(\mX_{k+1}) \mid \mX_k} & \leq & f(\mX_k) - \frac{\eta_k}{2} \left(1 - 2\eta_k L \right) \| \nabla f(\mX_k)\|_F^2 + \frac{\eta_k}{2} C_{\alpha, 2} \tau^{-2(\alpha - 1)} \sigma^{2 \alpha} \\ 
            && + \frac{\eta_k^2 L}{2} C_{\alpha, 1} \tau^{2 - \alpha} \sigma^{\alpha}. 
        \end{eqnarray*}     
        Then for $\eta_k = \eta \leq \frac{1}{4L}$, we obtain
        \begin{eqnarray*}
            \Exp{f(\mX_{k+1}) \mid \mX_k} & \leq & f(\mX_k) - \frac{\eta_k}{4} \| \nabla f(\mX_k)\|_F^2 + \frac{\eta_k}{2} C_{\alpha, 2} \tau^{-2(\alpha - 1)} \sigma^{2 \alpha} + \frac{\eta_k^2 L}{2} C_{\alpha, 1} \tau^{2 - \alpha} \sigma^{\alpha} \\
             & \leq & f(\mX_k) - \frac{\eta_k}{4} \min \left\{ \| \nabla f(\mX_k)\|_F, \| \nabla f(\mX_k)\|_F^2 \right\} + \frac{\eta_k}{2} C_{\alpha, 2} \tau^{-2(\alpha - 1)} \sigma^{2 \alpha} \\
             && + \frac{\eta_k^2 L}{2} C_{\alpha, 1} \tau^{2 - \alpha} \sigma^{\alpha}. 
        \end{eqnarray*}  

        \textbf{Case 2.} Now, consider $\| \nabla f(\mX_k)\|_F > \nicefrac{\tau}{2}$. Similar to the previous case, we also have the following bound
        \begin{eqnarray*}
            f(\mX_{k+1}) & \leq & f(\mX_k) + \left \langle \nabla f(\mX_k), \mX_{k+1} - \mX_k \right \rangle + \frac{L}{2} \| \mX_{k+1} - \mX_k \|_F^2 \\
            & = & f(\mX_k) - \eta_k \left \langle \nabla f(\mX_k), \mathcal{C}_{\tau}(\mG_k) \right \rangle + \frac{\eta_k^2 L}{2} \| \mathcal{C}_{\tau}(\mG_k) \|_F^2 \\
            & \leq & f(\mX_k) - \eta_k \left \langle \nabla f(\mX_k), \mathcal{C}_{\tau}(\mG_k) \right \rangle + \frac{\eta_k^2 d L \tau^2}{2} 
        \end{eqnarray*}
        Here, we take the expectation conditioned on $\mX_k$ to obtain
        \begin{eqnarray*}
            \Exp{f(\mX_{k+1}) \mid \mX_k} & \leq & f(\mX_k) - \eta_k \langle \nabla f(\mX_k), \Exp{\gC_{\tau}(\mG_k) \mid \mX_k} \rangle + \frac{\eta_k^2 d L \tau^2}{2} \\
            & = & f(\mX_k) - \eta_k \langle \nabla f(\mX_k), \gC_{\tau}(\nabla f(\mX_k)) \rangle \\
            && - \eta_k \langle \nabla f(\mX_k), \Exp{\gC_{\tau}(\mG_k) - \gC_{\tau}(\nabla f(\mX_k)) \mid \mX_k} \rangle + \frac{\eta_k^2 d L \tau^2}{2} \\
            & = & f(\mX_k) - \eta_k \langle \nabla f(\mX_k), \gC_{\tau}(\nabla f(\mX_k)) \rangle \\
            && + \eta_k \| \nabla f(\mX_k)\|_F \| \Exp{\gC_{\tau}(\mG_k) - \gC_{\tau}(\nabla f(\mX_k)) \mid \mX_k} \|_F + \frac{\eta_k^2 d L \tau^2}{2} \\
            & \leq & f(\mX_k) - \eta_k \langle \nabla f(\mX_k), \gC_{\tau}(\nabla f(\mX_k)) \rangle \\
            && + \eta_k \| \nabla f(\mX_k)\|_F \Exp{ \| \gC_{\tau}(\mG_k) - \gC_{\tau}(\nabla f(\mX_k)) \|_F \mid \mX_k} + \frac{\eta_k^2 d L \tau^2}{2} \\
            & \leq & f(\mX_k) - \eta_k \langle \nabla f(\mX_k), \gC_{\tau}(\nabla f(\mX_k)) \rangle \\
            && + \eta_k \| \nabla f(\mX_k)\|_F \Exp{ \| \mG_k - \nabla f(\mX_k) \|_F \mid \mX_k} + \frac{\eta_k^2 d L \tau^2}{2} \\
            & \leq & f(\mX_k) - \eta_k \langle \nabla f(\mX_k), \gC_{\tau}(\nabla f(\mX_k)) \rangle \\
            && + \eta_k \| \nabla f(\mX_k)\|_F \Exp{ \| \mG_k - \nabla f(\mX_k) \|_F^\alpha \mid \mX_k}^{\nicefrac{1}{\alpha}} + \frac{\eta_k^2 d L \tau^2}{2} \\
            & \leq & f(\mX_k) - \eta_k \langle \nabla f(\mX_k), \gC_{\tau}(\nabla f(\mX_k)) \rangle + \eta_k \sigma \| \nabla f(\mX_k)\|_F + \frac{\eta_k^2 d L \tau^2}{2}.
        \end{eqnarray*}
        where the second-to-last inequality follows from Jensen's Inequality with $\alpha \in (1, 2]$. Now we want to lower bound the term $\langle \nabla f(\mX_k), \gC_{\tau}(\nabla f(\mX_k)) \rangle$ when $\| \nabla f(\mX_k)\|_F > \nicefrac{\tau}{2}$. Note that
        \begin{eqnarray*}
            \langle \nabla f(\mX_k), \gC_{\tau}(\nabla f(\mX_k)) \rangle & = & \sum_{i = 1}^d \sigma_i \cdot \min \left\{ \sigma_i, \tau \right\}
        \end{eqnarray*}
        where $\{ \sigma_i \}_{i = 1}^d$ denotes the singular values of $\nabla f(\mX_k)$. Now we consider two cases:
        
        \textbf{Case 1.} $\nicefrac{\tau}{2} < \| \nabla f(\mX_k)\|_F \leq \tau$. In this case, as $\| \nabla f(\mX_k)\|_F \leq \tau$, all the singular values of $\nabla f(\mX_k)$ are less than $\tau$. Therefore 
        \begin{eqnarray*}
            \langle \nabla f(\mX_k), \gC_{\tau}(\nabla f(\mX_k)) \rangle 
            & = & \sum_{i = 1}^d \sigma_i^2 \\
            & = & \| \nabla f(\mX_k)\|_F^2 \\
            & \geq & \frac{\tau}{2} \| \nabla f(\mX_k) \|_F.
        \end{eqnarray*}
        \textbf{Case 2.} $\| \nabla f(\mX_k)\|_F > \tau$. In this case, For each singular value $\sigma_i$, we have
        \begin{eqnarray*}
            \min \left\{ \sigma_i, \tau \right\} & \geq & \frac{\sigma_i \tau}{\| \nabla f(\mX_k)\|_F}.
        \end{eqnarray*}
        Therefore, 
        \begin{eqnarray*}
            \langle \nabla f(\mX_k), \gC_{\tau}(\nabla f(\mX_k)) \rangle & = & \sum_{i = 1}^d \sigma_i \cdot \min \left\{ \sigma_i, \tau \right\} \\
            & \geq & \frac{\tau}{\| \nabla f(\mX_k)\|_F} \sum_{i = 1}^d \sigma_i^2 \\
            & = & \tau \| \nabla f(\mX_k)\|_F \\
            & \geq & \frac{\tau}{2} \| \nabla f(\mX_k)\|_F.
        \end{eqnarray*}
        Therefore, we have shown $\langle \nabla f(\mX_k), \gC_{\tau}(\nabla f(\mX_k)) \rangle \geq \frac{\tau}{2} \| \nabla f(\mX_k)\|_F$ when $ \| \nabla f(\mX_k)\|_F > \nicefrac{\tau}{2}$. Finally, combining the bounds, we get
        \begin{eqnarray*}
            \Exp{f(\mX_{k+1}) \mid \mX_k} 
            & \leq & f(\mX_k) - \frac{\eta_k \tau}{2} \| \nabla f(\mX_k)\|_F + \eta_k \sigma \| \nabla f(\mX_k)\|_F + \frac{\eta_k^2 d L \tau^2}{2} \\
            & \leq & f(\mX_k) - \frac{\eta_k \tau}{2} \| \nabla f(\mX_k)\|_F + \eta_k \sigma \| \nabla f(\mX_k)\|_F + \eta_k^2 d L \tau \| \nabla f(\mX_k) \|_F \\
            & \leq & f(\mX_k) - \frac{\eta_k}{2} \left( \tau (1 - 2 \eta_k dL) - 2\sigma \right) \| \nabla f(\mX_k)\|_F.
        \end{eqnarray*}
        Here, we choose $\eta_k \leq \nicefrac{1}{4 d L}$ to obtain
        \begin{eqnarray*}
            \Exp{f(\mX_{k+1}) \mid \mX_k} 
            & \leq & f(\mX_k) - \frac{\eta_k}{2} \left( \frac{\tau}{2} - 2\sigma \right) \| \nabla f(\mX_k)\|_F.
        \end{eqnarray*}
        For $\tau \geq 8 \sigma$, we can further bound it as shown below
        \begin{eqnarray*}
            \Exp{f(\mX_{k+1}) \mid \mX_k} 
            & \leq & f(\mX_k) - \frac{\eta_k \tau}{8} \| \nabla f(\mX_k)\|_F \\
            & \leq & f(\mX_k) - \frac{\eta_k \tau}{8} \| \nabla f(\mX_k)\|_F + \frac{\eta_k}{2} C_{\alpha, 2} \tau^{-2(\alpha - 1)} \sigma^{2 \alpha} + \frac{\eta_k^2 L}{2} C_{\alpha, 1} \tau^{2 - \alpha} \sigma^{\alpha}. 
        \end{eqnarray*}
        For $\tau \geq 2$, we have
        \begin{eqnarray*}
            \Exp{f(\mX_{k+1}) \mid \mX_k} 
            & \leq & f(\mX_k) - \frac{\eta_k}{4} \| \nabla f(\mX_k)\|_F + \frac{\eta_k}{2} C_{\alpha, 2} \tau^{-2(\alpha - 1)} \sigma^{2 \alpha} + \frac{\eta_k^2 L}{2} C_{\alpha, 1} \tau^{2 - \alpha} \sigma^{\alpha} \\
            & \leq & f(\mX_k) - \frac{\eta_k}{4} \min \left\{ \| \nabla f(\mX_k)\|_F, \| \nabla f(\mX_k)\|_F^2 \right\} + \frac{\eta_k}{2} C_{\alpha, 2} \tau^{-2(\alpha - 1)} \sigma^{2 \alpha} \\
            && \qquad + \frac{\eta_k^2 L}{2} C_{\alpha, 1} \tau^{2 - \alpha} \sigma^{\alpha}. 
        \end{eqnarray*}

        Finally, taking the total expectation, rearranging, summing up the inequality for $k = 0, 1, \cdots, K-1$, and dividing by $K$, we have
        \begin{eqnarray*}
            \frac{\eta}{4K} \sum_{k = 0}^{K-1} \Exp{\min \left\{ \| \nabla f(\mX_k)\|_F, \| \nabla f(\mX_k)\|_F^2 \right\}}  & \leq & \frac{f(\mX_0) - f_*}{K} + \frac{\eta}{2} C_{\alpha, 2} \tau^{-2(\alpha - 1)} \sigma^{2 \alpha} \\
            && \qquad + \frac{\eta^2 L}{2} C_{\alpha, 1} \tau^{2 - \alpha} \sigma^{\alpha}. 
        \end{eqnarray*}
        Dividing both sides by $\nicefrac{\eta}{4}$, we obtain
        \begin{eqnarray*}
            \frac{1}{K} \sum_{k = 0}^{K-1} \Exp{\min \left\{ \| \nabla f(\mX_k)\|_F, \| \nabla f(\mX_k)\|_F^2 \right\} }& \leq & \frac{4 (f(\mX_0) - f_*)}{\eta K} + 2 C_{\alpha, 2} \tau^{-2(\alpha - 1)} \sigma^{2 \alpha} \\
            && \qquad + \eta L C_{\alpha, 1} \tau^{2 - \alpha} \sigma^{\alpha}. 
        \end{eqnarray*}
        This implies
        \begin{eqnarray*}
            \min_{0 \leq k \leq K-1} \Exp{ \min \left\{ \| \nabla f(\mX_k)\|_F, \| \nabla f(\mX_k)\|_F^2 \right\}}  & \leq & \frac{4 (f(\mX_0) - f_*)}{\eta K} + 2 C_{\alpha, 2} \tau^{-2(\alpha - 1)} \sigma^{2 \alpha} \\
            && \qquad + \eta L C_{\alpha, 1} \tau^{2 - \alpha} \sigma^{\alpha} \\
            & \leq & \frac{4 (f(\mX_0) - f_*)}{\eta K} + 32 \tau^{-2(\alpha - 1)} \sigma^{2 \alpha} \\
            && \qquad + 8 \eta d L \tau^{2 - \alpha} \sigma^{\alpha}. 
        \end{eqnarray*}
        This completes the proof of the first part of the Theorem. Now, for the second part, consider $\eta \leq \nicefrac{1}{L\tau^\alpha}$, which implies
        \begin{eqnarray*}
            32 \tau^{-2(\alpha - 1)} \sigma^{2 \alpha} + 8 \eta d L \tau^{2 - \alpha} \sigma^{\alpha} & \leq & 32 d \frac{\sigma^{2 \alpha} + \sigma^\alpha}{\tau^{2 \alpha - 2}}.
        \end{eqnarray*}
        Thus, the above bound reduces to 
        \begin{eqnarray*}
            \min_{0 \leq k \leq K-1} \Exp{ \min \left\{ \| \nabla f(\mX_k)\|_F, \| \nabla f(\mX_k)\|_F^2 \right\}}  & \leq & \frac{4 (f(\mX_0) - f_*)}{\eta K} + 32 d \frac{\sigma^{2 \alpha} + \sigma^\alpha}{\tau^{2 \alpha - 2}}.
        \end{eqnarray*}
        Here, we choose $\tau \geq \sigma K^{\frac{1}{3 \alpha - 2}}$ to obtain
        \begin{eqnarray*}
            \min_{0 \leq k \leq K-1} \Exp{ \min \left\{ \| \nabla f(\mX_k)\|_F, \| \nabla f(\mX_k)\|_F^2 \right\}}  & = & \mathcal{O}\left( K^{\frac{- 2 \alpha + 2}{3 \alpha - 2}} \right).
        \end{eqnarray*}
        This completes the proof of the theorem.
    \end{proof} 

\newpage
\section{Missing Pseudocode of Algorithm \ref{alg:SGDM_quantile}}\label{sec:missing_pseudocode}

    \begin{algorithm}[H]
    \caption{\algname{SGDM} with Quantile-based Spectral Clipping}\label{alg:SGDM_quantile}
    \begin{algorithmic}[1]
    \REQUIRE Initial point $\mX_0 \in \R^{m \times n}$,
    momentum matrix $\mB_{-1}=0$,
    momentum parameter $\beta\in[0,1)$,
    quantile $q\in(0,1]$,
    window size $w\ge1$,
    initial threshold $\tau_{-1}=0$.
    \STATE Initialize multiset $\gS_{-1}\gets\emptyset$.
    \FOR{$k=0,1,\dots,K-1$}
        \STATE Compute gradient $\mG_k$.
        \STATE $\mB_k \gets \beta \mB_{k-1} + (1-\beta)\,\gC_{\tau_{k-1}}(\mG_k)$.
        \STATE $\mX_{k+1} \gets \mX_k - \eta \mB_k$.
        \STATE $\gS_k \gets \gS_{k-1} \cup \sigma_{\max}(\mG_k)$.
        \IF{$k \ge w$}
            \STATE $\gS_k \gets \gS_k \setminus \sigma_{\max}(\mG_{k-w - 1})$. 
        \ENDIF
        \STATE $\tau_k \gets \gQ_q(\gS_k)$.
    \ENDFOR
    \end{algorithmic}
    \end{algorithm}


    \section{Missing Figures}\label{sec:missing_figures}
    \begin{figure*}[h]
        \centering
        \begin{subfigure}[t]{0.48\textwidth}
            \centering
            \includegraphics[width=\linewidth]{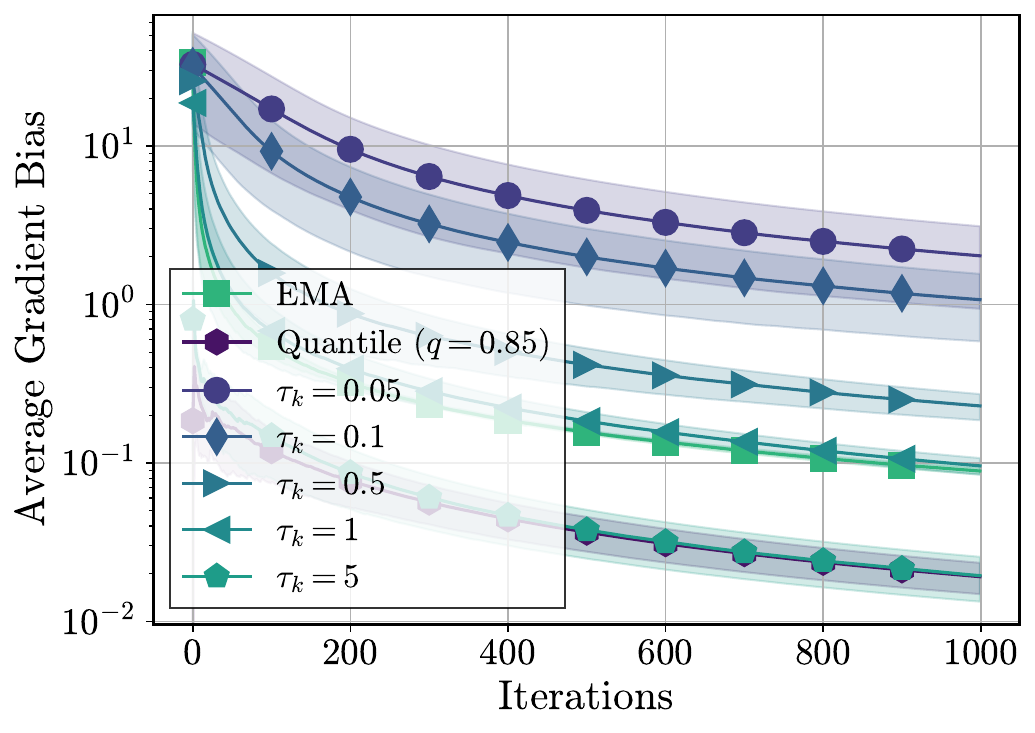}
            \caption{Comparison of $\tau_k$}\label{fig:trace_reg}
        \end{subfigure}
        \hfill
        \begin{subfigure}[t]{0.48\textwidth}
            \centering
            \includegraphics[width=\linewidth]{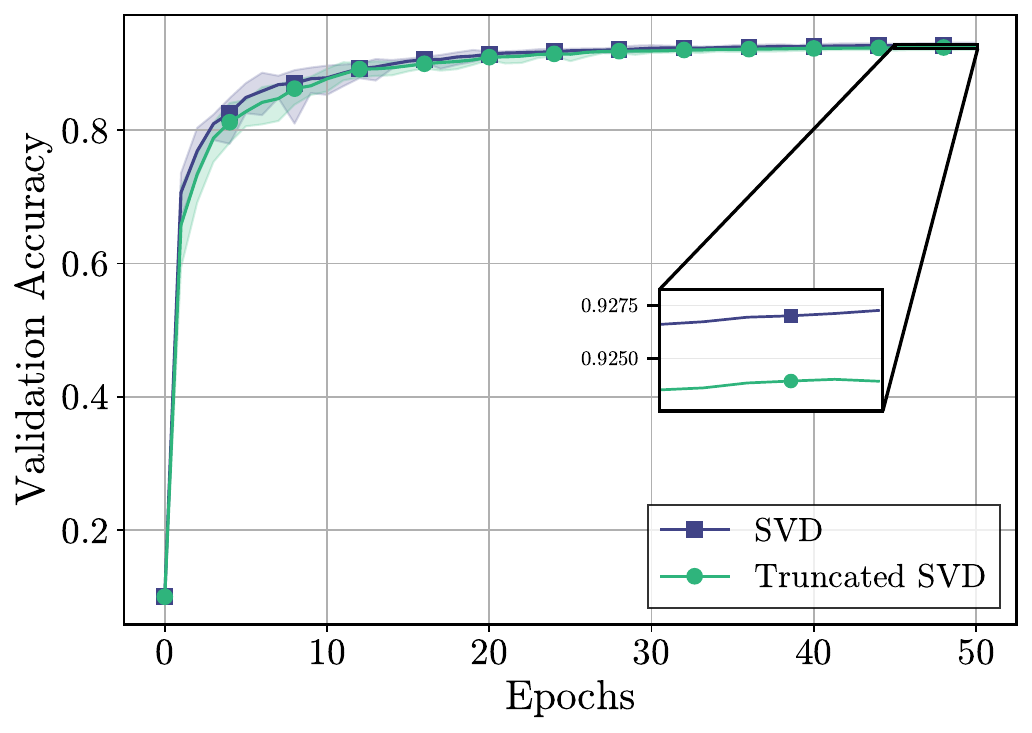}
            \caption{Validation accuracy of spectral clipping with truncated SVD vs.\ full SVD}\label{fig:SVDvstruncatedSVD_accuracy}
        \end{subfigure}
    
        \caption{\small 
        In Figure \ref{fig:trace_reg}, we compare the performance of different spectral clipping strategies on trace regression problem~\eqref{eq:trace_reg}. In Figure \ref{fig:SVDvstruncatedSVD_accuracy} we compare the performance of randomized truncated SVD (Algorithm \ref{alg:spectral_clipping_rsvd}) with SVD for spectral clipping on CV problem.
        }
    \end{figure*}

    \begin{figure}[h]
        \centering
        \begin{subfigure}[t]{0.3\linewidth}
            \centering
                \includegraphics[width=\linewidth]{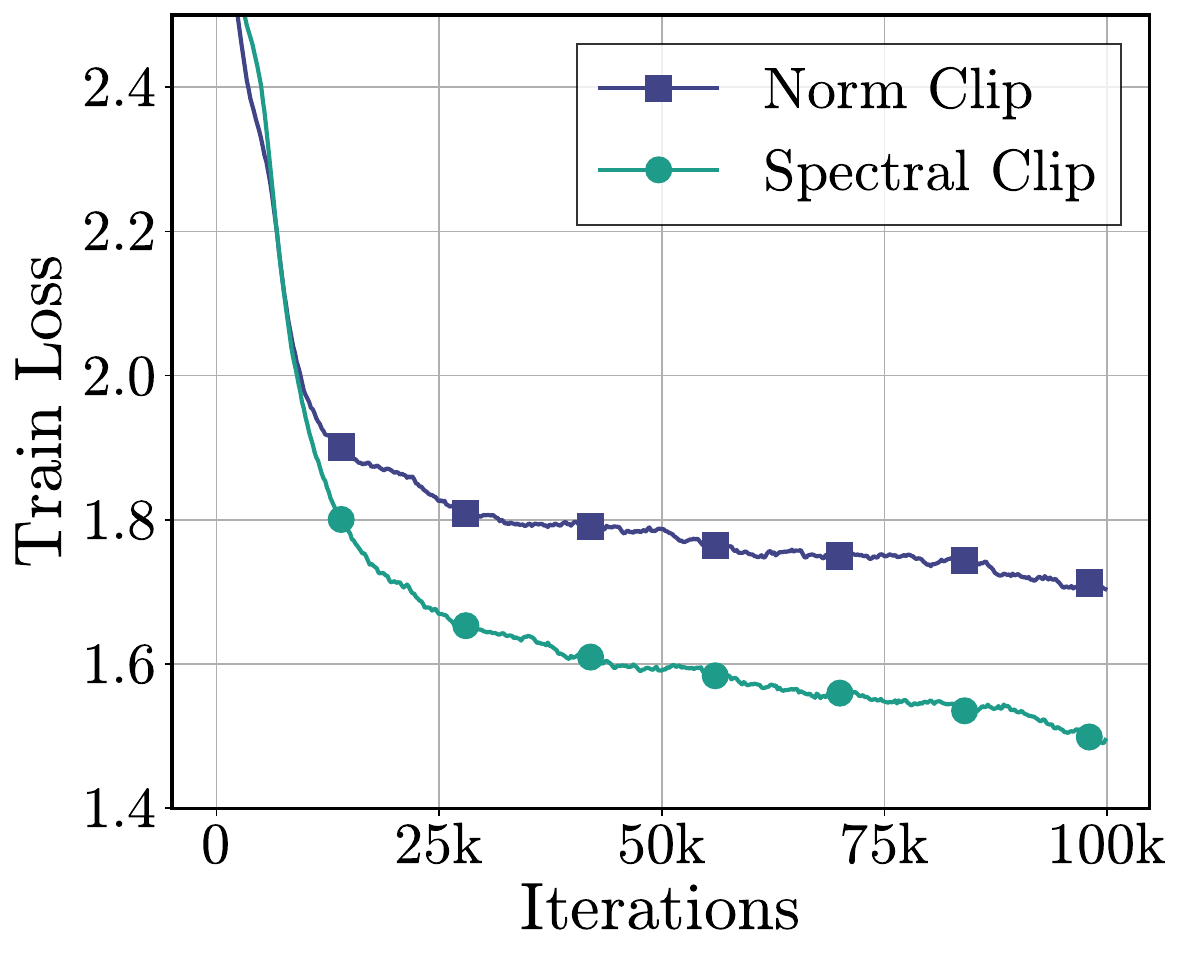}
            \caption{\small \algname{SGDM}}
            \label{fig:nanogpt_sgdm}
        \end{subfigure}
        \hfill
        \begin{subfigure}[t]{0.3\linewidth}
            \centering
            \includegraphics[width=\linewidth]{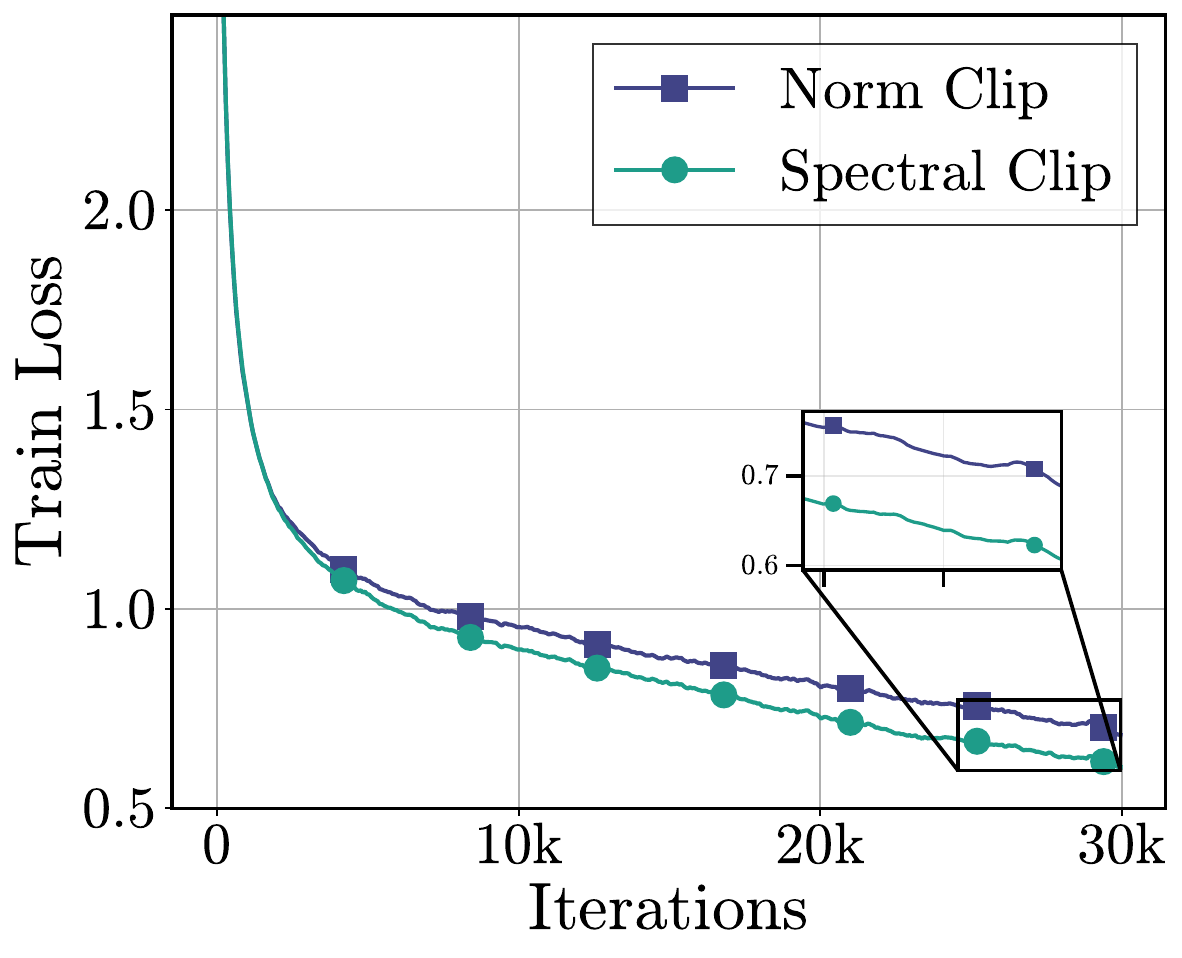}
            \caption{\small \algname{Muon}}
            \label{fig:nanogpt_muon}
        \end{subfigure}
        \hfill
        \begin{subfigure}[t]{0.33\linewidth}
            \centering
            \includegraphics[width=\linewidth]{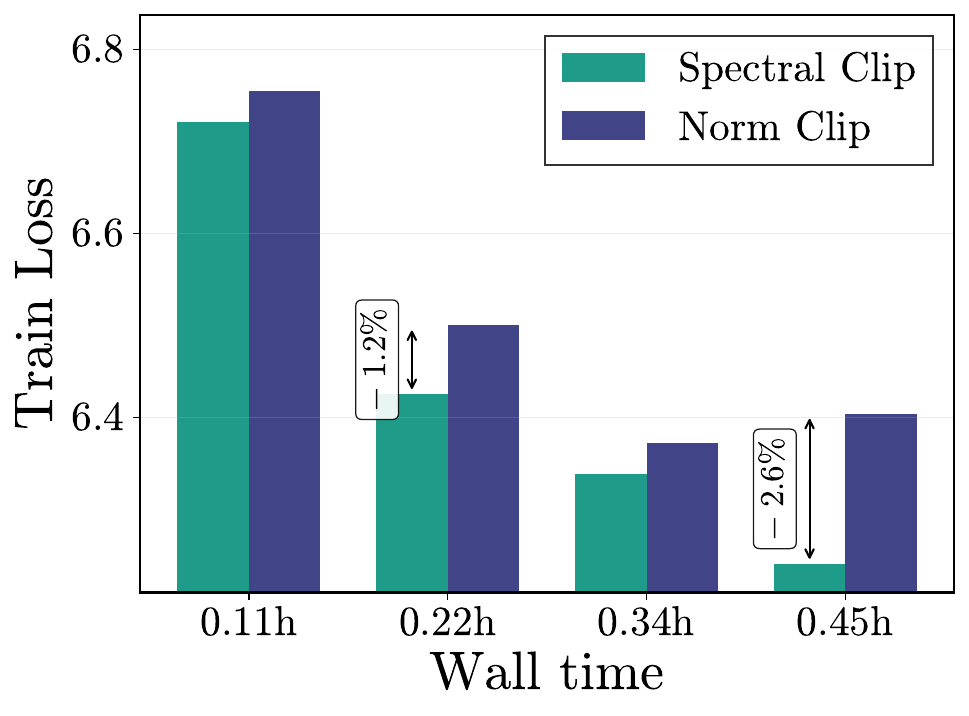}
            \caption{\small \algname{SGDM} (wall-clock time)}
            \label{fig:gpt2_sgdm_wall_time}
        \end{subfigure}

        \caption{\small In Figure~\ref{fig:nanogpt_sgdm} and Figure~\ref{fig:nanogpt_muon}, we show training loss of NanoGPT on the Shakespeare dataset for different optimizers. In Figure~\ref{fig:gpt2_sgdm_wall_time} we show the wall time comparison  of GPT-2 pretraining experiment.}
        \label{fig:nanogpt}
        \vspace{-.3cm}
    \end{figure}

\section{Hyperparameter Tuning Details}\label{sec:missing_experiments}

    All experiments were conducted on a system equipped with $4 \times$ NVIDIA A100 GPUs, and hyperparameters were tuned using Bayesian optimization via Weights \& Biases~\citep{wandb}.  

    \begin{table}[h]
    \centering
    \small
    \caption{\small Best hyperparameters obtained for the MLP experiments.}
    \label{tab:mlp_best_hparams}
    \begin{tabular}{lll}
    \toprule
    Method & Learning rate $\eta$ & Clipping parameter \\
    \midrule
    \algname{SGDM} Norm Clip          & $3.48\!\times\!10^{-3}$ & \texttt{max\_norm}$=3.28$ \\
    \algname{SGDM} Spectral Const     & $3.56\!\times\!10^{-3}$ & $\tau=0.891$ \\
    \algname{SGDM} Spectral EMA       & $1.39\!\times\!10^{-3}$ & --- \\
    \algname{SGDM} Spectral Quantile  & $8.54\!\times\!10^{-3}$ & $q=0.754$ \\
    \algname{Adam} No Clip            & $1.21\!\times\!10^{-3}$ & --- \\
    \algname{Adam} Norm Clip          & $9.00\!\times\!10^{-4}$ & \texttt{max\_norm}$=1.64$ \\
    \bottomrule
    \end{tabular}
    \end{table}

    \paragraph{MLP.}

    We sweep the learning rate $\eta\in[10^{-4},10^{-1}]$ for every
    method, the norm-clipping threshold and the constant spectral-clipping
    threshold over $[10^{-2},10]$, the EMA coefficient is fixed at
    $\theta=0.9$, and the quantile level is sweeped over $q\in[0.5, 0.99]$
    with a fixed sliding-window of length $w=100$. Each Bayesian sweep
    uses $300$ trials when two hyperparameters are searched and $100$
    trials otherwise. The resulting
    hyperparameters are reported in Table~\ref{tab:mlp_best_hparams}.

    \begin{table}[h]
    \centering
    \small
    \setlength{\tabcolsep}{4pt}
    \caption{\small Best hyperparameters obtained for the CIFAR--10
    experiments. ``filter / head / bias'' denote the three parameter
    groups used by \texttt{airbench--94}.}
    \label{tab:cv_best_hparams}
    \begin{tabular}{lcccccl}
    \toprule
    Method
      & $\eta_{\text{filter}}$ & $\eta_{\text{head}}$ & $\eta_{\text{bias}}$
      & $\beta_{\text{filter}}$ & $\beta_{\text{head}}$
      & Clipping parameter \\
    \midrule
    \algname{SGDM} No Clip
      & $2.12\!\times\!10^{-2}$ & $0.67$ & $5.30\!\times\!10^{-2}$
      & $0.772$ & $0.850$
      & --- \\
    \algname{SGDM} Norm Clip
      & $4.75\!\times\!10^{-2}$ & $0.265$ & $8.08\!\times\!10^{-2}$
      & $0.898$ & $0.862$
      & \texttt{max\_norm}$=15.16$ \\
    \algname{SGDM} Spectral Const
      & $2.62\!\times\!10^{-2}$ & $0.305$ & $3.40\!\times\!10^{-2}$
      & $0.671$ & $0.924$
      & $\tau=3.891$ \\
    \bottomrule
    \end{tabular}
    \end{table}

    \paragraph{CV.}
    

    For the CIFAR-10 experiments with \algname{SGDM}, we searched over
    clipping thresholds in $[10^{-2}, 10^{2}]$ for norm clipping and in
    $[10^{-3}, 10^{2}]$ for spectral clipping. In addition, we searched
    over bias learning rates in $[5 \times 10^{-3}, 0.2]$, filter learning
    rates in $[5 \times 10^{-3}, 0.3]$, head learning rates in
    $[5 \times 10^{-2}, 1.5]$, and filter and head momentum parameters in
    $[0.5, 0.98]$. Each search was run for $200$ iterations.
    The resulting hyperparameters are reported in Table~\ref{tab:cv_best_hparams}.

    \begin{table}[h]
    \centering
    \small
    \caption{\small Best hyperparameters obtained for the NanoGPT experiment.}
    \label{tab:nanogpt_best_hparams}
    \begin{tabular}{lll}
    \toprule
    Method & Learning rate $\eta$ & Clipping parameter \\
    \midrule
    \algname{SGDM} Norm Clip      & $7.52\!\times\!10^{-3}$ & \texttt{max\_norm}$=3.70$ \\
    \algname{SGDM} Spectral Clip  & $5.39\!\times\!10^{-3}$ & $\tau=1.89\!\times\!10^{-2}$ \\
    \algname{Muon} Norm Clip      & $8.74\!\times\!10^{-3}$ & \texttt{max\_norm}$=5.53\!\times\!10^{-2}$ \\
    \algname{Muon} Spectral Clip  & $7.87\!\times\!10^{-3}$ & $\tau=0.101$ \\
    \bottomrule
    \end{tabular}
    \end{table}
    
    \paragraph{NanoGPT.}

    For the NanoGPT experiment with \algname{SGDM} and \algname{Muon} we
    searched the learning rate over $[10^{-4}, 10^{-1}]$ for both clipping
    rules, the norm--clipping threshold over
    $[10^{-2}, 10^{1}]$, and the spectral clipping threshold $\tau$ over
    $[10^{-2}, 10^{1}]$. Each search was run for $300$ iterations. The resulting hyperparameters are reported in Table~\ref{tab:nanogpt_best_hparams}.

    \begin{table}[!ht]
    \centering
    \small
    \caption{\small Best hyperparameters obtained for the GPT-2 pretraining experiment.}
    \label{tab:gpt2_fineweb_best_hparams}
    \begin{tabular}{lll}
    \toprule
    Method & Learning rate $\eta$ & Clipping parameter \\
    \midrule
    \algname{SGDM} Norm Clip      & $7.04\!\times\!10^{-2}$ & \texttt{max\_norm}$=2.18$ \\
    \algname{SGDM} Spectral Clip  & $9.42\!\times\!10^{-2}$ & $\tau=4.34\!\times\!10^{-3}$ \\
    \algname{Adam} Norm Clip      & $1.13\!\times\!10^{-3}$ & \texttt{max\_norm}$=4.56\!\times\!10^{-2}$ \\
    \algname{Adam} Spectral Clip  & $1.89\!\times\!10^{-3}$ & $\tau=7.39\!\times\!10^{-3}$ \\
    \algname{Muon} Norm Clip      & $1.24\!\times\!10^{-2}$ & \texttt{max\_norm}$=2.06$ \\
    \algname{Muon} Spectral Clip  & $1.32\!\times\!10^{-2}$ & $\tau=1.03\!\times\!10^{-2}$ \\
    \bottomrule
    \end{tabular}
    \end{table}

    \paragraph{GPT-2.}

    For the GPT-2 pretraining experiment on FineWeb with
    \algname{SGDM}, \algname{Adam} and \algname{Muon}, we searched the
    learning rate over $[10^{-4}, 10^{-1}]$ for all clipping rules, the
    norm-clipping threshold over $[10^{-2}, 5]$, and the spectral-clipping
    threshold $\tau$ over $[10^{-3}, 10^{-1}]$. Each search was run for
    $50$ iterations. The resulting hyperparameters are reported in
    Table~\ref{tab:gpt2_fineweb_best_hparams}.

\section{Gradient Spectral Profiles under Controlled Token Replacement}
\label{app:singular_values_overview}

This appendix provides additional results supporting the motivation in the main text. 
Figure~\ref{fig:sv_distribution_layer_5_grid} shows the results of introducing bad samples in the batch of good samples for different layers and batch sizes.

\begin{figure}[H]
    \centering
    \includegraphics[width=1\linewidth]{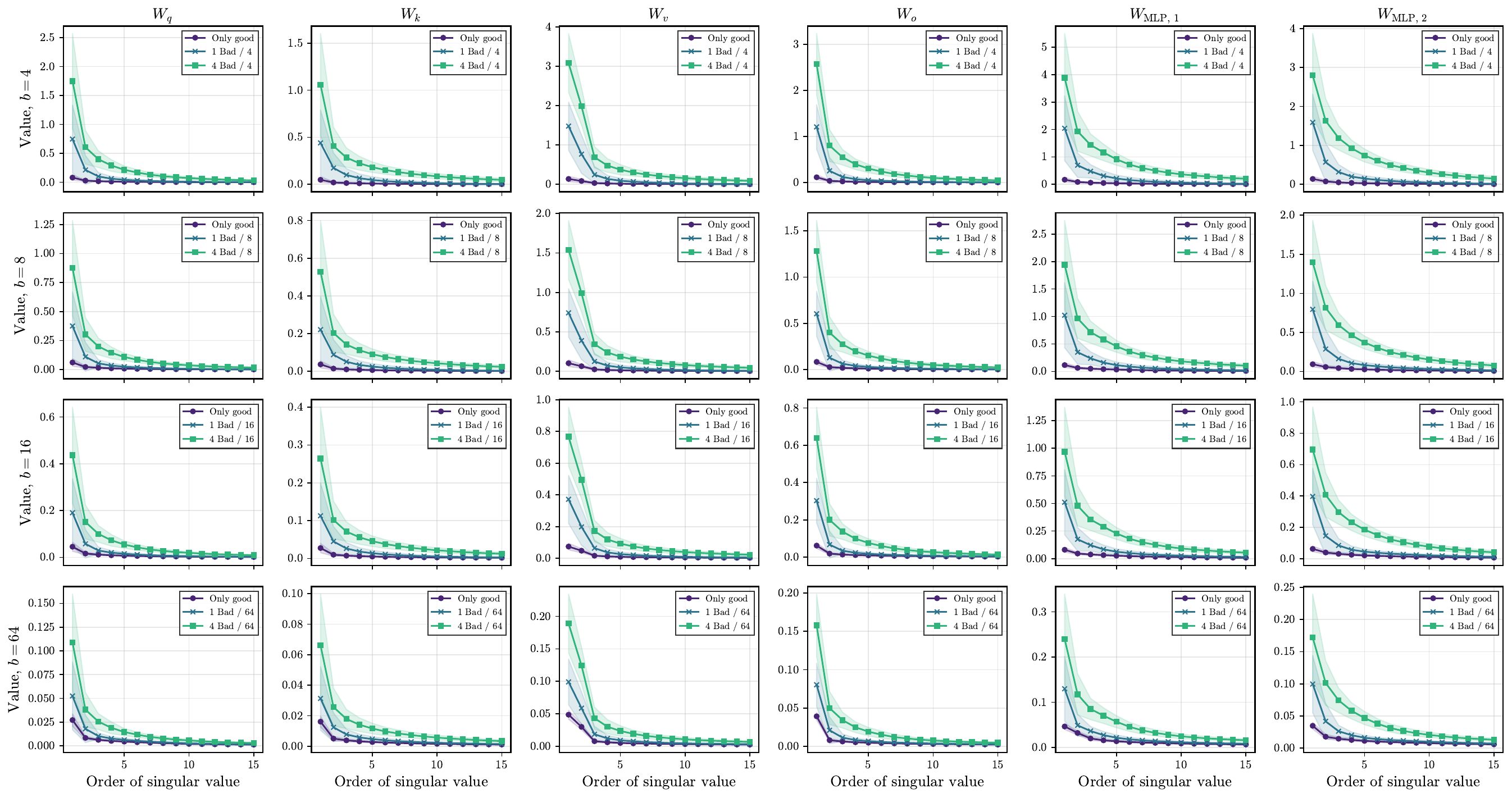}
    \caption{Spectral profile under controlled token replacement of the first feed-forward matrix \(W_{\text{MLP, 1}}\) in layer \textbf{5} of GPT-2 Small for different batch sizes. First, the batch contains only good samples with \(\tau_{\text{good}} = 0.95\), then we replacing one and four of them with bad samples with \(\tau_{\text{bad}} = 10^{-7}\).
    }
    \label{fig:sv_distribution_layer_5_grid}
\end{figure}



\end{document}

%% file: math_commands.tex
\usepackage{amsmath,amsfonts,bm}

\def\eqref#1{(\ref{#1})}

\def\1{\bm{1}}

\def\rva{{\mathbf{a}}}
\def\rvb{{\mathbf{b}}}

\def\vb{{\bm{b}}}

\def\vg{{\bm{g}}}

\def\vu{{\bm{u}}}
\def\vv{{\bm{v}}}

\def\vx{{\bm{x}}}

\def\mA{{\bm{A}}}
\def\mB{{\bm{B}}}

\def\mG{{\bm{G}}}

\def\mU{{\bm{U}}}
\def\mV{{\bm{V}}}

\def\mX{{\bm{X}}}
\def\mY{{\bm{Y}}}

\DeclareMathAlphabet{\mathsfit}{\encodingdefault}{\sfdefault}{m}{sl}
\SetMathAlphabet{\mathsfit}{bold}{\encodingdefault}{\sfdefault}{bx}{n}

\def\gC{{\mathcal{C}}}

\def\gO{{\mathcal{O}}}

\def\gQ{{\mathcal{Q}}}

\def\gS{{\mathcal{S}}}

\newcommand{\E}{\mathbb{E}}

\newcommand{\R}{\mathbb{R}}



%% file: macros.tex
\usepackage{amsmath}
\usepackage{amssymb}
\usepackage{mathtools}
\usepackage{amsthm}
\usepackage{nicefrac}
\usepackage{multirow}
\usepackage{tikz-cd} 
\usepackage{subcaption}
\usepackage{multicol,caption}

\usepackage{wrapfig}
\usepackage{algorithm}

\usepackage[flushleft]{threeparttable} 
\usepackage{colortbl}
\definecolor{bgcolor}{rgb}{0.8,1,1}
\definecolor{bgcolor2}{rgb}{0.8,1,0.8}
\definecolor{niceblue}{rgb}{0.0,0.19,0.56}

\usepackage{hyperref}
\hypersetup{colorlinks,linkcolor={blue},citecolor={blue},urlcolor={blue}}

\usepackage{pifont}
\definecolor{PineGreen}{RGB}{0,110,51}
\definecolor{BrickRed}{RGB}{143,20,2}

\usepackage{thmtools}

\newcommand{\Exp}[1]{\mathbb{E} \left[ #1\right]}

\usepackage[colorinlistoftodos,bordercolor=orange,backgroundcolor=orange!20,linecolor=orange,textsize=scriptsize]{todonotes}

\definecolor{shadecolor}{gray}{0.9}
\declaretheoremstyle[
headfont=\normalfont\bfseries,
notefont=\mdseries, notebraces={(}{)},
bodyfont=\normalfont,
postheadspace=0.5em,
spaceabove=1pt,
mdframed={
  skipabove=8pt,
  skipbelow=8pt,
  hidealllines=true,
  backgroundcolor={shadecolor},
  innerleftmargin=4pt,
  innerrightmargin=4pt,
  innertopmargin=10pt,    
  innerbottommargin=6pt
}
]{shaded}

\declaretheorem[style=shaded,within=section]{definition}
\declaretheorem[style=shaded,sibling=definition]{theorem}

\declaretheorem[style=shaded,sibling=definition]{assumption}

\declaretheorem[style=shaded,sibling=definition]{lemma}

\usepackage[capitalize,noabbrev]{cleveref}

\usepackage{xspace}
\usepackage{xspace}

\newcommand{\algname}[1]{{\sf  #1}\xspace}

\newcommand{\la}{\left\langle}
\newcommand{\ra}{\right\rangle}

\newcommand{\eqdef}{:=}

\DeclareMathOperator{\clamp}{clamp}

%% file: references.bib
@article{achiam2023gpt,
  title={Gpt-4 technical report},
  author={Achiam, Josh and Adler, Steven and Agarwal, Sandhini and Ahmad, Lama and Akkaya, Ilge and Aleman, Florencia Leoni and Almeida, Diogo and Altenschmidt, Janko and Altman, Sam and Anadkat, Shyamal and others},
  journal={arXiv preprint arXiv:2303.08774},
  year={2023}
}

@article{amsel2025polar,
  title={The polar express: Optimal matrix sign methods and their application to the muon algorithm},
  author={Amsel, Noah and Persson, David and Musco, Christopher and Gower, Robert M},
  journal={arXiv preprint arXiv:2505.16932},
  year={2025}
}

@article{an2025asgo,
  title={Asgo: Adaptive structured gradient optimization},
  author={An, Kang and Liu, Yuxing and Pan, Rui and Ren, Yi and Ma, Shiqian and Goldfarb, Donald and Zhang, Tong},
  journal={arXiv preprint arXiv:2503.20762},
  year={2025}
}

@article{ahn2023linear,
  title={Linear attention is (maybe) all you need (to understand transformer optimization)},
  author={Ahn, Kwangjun and Cheng, Xiang and Song, Minhak and Yun, Chulhee and Jadbabaie, Ali and Sra, Suvrit},
  journal={arXiv preprint arXiv:2310.01082},
  year={2023}
}

@inproceedings{bjorck2021understanding,
  title={Understanding decoupled and early weight decay},
  author={Bjorck, Johan and Weinberger, Kilian Q and Gomes, Carla},
  booktitle={Proceedings of the AAAI Conference on Artificial Intelligence},
  volume={35},
  number={8},
  pages={6777--6785},
  year={2021}
}

@inproceedings{battash2024revisiting,
  title={Revisiting the noise model of stochastic gradient descent},
  author={Battash, Barak and Wolf, Lior and Lindenbaum, Ofir},
  booktitle={International Conference on Artificial Intelligence and Statistics},
  pages={4780--4788},
  year={2024},
  organization={PMLR}
}

@inproceedings{brock2021high,
  title={High-performance large-scale image recognition without normalization},
  author={Brock, Andy and De, Soham and Smith, Samuel L and Simonyan, Karen},
  booktitle={International conference on machine learning},
  pages={1059--1071},
  year={2021},
  organization={PMLR}
}

@article{chang2025convergence,
  title={On the Convergence of Muon and Beyond},
  author={Chang, Da and Liu, Yongxiang and Yuan, Ganzhao},
  journal={arXiv preprint arXiv:2509.15816},
  year={2025}
}

@article{chezhegov2024clipping,
  title={Clipping Improves Adam-Norm and AdaGrad-Norm when the Noise Is Heavy-Tailed},
  author={Chezhegov, Savelii and Klyukin, Yaroslav and Semenov, Andrei and Beznosikov, Aleksandr and Gasnikov, Alexander and Horv{\'a}th, Samuel and Tak{\'a}{\v{c}}, Martin and Gorbunov, Eduard},
  journal={arXiv preprint arXiv:2406.04443},
  year={2024}
}

@misc{choudhury2026muonnesterovmomentumheavytailed,
      title={Muon with Nesterov Momentum: Heavy-Tailed Noise and (Randomized) Inexact Polar Decomposition}, 
      author={Sayantan Choudhury and Xiaoran Cheng and Martin Takáč and Sen Na and Mladen Kolar},
      year={2026},
      eprint={2605.06884},
      archivePrefix={arXiv},
      primaryClass={math.OC},
      url={https://arxiv.org/abs/2605.06884}, 
}

@article{dekel2012optimal,
  title={Optimal distributed online prediction using mini-batches},
  author={Dekel, Ofer and Gilad-Bachrach, Ran and Shamir, Ohad and Xiao, Lin},
  journal={The Journal of Machine Learning Research},
  volume={13},
  number={1},
  pages={165--202},
  year={2012},
  publisher={JMLR. org}
}

@article{dongarra2018singular,
  title={The singular value decomposition: Anatomy of optimizing an algorithm for extreme scale},
  author={Dongarra, Jack and Gates, Mark and Haidar, Azzam and Kurzak, Jakub and Luszczek, Piotr and Tomov, Stanimire and Yamazaki, Ichitaro},
  journal={SIAM review},
  volume={60},
  number={4},
  pages={808--865},
  year={2018},
  publisher={SIAM}
}

@article{eckart1936approximation,
  title={The approximation of one matrix by another of lower rank},
  author={Eckart, Carl and Young, Gale},
  journal={Psychometrika},
  volume={1},
  number={3},
  pages={211--218},
  year={1936},
  publisher={Springer-Verlag}
}

@article{guo2025deepseek,
  title={Deepseek-r1: Incentivizing reasoning capability in llms via reinforcement learning},
  author={Guo, Daya and Yang, Dejian and Zhang, Haowei and Song, Junxiao and Zhang, Ruoyu and Xu, Runxin and Zhu, Qihao and Ma, Shirong and Wang, Peiyi and Bi, Xiao and others},
  journal={arXiv preprint arXiv:2501.12948},
  year={2025}
}

@inproceedings{gupta2018shampoo,
  title={Shampoo: Preconditioned stochastic tensor optimization},
  author={Gupta, Vineet and Koren, Tomer and Singer, Yoram},
  booktitle={International Conference on Machine Learning},
  pages={1842--1850},
  year={2018},
  organization={PMLR}
}

@article{ghadimi2013stochastic,
  title={Stochastic first-and zeroth-order methods for nonconvex stochastic programming},
  author={Ghadimi, Saeed and Lan, Guanghui},
  journal={SIAM journal on optimization},
  volume={23},
  number={4},
  pages={2341--2368},
  year={2013},
  publisher={SIAM}
}

@inproceedings{gorbunov2020unified,
  title={A unified theory of SGD: Variance reduction, sampling, quantization and coordinate descent},
  author={Gorbunov, Eduard and Hanzely, Filip and Richt{\'a}rik, Peter},
  booktitle={International Conference on Artificial Intelligence and Statistics},
  pages={680--690},
  year={2020},
  organization={PMLR}
}

@article{gorbunov2020stochastic,
  title={Stochastic optimization with heavy-tailed noise via accelerated gradient clipping},
  author={Gorbunov, Eduard and Danilova, Marina and Gasnikov, Alexander},
  journal={Advances in Neural Information Processing Systems},
  volume={33},
  pages={15042--15053},
  year={2020}
}

@inproceedings{garg2021proximal,
  title={On proximal policy optimization’s heavy-tailed gradients},
  author={Garg, Saurabh and Zhanson, Joshua and Parisotto, Emilio and Prasad, Adarsh and Kolter, Zico and Lipton, Zachary and Balakrishnan, Sivaraman and Salakhutdinov, Ruslan and Ravikumar, Pradeep},
  booktitle={International Conference on Machine Learning},
  pages={3610--3619},
  year={2021},
  organization={PMLR}
}

@misc{he2015deepresiduallearningimage,
      title={Deep Residual Learning for Image Recognition}, 
      author={Kaiming He and Xiangyu Zhang and Shaoqing Ren and Jian Sun},
      year={2015},
      eprint={1512.03385},
      archivePrefix={arXiv},
      primaryClass={cs.CV},
      url={https://arxiv.org/abs/1512.03385}, 
}

@inproceedings{he2016deep,
  title={Deep residual learning for image recognition},
  author={He, Kaiming and Zhang, Xiangyu and Ren, Shaoqing and Sun, Jian},
  booktitle={Proceedings of the IEEE conference on computer vision and pattern recognition},
  pages={770--778},
  year={2016}
}

@article{higham1986computing,
  title={Computing the polar decomposition—with applications},
  author={Higham, Nicholas J},
  journal={SIAM Journal on Scientific and Statistical Computing},
  volume={7},
  number={4},
  pages={1160--1174},
  year={1986},
  publisher={SIAM}
}

@article{halko2011finding,
  title={Finding structure with randomness: Probabilistic algorithms for constructing approximate matrix decompositions},
  author={Halko, Nathan and Martinsson, Per-Gunnar and Tropp, Joel A},
  journal={SIAM review},
  volume={53},
  number={2},
  pages={217--288},
  year={2011},
  publisher={SIAM}
}

@article{hubler2024gradient,
  title={From gradient clipping to normalization for heavy tailed sgd},
  author={H{\"u}bler, Florian and Fatkhullin, Ilyas and He, Niao},
  journal={arXiv preprint arXiv:2410.13849},
  year={2024}
}

@misc{jordan2024muon,
  author       = {Keller Jordan and Yuchen Jin and Vlado Boza and Jiacheng You and
                  Franz Cesista and Laker Newhouse and Jeremy Bernstein},
  title        = {Muon: An optimizer for hidden layers in neural networks},
  year         = {2024},
  url          = {https://kellerjordan.github.io/posts/muon/}
}

@article{jordan2024airbench,
  title={CIFAR-10 Airbench: Fast Neural Network Training},
  author={Jordan, Keller},
  journal={arXiv preprint arXiv:2404.00498},
  year={2024}
}

@misc{Karpathy2022,
  author = {Andrej Karpathy},
  title = {\text{NanoGPT}},
  year = {2022},
  publisher = {GitHub},
  journal = {GitHub repository},
  howpublished = {\url{https://github.com/karpathy/nanoGPT}},
  commit = {325be85d9be8c81b436728a420e85796c57dba7e}
}

@article{kingma2014adam,
  title={Adam: A method for stochastic optimization},
  author={Kingma, Diederik P},
  journal={arXiv preprint arXiv:1412.6980},
  year={2014}
}

@inproceedings{koloskova2023revisiting,
  title={Revisiting gradient clipping: Stochastic bias and tight convergence guarantees},
  author={Koloskova, Anastasia and Hendrikx, Hadrien and Stich, Sebastian U},
  booktitle={International Conference on Machine Learning},
  pages={17343--17363},
  year={2023},
  organization={PMLR}
}

@inproceedings{kadri2020partial,
  title={Partial trace regression and low-rank kraus decomposition},
  author={Kadri, Hachem and Ayache, St{\'e}phane and Huusari, Riikka and Rakotomamonjy, Alain and Liva, Ralaivola},
  booktitle={International Conference on Machine Learning},
  pages={5031--5041},
  year={2020},
  organization={PMLR}
}

@techreport{krizhevsky2009learning,
  title={Learning Multiple Layers of Features from Tiny Images},
  author={Krizhevsky, Alex and Hinton, Geoffrey},
  institution={University of Toronto},
  year={2009}
}

@article{mikolov2012statistical,
  title={Statistical language models based on neural networks},
  author={Mikolov, Tom{\'a}{\v{s}} and others},
  journal={Presentation at Google, Mountain View, 2nd April},
  volume={80},
  number={26},
  year={2012}
}

@article{loshchilov2017decoupled,
  title={Decoupled weight decay regularization},
  author={Loshchilov, Ilya and Hutter, Frank},
  journal={arXiv preprint arXiv:1711.05101},
  year={2017}
}

@article{lecun2015deep,
  title={Deep learning},
  author={LeCun, Yann and Bengio, Yoshua and Hinton, Geoffrey},
  journal={nature},
  volume={521},
  number={7553},
  pages={436--444},
  year={2015},
  publisher={Nature Publishing Group UK London}
}

@article{lan2012optimal,
  title={An optimal method for stochastic composite optimization},
  author={Lan, Guanghui},
  journal={Mathematical Programming},
  volume={133},
  number={1},
  pages={365--397},
  year={2012},
  publisher={Springer}
}

@article{liu2020improved,
  title={An improved analysis of stochastic gradient descent with momentum},
  author={Liu, Yanli and Gao, Yuan and Yin, Wotao},
  journal={Advances in Neural Information Processing Systems},
  volume={33},
  pages={18261--18271},
  year={2020}
}

@article{mohammadi2015estimating,
  title={On estimating the tail index and the spectral measure of multivariate $\alpha$-stable distributions},
  author={Mohammadi, Mohammad and Mohammadpour, Adel and Ogata, Hiroaki},
  journal={Metrika},
  volume={78},
  number={5},
  pages={549--561},
  year={2015},
  publisher={Springer}
}

@book{nesterov2004introductory,
  title     = {Introductory Lectures on Convex Optimization: A Basic Course},
  author    = {Nesterov, Yurii},
  series    = {Applied Optimization},
  volume    = {87},
  publisher = {Springer},
  year      = {2004}
}

@article{penedo2024fineweb,
  title={The {FineWeb} Datasets: Decanting the Web for the Finest Text Data at Scale},
  author={Penedo, Guilherme and Kydl{\'\i}{\v{c}}ek, Hynek and von Werra, Leandro and Wolf, Thomas and others},
  journal={arXiv preprint arXiv:2406.17557},
  year={2024}
}

@article{pethick2025generalized,
  title={Generalized Gradient Norm Clipping \& Non-Euclidean $(L\_0, L\_1) $-Smoothness},
  author={Pethick, Thomas and Xie, Wanyun and Erdogan, Mete and Antonakopoulos, Kimon and Silveti-Falls, Tony and Cevher, Volkan},
  journal={Advances in Neural Information Processing Systems (NeurIPS)},
  year={2025}
}

@article{pethick2025training,
  title={Training deep learning models with norm-constrained lmos},
  author={Pethick, Thomas and Xie, Wanyun and Antonakopoulos, Kimon and Zhu, Zhenyu and Silveti-Falls, Antonio and Cevher, Volkan},
  journal={arXiv preprint arXiv:2502.07529},
  year={2025}
}

@inproceedings{pascanu2013difficulty,
  title={On the difficulty of training recurrent neural networks},
  author={Pascanu, Razvan and Mikolov, Tomas and Bengio, Yoshua},
  booktitle={International conference on machine learning},
  pages={1310--1318},
  year={2013},
  organization={Pmlr}
}

@article{polyak1964some,
  title={Some methods of speeding up the convergence of iteration methods},
  author={Polyak, Boris T},
  journal={Ussr computational mathematics and mathematical physics},
  volume={4},
  number={5},
  pages={1--17},
  year={1964},
  publisher={Elsevier}
}

@inproceedings{qian2021understanding,
  title={Understanding gradient clipping in incremental gradient methods},
  author={Qian, Jiang and Wu, Yuren and Zhuang, Bojin and Wang, Shaojun and Xiao, Jing},
  booktitle={International Conference on Artificial Intelligence and Statistics},
  pages={1504--1512},
  year={2021},
  organization={PMLR}
}

@article{robbins1951stochastic,
  title={A stochastic approximation method},
  author={Robbins, Herbert and Monro, Sutton},
  journal={The annals of mathematical statistics},
  pages={400--407},
  year={1951},
  publisher={JSTOR}
}

@article{radford2019language,
  title={Language models are unsupervised multitask learners},
  author={Radford, Alec and Wu, Jeffrey and Child, Rewon and Luan, David and Amodei, Dario and Sutskever, Ilya and others},
  journal={OpenAI blog},
  volume={1},
  number={8},
  pages={9},
  year={2019}
}

@article{shen2025convergence,
  title={On the convergence analysis of muon},
  author={Shen, Wei and Huang, Ruichuan and Huang, Minhui and Shen, Cong and Zhang, Jiawei},
  journal={arXiv preprint arXiv:2505.23737},
  year={2025}
}

@article{shah2025practical,
  title={Practical efficiency of muon for pretraining},
  author={Shah, Ishaan and Polloreno, Anthony M and Stratos, Karl and Monk, Philip and Chaluvaraju, Adarsh and Hojel, Andrew and Ma, Andrew and Thomas, Anil and Tanwer, Ashish and Shah, Darsh J and others},
  journal={arXiv preprint arXiv:2505.02222},
  year={2025}
}

@inproceedings{simsekli2019tail,
  title={A tail-index analysis of stochastic gradient noise in deep neural networks},
  author={Simsekli, Umut and Sagun, Levent and Gurbuzbalaban, Mert},
  booktitle={International Conference on Machine Learning},
  pages={5827--5837},
  year={2019},
  organization={PMLR}
}

@article{sutskever2014sequence,
  title={Sequence to sequence learning with neural networks},
  author={Sutskever, Ilya and Vinyals, Oriol and Le, Quoc V},
  journal={Advances in neural information processing systems},
  volume={27},
  year={2014}
}

@article{stewart1993early,
  title={On the early history of the singular value decomposition},
  author={Stewart, Gilbert W},
  journal={SIAM review},
  volume={35},
  number={4},
  pages={551--566},
  year={1993},
  publisher={SIAM}
}

@article{su2025galore,
  title={Galore 2: Large-scale llm pre-training by gradient low-rank projection},
  author={Su, DiJia and Gu, Andrew and Xu, Jane and Tian, Yuandong and Zhao, Jiawei},
  journal={arXiv preprint arXiv:2504.20437},
  year={2025}
}

@article{tieleman2012lecture,
  title={Lecture 6.5-rmsprop: Divide the gradient by a running average of its recent magnitude},
  author={Tieleman, Tijmen},
  journal={COURSERA: Neural networks for machine learning},
  volume={4},
  number={2},
  pages={26},
  year={2012}
}

@article{vaswani2017attention,
  title={Attention is all you need},
  author={Vaswani, Ashish and Shazeer, Noam and Parmar, Niki and Uszkoreit, Jakob and Jones, Llion and Gomez, Aidan N and Kaiser, {\L}ukasz and Polosukhin, Illia},
  journal={Advances in neural information processing systems},
  volume={30},
  year={2017}
}

@article{vyas2024soap,
  title={Soap: Improving and stabilizing shampoo using adam},
  author={Vyas, Nikhil and Morwani, Depen and Zhao, Rosie and Kwun, Mujin and Shapira, Itai and Brandfonbrener, David and Janson, Lucas and Kakade, Sham},
  journal={arXiv preprint arXiv:2409.11321},
  year={2024}
}

@misc{wandb,
    title = {Experiment Tracking with Weights and Biases},
    year = {2020},
    note = {Software available from wandb.com},
    url={https://www.wandb.com/},
    author = {Biewald, Lukas},
}

@article{zeiler2012adadelta,
  title={Adadelta: an adaptive learning rate method},
  author={Zeiler, Matthew D},
  journal={arXiv preprint arXiv:1212.5701},
  year={2012}
}

@article{zhang2020improved,
  title={Improved analysis of clipping algorithms for non-convex optimization},
  author={Zhang, Bohang and Jin, Jikai and Fang, Cong and Wang, Liwei},
  journal={Advances in Neural Information Processing Systems},
  volume={33},
  pages={15511--15521},
  year={2020}
}

@misc{zhang2020adaptivemethodsgoodattention,
      title={Why are Adaptive Methods Good for Attention Models?}, 
      author={Jingzhao Zhang and Sai Praneeth Karimireddy and Andreas Veit and Seungyeon Kim and Sashank J Reddi and Sanjiv Kumar and Suvrit Sra},
      year={2020},
      eprint={1912.03194},
      archivePrefix={arXiv},
      primaryClass={math.OC},
      url={https://arxiv.org/abs/1912.03194}, 
}

@article{zhao2024galore,
  title={Galore: Memory-efficient llm training by gradient low-rank projection},
  author={Zhao, Jiawei and Zhang, Zhenyu and Chen, Beidi and Wang, Zhangyang and Anandkumar, Anima and Tian, Yuandong},
  journal={arXiv preprint arXiv:2403.03507},
  year={2024}
}

@article{zhang2015singular,
  title={The singular value decomposition, applications and beyond},
  author={Zhang, Zhihua},
  journal={arXiv preprint arXiv:1510.08532},
  year={2015}
}


%% file: references_martin.bib
@inproceedings{takavc2013mini,
  title={Mini-batch primal and dual methods for {SVMs}},
  author={Tak{\'a}{\v{c}}, Martin and Bijral, Avleen and Richt{\'a}rik, Peter and Srebro, Nathan},
  booktitle={In 30th International Conference on Machine Learning, ICML 2013},
  year={2013}
}

@inproceedings{berahas2016multi,
  title={A Multi-Batch {L-BFGS} Method for Machine Learning},
  author={Berahas, Albert S and Nocedal, Jorge and Tak{\'a}{\v{c}}, Martin},
  booktitle={The Thirtieth Annual Conference on Neural Information Processing Systems (NIPS)},
  year={2016}
}

@inproceedings{nguyen2017sarah,
  title={{SARAH:} A novel method for machine learning problems using stochastic recursive gradient},
  author={Nguyen, Lam and Liu, Jie and Scheinberg, Katya and Tak{\'a}{\v{c}}, Martin},
  booktitle={In 34th International Conference on Machine Learning, ICML 2017},
  year={2017}
}

@inproceedings{he2017distributed,
  title={Distributed Hessian-Free Optimization for Deep Neural Network.},
  author={He, Xi and Mudigere, Dheevatsa and Smelyanskiy, Mikhail and Tak{\'a}{\v{c}}, Martin},
  booktitle={AAAI Workshops},
  year={2017}
}

@article{schaipp2023momo,
  title={Momo: Momentum models for adaptive learning rates},
  author={Schaipp, Fabian and Ohana, Ruben and Eickenberg, Michael and Defazio, Aaron and Gower, Robert M},
  journal={arXiv preprint arXiv:2305.07583},
  year={2023}
}

@article{li2022sp2,
  title={{SP2:} A second order stochastic Polyak method},
  author={Li, Shuang and Swartworth, William J and Tak{\'a}{\v{c}}, Martin and Needell, Deanna and Gower, Robert M},
  journal={ICLR 2023},
  year={2022}
}

@inproceedings{choudhury2024remove,
  title={Remove that Square Root: A New Efficient Scale-Invariant Version of AdaGrad},
  author={Choudhury, Sayantan and Tupitsa, Nazarii and Loizou, Nicolas and Horvath, Samuel and ak{\'a}{\v{c}}, Martin and Gorbunov, Eduard},
  booktitle={NeurIPS},
  year={2024}
}

@article{mishchenko2023prodigy,
  title={Prodigy: An expeditiously adaptive parameter-free learner},
  author={Mishchenko, Konstantin and Defazio, Aaron},
  journal={arXiv preprint arXiv:2306.06101},
  year={2023}
}

@article{shi2023ai,
  title={{AI-SARAH:} Adaptive and implicit stochastic recursive gradient methods},
  author={Shi, Zheng and Sadiev, Abdurakhmon and Loizou, Nicolas and Richt{\'a}rik, Peter and Tak{\'a}{\v{c}}, Martin},
  journal={Transactions on Machine Learning Research},
  year={2023}
}
